\documentclass{article} % For LaTeX2e
\usepackage{iclr2026_conference,times}

\usepackage{amsmath,amsfonts,bm}

\def\eqref#1{equation~\ref{#1}}
\def\1{\bm{1}}

\DeclareMathAlphabet{\mathsfit}{\encodingdefault}{\sfdefault}{m}{sl}
\SetMathAlphabet{\mathsfit}{bold}{\encodingdefault}{\sfdefault}{bx}{n}

\newcommand{\E}{\mathbb{E}}

\DeclareMathOperator*{\argmax}{arg\,max}

\usepackage{hyperref}
\usepackage{booktabs,array}
\usepackage{multirow}
\usepackage{subcaption}
\usepackage{comment}
\usepackage{xcolor}
\usepackage[accsupp]{axessibility}  % Improves PDF readability for those with disabilities.
\usepackage{pifont}

\newcommand{\cmark}{\textcolor{teal}{\ding{51}}}%
\newcommand{\xmark}{\textcolor{purple}{\ding{55}}}%

\def\etal{\textit{et al.}\xspace}
\def\default{\texttt{default}\xspace}
\def\CIFARH{\texttt{CIFAR100}\xspace}
\def\CIFART{\texttt{CIFAR10}\xspace}

\def\TINY{\texttt{TinyImageNet}\xspace}
\def\TINYLT{\texttt{TinyImageNet-LT}\xspace}

\def\imageNetLT{\texttt{ImageNet-LT}\xspace}
\def\smce{\texttt{smCE}\xspace}
\def\refcal{\textit{RefCal}\xspace}
\def\CIFARHLT{\texttt{CIFAR100-LT}\xspace}
\def\CIFARTLT{\texttt{CIFAR10-LT}\xspace}
\def\CIFARHC{\texttt{CIFAR100-C}\xspace}
\def\auroc{\texttt{AUROC}\xspace}
\def\auc{\texttt{AUC}\xspace}
\newcommand{\mypara}[1]{\vspace{0.2em}\myfirstpara{#1}}

\def\AUCM{\texttt{AUCM}\xspace}
\def\Logitnorm{\texttt{LogitNorm}\xspace}
\def\CIFARH{\texttt{CIFAR100}\xspace}
\def\CIFART{\texttt{CIFAR10}\xspace}

\def\TINY{\texttt{TinyImageNet}\xspace}
\def\CIFARTLT{\texttt{CIFAR10-LT}\xspace}
\def\TINYLT{\texttt{TinyImageNet-LT}\xspace}
\def\CIFARTLT{\texttt{CIFAR10-LT}\xspace}
\def\CIFARHLT{\texttt{CIFAR100-LT}\xspace}
\def\STLT{\texttt{STL10}\xspace}
\def\imageNetLT{\texttt{ImageNet-LT}\xspace}
\def\sbf{\mathbf{s}}

\def\sbf{\mathbf{s}}

\def\bx{\mathbf{x}}

\def\sbf{\mathbf{s}}

\def\sh{\widehat{s}}
\def\X{{\mathcal{X}}}
\def\Y{{\mathcal{Y}}}
\def\D{{\mathcal{D}}}

\definecolor{LightCyan}{rgb}{0.88,1,1}
\definecolor{lightgrey}{rgb}{0.01,0.199,0.1}
\definecolor{cambridgeblue}{rgb}{0.64, 0.76, 0.68}

\def\mmce{\texttt{MMCE}\xspace}
\def\refcal{\texttt{RefCal}\xspace}
\def\dnn{\texttt{DNN}\xspace}
\def\dnns{\texttt{DNNs}\xspace}
\def\ece{\texttt{ECE}\xspace}
\def\ace{\texttt{ACE}\xspace}
\def\sce{\texttt{SCE}\xspace}
\def\smce{\texttt{smECE}\xspace}

\def\sota{\texttt{SOTA}\xspace}
\def\auroc{\texttt{AUROC}\xspace}
\def\roc{\texttt{ROC}\xspace}
\def\sota{\texttt{SOTA}\xspace}
\def\NLL{\texttt{NLL}\xspace}
\def\LS{\texttt{LS}\xspace}

\def\crl{\texttt{CRL}\xspace}
\def\ood{\texttt{OOD}\xspace}

\def\E{\mathbb{E}}
\def\X{\mathcal{X}}
\def\I{\mathcal{I}}
\def\Y{\mathcal{Y}}
\def\D{\mathcal{D}}

\def\P{\mathbb{P}}

\def\lsc{\mathcal{L}_{\text{SC}}}
\def\lref{\mathcal{L}_{\text{ref}}}
\def\eip{\text{exp}(z_i \cdot z_p)}
\def\eia{\text{exp}(z_i \cdot z_a)}
\def\ein{\text{exp}(z_i \cdot z_n)}

\def\sbf{\mathbf{s}}

\def\X{{\mathcal{X}}}
\def\D{{\mathcal{D}}}

\def\etal{{\emph{et al.}\xspace}}
\def\ood{{OOD\xspace}}

\def\id{{ID\xspace}}

\usepackage{enumitem}

\def\dnn{\textsc{dnn}\xspace}
\def\dnns{\textsc{dnn}s\xspace}

\def\sota{\textsc{sota}\xspace}

\def\sce{\textsc{sce}\xspace}
\def\ece{\textsc{ece}\xspace}

\def\ts{\textsc{ts}\xspace}
\def\ood{\textsc{ood}\xspace}

\def\mmce{\textsc{mmce}\xspace}

\def\sbf{\mathbf{s}}

\usepackage{amsthm}
\newcommand{\myfirstpara}[1]{\noindent \textbf{#1.}}
\usepackage{amsmath,amsfonts,bm}
\usepackage{mathtools}
\usepackage{amsmath,xspace}
\usepackage{amsthm}

\theoremstyle{definition}
\newtheorem{definition}{Definition}

\def\yh{\widehat{y}}

\definecolor{COLOR_ZS}{HTML}{d8ebf8}

\definecolor{maroon}{cmyk}{0,0.87,0.68,0.32}
\definecolor{ashgrey}{rgb}{0.7, 0.75, 0.71}
\definecolor{cadetblue}{rgb}{0.37, 0.62, 0.63}
\definecolor{cambridgeblue}{rgb}{0.64, 0.76, 0.68}
\definecolor{ceil}{rgb}{0.57, 0.63, 0.81}
\definecolor{darkseagreen}{rgb}{0.56, 0.74, 0.56}
\definecolor{junebud}{rgb}{0.74, 0.85, 0.34}

\definecolor{maroon}{cmyk}{0,0.87,0.68,0.32}
\definecolor{ashgrey}{rgb}{0.7, 0.75, 0.71}
\definecolor{darkseagreen}{rgb}{0.56, 0.74, 0.56}
\definecolor{junebud}{rgb}{0.74, 0.85, 0.34}

\usepackage{url}
\usepackage{amsthm}
\newtheorem{lemma}{Lemma}
\usepackage{tcolorbox}
\usepackage{wrapfig}
\usepackage{natbib}

\title{Enhancing Deep Neural Network Reliability with Refinement and Calibration}

\author{Ramya Hebbalaguppe\thanks{ Contact: ramya.murthy@gmail.com}  \quad Ajay Shastry \quad Soumya Suvra Ghosal \quad Chetan Arora \\
SIT, Indian Institute of Technology Delhi, New Delhi, India\\
\textbf{Project Webpage:} \url{https://github.com/rhebbalaguppe/RefCal}\\
}

\iclrfinalcopy % Uncomment for camera-ready version, but NOT for submission.
\begin{document}

\maketitle

\begin{abstract}
Although deep neural networks (\dnns) achieve high predictive accuracy, their confidence estimates are often unreliable, potentially compromising user trust in their decisions. This has driven research into \texttt{calibrated} models— where \texttt{calibration} measures the degree to which a model's predictive confidence matches the empirical probability of correctness. However, such a calibration metric can be changed by post-processing output to mimic the uncertainty of training time, without truly improving the model’s understanding. Hence, statisticians recommend a model to be \texttt{refined} as well as calibrated. Intuitively, a model is said to be more refined if there is a large difference in its predictive confidence for correct and incorrect predictions (sometimes called \texttt{sharpness}). We observe that common calibration methods often reduce model refinement. To address this, we propose: \textbf{(1)} a novel loss function that promotes refinement and is optimizable via supervised contrastive loss; \textbf{(2)} a unified training framework, \refcal, that jointly optimizes for calibration, refinement, and accuracy—enhancing DNN reliability. For eg., we report (accuracy$\uparrow$, refinement$\uparrow$, ECE$\downarrow$) of (58.81, 95.67, 0.08) on CIFAR-100-LT dataset (10\% class imbalance), surpassing (46.27, 93.7, 0.22) by the well known Correctness Ranking Loss. 
% Although such models may achieve superior performance metrics, their tendency to produce overconfident predictions during failure cases can substantially diminish user trust in the system.
%
%We observe that popular calibration techniques end up making a model less-refined. To this end, the primary contributions of this paper are: \textbf{(1.)} proposing a novel loss function to enforce refinement, which, we show mathematically, can be optimized using popular supervised contrastive learning loss; 
%\textbf{(2.)} This allows the proposal of a novel training framework that we term \refcal that optimizes for calibration, refinement, and accuracy, making a \dnn reliable. We observe that common calibration methods often reduce model refinement. To address this, we propose: (1) a novel loss function that promotes refinement and is optimizable via supervised contrastive loss; (2) a unified training framework, \refcal, that jointly optimizes for calibration, refinement, and accuracy—enhancing DNN reliability.
\end{abstract}

\section{Introduction}
\label{sec:intro}

%\myfirstpara{Generating Trust on an AI Model}
% 
Advances in datasets, model architectures, and computational resources have made it easier to achieve high accuracy with deep neural networks (\dnns). Consequently, research has shifted toward improving complementary performance metrics that enhance model trustworthiness, particularly reliability. This is crucial in safety-critical domains such as healthcare and autonomous vehicles, where models must not only be accurate but also properly quantify and communicate predictive uncertainty—exhibiting high confidence in correct predictions while reflecting uncertainty in cases of error. In this work, we examine reliability through both, calibration and refinement. Calibration measures how well predicted probabilities match true outcome frequencies, while refinement captures the sharpness of predictions. We formally define both dimensions and argue that both are essential for robust and trustworthy decision-making in \dnns.

\mypara{Calibration}
We consider a K-class classification problem, with $X \in \X$ as the input, $Y \in \Y$ as the target random variable, and $f(X)$ as the predicted confidence vector from a \dnn based classification model, $f$. Let $\mathcal{P}$ as the set of distributions on $\mathcal{Y}$. Since, our focus is on classification, we use $\mathcal{P}_K$ to denote the K-dimensional simplex of corresponding categorical distributions. We use $\mathbb{P} \in \mathcal{P}_K$, and $\mathbb{P}_{Y|f(X)} \in \mathcal{P}_K$ to denote the distribution of Y, and conditional distribution given X, respectively.

\begin{definition}[Calibration \cite{gruber2022better}]
	\label{defn:calibration}
	A model $f : \mathcal{X} \rightarrow \mathbb{P}$ is calibrated, if and only if, $f(X) = \mathbb{P}_{Y|f(X)}$.
\end{definition}
\noindent An alternative formulation focuses on calibrating only the \emph{top-label} prediction of a model \( f : \mathcal{X} \rightarrow \mathcal{P}_K \). Let \( C = \arg\max_{k} f_k(X) \), denote the predicted class with the highest confidence (top-label softmax probability). In this setting, the calibration condition is expressed as:
\[
f_C(X) = \mathbb{P}(Y = C \mid f_C(X)),
\]
which states that the confidence assigned to the top-label \( C \) should match the true probability of \( C \) being the correct class. %In other words, the model’s top-label confidence should reflect the empirical accuracy of its top prediction.

~\cite{guo2017calibration} showed that modern \dnns often produce overconfident but incorrect predictions, prompting various efforts to improve their calibration~\cite{platt1999probabilistic,Dirichlet,kumar2019verified,kumarpaper}. Train time methods typically reduce over/under-confidence \cite{kumarpaper}. %Consider a binary classification scenario (e.g., malignant vs. benign tumor) with a validation confusion matrix of $\{\{0.7, 0.3\}, \{0.2, 0.8\}\}$. On the test set, suppose we post-process the outputs such that all malignant predictions are assigned a confidence vector of $\{0.7, 0.3\}$ and benign predictions get $\{0.2, 0.8\}$. This model would appear well-calibrated, as predicted confidences match observed frequencies.

\newtcolorbox{insightbox}[1]{colback=gray!5,colframe=gray!50,title={#1}}

\begin{insightbox}{Pitfalls of Calibration Metrics}
In a binary classification setting with validation confusion matrix 
$\{\{0.7, 0.3\}, \{0.2, 0.8\}\}$, one can post-process test predictions by assigning a fixed confidence vector $\{0.7,0.3\}$ to all malignant predictions and $\{0.2,0.8\}$ to all benign ones. This yields good calibration scores since predicted confidences match observed frequencies. However, this uniform assignment reduces confidence differentiation between true and false positives, thereby weakening the model’s discriminative power while preserving accuracy. Post-hoc calibration can improve calibration metrics at the cost of predictive confidence differentiation.
\end{insightbox}

%\begin{figure*}[t]
%	\begin{minipage}{0.45\linewidth}
%	\centering
%	\includegraphics[width=\linewidth] {figs_plots/refcalteaser_final.pdf}
%	\end{minipage}
%	\quad
%	\begin{minipage}{0.5\linewidth}
%	\caption{\textbf{[Why RefCal?]}: Our proposed training regime, \refcal optimizes both, \textbf{Ref}inement and \textbf{Cal}ibration. \textbf{(a)} shows when we calibrate a \texttt{ResNet-50} model using \mmce\cite{kumarpaper} calibration on \CIFARHLT, it results in lower separation between the confidence values of the correct and incorrect predicted classes (red and blue). To this end, when we use the proposed proxy refinement loss along with calibration (For eg., using \mmce), it can be seen that the separability of correct and incorrect predictions is enhanced as shown in \textbf{(b)}, indicating better refinement. Quantitatively, (\auc($\uparrow$), \ece($\downarrow$)) for (refinement, calibration) improve from ($94.40$, $0.25$) for \mmce to ($95.03$, $0.07$) for \mmce + proposed refinement loss. \textbf{(c)} We provide Grad-CAM visualizations for \texttt{Resnet-18} trained on \texttt{ImageNet-LT}, using a particular calibration technique (each row, second column), and then by optimizing with the same calibration but adding our refinement loss (each, third column). As refinement loss forces a model to separate its confidence for correct, and incorrect predictions, it also leads the model to focus its attention on the salient object features, instead of the background.}
%	\label{fig:teaser}
%	\end{minipage}
%\end{figure*}

\begin{figure}[t]
    \centering
	\includegraphics[width=\linewidth]{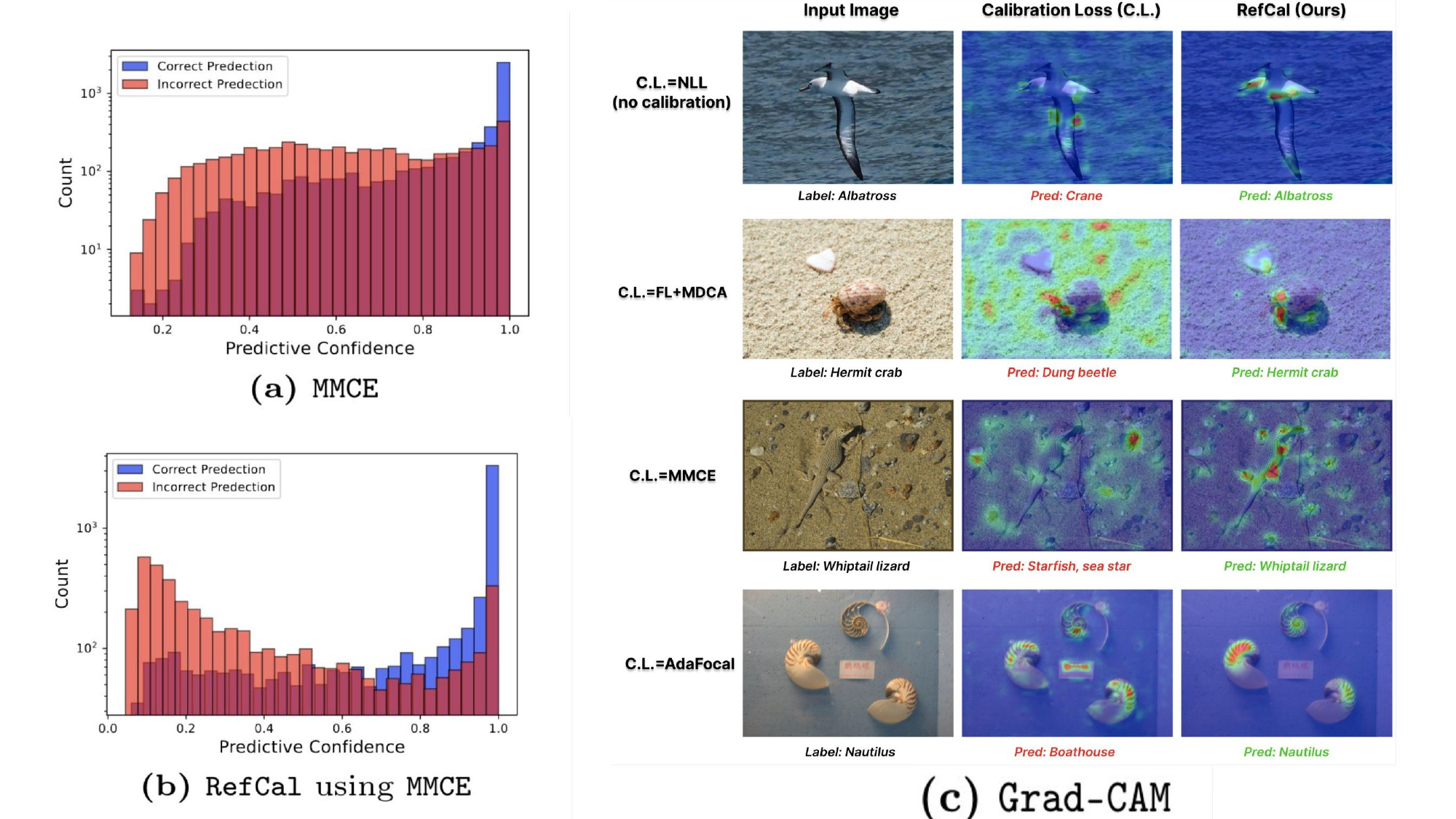}
    \vspace{-2em}
    %\includegraphics[width=.8\textwidth]{sections/RefCal_part1_teaser.pdf}
    %\caption{\mmce with and without \refcal}\label{one}
    %\bigbreak
    %\includegraphics[width=.8\textwidth]{sections/RefCal_part2_teaser.pdf}
    %\caption{GradCAM Viz.}\label{two}
	\caption{[Why \refcal?]: Our proposed training regime, \refcal optimizes both, \textbf{Ref}inement and \textbf{Cal}ibration. \texttt{(a)} shows when we calibrate a \texttt{ResNet-50} model using \mmce\cite{kumarpaper} calibration on \CIFARHLT, it results in lower separation between the confidence values of the correct and incorrect predicted classes (red and blue). To this end, when we use the proposed proxy refinement loss along with calibration (For eg., using \mmce), it can be seen that the separability of correct and incorrect predictions is enhanced as shown in \texttt{(b)}, indicating better refinement. Quantitatively, (\auc($\uparrow$), \ece($\downarrow$)) for (refinement, calibration) improve from ($94.40$, $0.25$) for \mmce to ($95.03$, $0.07$) for \mmce + proposed refinement loss. \texttt{(c)} We provide Grad-CAM visualizations for \texttt{Resnet-18} trained on \texttt{ImageNet-LT}, using a particular calibration technique (each row, second column), and then by optimizing with the same calibration but adding our refinement loss (each, third column). As refinement loss forces a model to separate its confidence for correct, and incorrect predictions, it also leads the model to focus its attention on the salient object features, instead of the background.}
    \label{fig:teaser}
\end{figure}
%Note that the model was expected to make $20\%$ mistakes on the test set for \emph{malignant tumour} (assuming same training/test distribution), but by outputting $0.8$ confidence for all predicted \emph{malignant tumour} samples it made sure to come good on calibration metric. However, this reduced the difference in predicted confidence for the \emph{malignant tumour} class, for the actual \emph{malignant tumour} (positive), and the \emph{benign tumour} (negative) samples. Note that such post-processing does not change the accuracy of the model; it simply trades off improved calibration for lower differences in predicted confidence vectors. However, such a trade-off makes it very hard for an end-user to trust a model's predictions.
% (all at $0.8$ now)

\mypara{Refinement}
%
%The ability of a predictive model to cheat calibration metrics has been known atleast since 1982, with 
%\texttt{Refinement}, a proposed strategy dates back to 1982 \cite{degroot1982assessing}. Refinement encourages a model to bring its predictive confidence closer to 0 and 1, which helps increase the difference between correctly and incorrectly predicted samples % , a concept 
Refinement was introduced by DeGroot and Fienberg in 1982, promotes sharper predictive distributions by encouraging confidence scores to be closer to 0 or 1  ~\cite{degroot1982assessing}. This behavior enhances the model’s ability to distinguish between correct and incorrect predictions, thereby improving its discriminative power(See Fig. \ref{fig:teaser}). The refinement trade-off was rediscovered recently, when \cite{moon2020confidence} proposed \emph{correctness ranking loss} (\crl)  to improve refinement by enforcing the following relationship for every pair of samples $(x_i,y_i)$ and $(x_j,y_j)$:
\begin{equation}
c_i \leq c_j \iff \P(\hat{y}_i=y_i \mid x_i) \leq \P(\hat{y}_j=y_j \mid x_j).
\end{equation}
Here $y$ denotes the target label, and $\hat{y}$ predicted label, for a sample $x$, and the predictive confidence, $c$. Recently, ~\cite{yuan2021large,yang2021learning} proposed a differentiable loss function that directly maximizes area under the \roc curve (\auc). They showed that for a two-class classification: 
\begin{equation}
	\auc = \mathbb{E}[\mathbb{I}(c_i > c_j)], \; \text{s.t.} \; x_i \in \mathcal{S}_p, \text{and } x_j \in \mathcal{S}_n,
	\label{eq:auroc}
\end{equation}
where $\mathcal{S}_p$ and $\mathcal{S}_n$ denote the set of correctly and incorrectly classified samples of a model on the test set. The result indicates that \auc can be used as metric to score a model's refinement performance. %Further, \cite{yuan2021large} proposed a new \auc margin loss as a proxy to maximize \auc (and improve refinement). 
% In this paper we propose a novel loss to improve refinement, and a technique for its efficient optimization. 
%show that it can be simply minimized using a version of popular supervised contrastive loss \cite{khosla2020supervised}. 
We show the improvement in refinement using our proposal with \auc as the metric.

\mypara{Contributions} The main contributions of this work are:

\begin{itemize}[leftmargin=*, itemsep=2pt, topsep=2pt]
    \item \myfirstpara{A surrogate loss for refinement} We propose a novel surrogate loss function that promotes refinement by encouraging sharper predictive distributions. We show, through mathematical analysis, that this loss can be minimized using the supervised contrastive loss~\cite{khosla2020supervised}, enabling the reuse of existing contrastive learning frameworks for refinement. On the STL10 dataset (in binary setting; see Tab. \ref{tab:bin}), our method achieves an AUC of 99.90, outperforming the state-of-the-art score of 98.86~\cite{yuan2021large}. \myfirstpara{Scalable refinement for multi-class classification} Existing refinement or AUC-based optimization approaches are typically limited to binary classification and require one-vs-all schemes for multi-class settings. In contrast, our formulation—based on supervised contrastive loss—extends naturally to multi-class classification. %On the imbalanced \CIFARH dataset (10\% minority class), our method achieves an AUC of 96.62 compared to 93.00 reported in~\cite{liu2022devil}.
    
    \item \myfirstpara{\refcal: A two-stage framework for  optimization of reliability objectives} We introduce \refcal, a two-stage training procedure that first optimizes for refinement using supervised contrastive learning while the second stage fine-tunes the model using standard calibration and accuracy losses. On the \CIFARTLT benchmark, \refcal achieves the best combined performance: \{ECE: 1.05, AUC: 99.04, Accuracy: 88.11\}, compared to the SOTA baseline~\cite{liu2022devil} with \{ECE: 7.05, AUC: 98.59, Accuracy: 87.82\}.
    
    \item \myfirstpara{Revisiting the calibration–refinement trade-off for \dnn classifiers} Prior literature~\cite{murphy1973new,singh2021dark} interprets Brier score decomposition to imply a trade-off between calibration and refinement. We clarify that this conflict arises only under the assumption of constant error. Our empirical results show that refinement can improve alongside calibration and accuracy (see Fig.~\ref{fig:teaser}). 
    %We will release code, models, and evaluation protocols upon acceptance.
\end{itemize}

\section{Related work}
\label{sec:relWork}

\myfirstpara{Refinement}
% a separate module, whereas 
Refinement in literature is studied in two different styles. The first one aims at the separability of correct and incorrect predictions with a margin based on the predicted confidence and the second tries to optimize the ordinal ranking relationship between correctly classified an incorrectly classified samples. 
%A prevalent approach for assessing whether to trust a classifier's judgment entails considering the confidence score reported by the classifier itself. This confidence score could encompass probabilities derived from a neural network's softmax layer, the proximity to a separating hyperplane in support vector classification, or the average class probabilities for decision trees in a random forest. Although using a model's own implied confidences seems logical, research has demonstrated that the raw confidence values provided by a classifier are often inaccurately calibrated \cite{guo2017calibration} and insufficient for reliable classification. To address this, Jiang \etal \cite{jiang2018trust} suggest using a trust score that assesses the agreement between the classifier and a modified nearest-neighbor classifier based on the testing example.  
\crl \cite{moon2020confidence} is a popular technique in the first category which learns ordinal relationship by introducing margin-based penalties. %ACLS \cite{park2023acls} show that \mdca and \cpc would only reduce the label of true class slightly when output predicted probability is exceedingly high. 
In the second style for refinement, researchers have exploited its relationship with \auc (Eq. \ref{eq:auroc}), and developed techniques to directly optimize \auc \cite{yang2022algorithmic}, or its proxy \cite{yuan2021large}. Margin maximization seems theoretically and practically superior, as it facilitates generalization error analysis and presents a clear geometric interpretation of the models being built. The resulting techniques have also been applied to large-scale medical image datasets \cite{yang2022optimizing}. However, the extensions to multi-class do not appear obvious, and calibration in conjunction with refinement has not been investigated systematically.

% CVPR review: added new references (1) Optimizing Two-way Partial AUC with an End-to-end Framework (2) 

% unsophisticated approach involves employing a surrogate loss that operates in pairs, relying on the AUC score's definition. Nevertheless, attempting to optimize such a general pairwise loss on training data encounters a significant scalability problem, rendering it impractical for deep learning on extensive datasets.

%They presented a sufficient condition for the solutions of regularized loss functions to converge to margin-maximizing separators as the regularization vanishes. 

\mypara{Calibration}
% Platt scaling [30], histogram binning [40], isotonic regression [41], and Bayesian binning and averaging [22, 21]. Temperature scaling has the desirable property that it can improve the calibration of a network without in any way affecting its accuracy. However, whilst its simplicity and effectiveness. have made it a popular network calibration method, it does have downsides. For example, whilst it scales the logits to reduce the network’s confidence in incorrect predictions, this also slightly reduces the network’s confidence in predictions that were correct [16] -- Aviral Kumar, Sunita Sarawagi, and Ujjwal Jain. Trainable Calibration Measures For Neural Networks From Kernel Mean Embeddings. In ICML, 2018
Train time loss functions such as \texttt{FL} \cite{mukhoti2020calibrating}, \texttt{MMCE} \cite{kumarpaper} \cite{Patra_2023_WACV}\cite{hebbalaguppe2024calibration}\cite{PnPOOD} \cite{relcal} \cite{hebbalaguppe2025prompting} etc., aim to alleviate the miscalibration in \dnns but do not address refinement. On the other hand, post-hoc calibration techniques \cite{local-TS,labelsmoothinghelp,Dirichlet}, tackle miscalibration by adding a parameterized calibration component to a \dnn, which can be fine-tuned using a separate validation set. Recent calibration metrics include \cite{unifying-theory-distance,gruber2022better,nixon2019measuring}. Post-hoc methods typically have fewer learnable parameters compared to train-time techniques, resulting in limited calibration capabilities prompting us to employ train-time calibration in the proposed approach \texttt{RefCal}.  % Recently, Błasiok et al. introduce consistent calibration metrics, such as the smooth calibration error (\smce) and interval calibration error, to overcome the limitations of \ece. We also use these metrics to assess calibration errors \cite{unifying-theory-distance,}.

\section{Proposed methodology}

\noindent We first design a surrogate loss for the refinement objective by proposing a new loss function optimized via supervised contrastive loss \cite{khosla2020supervised}. Next, we perform calibration by training a classifier on frozen refined representations using standard calibration losses. This two stage procedure is termed \texttt{RefCal}, which achieves state-of-the-art calibration and refinement simultaneously.

\subsection{Proposed refinement loss}
\label{subsec:theory}

\myfirstpara{Notation}
Let $i \in \mathcal{I}$ denote indices of all the samples in our dataset. Let $z_i=f(x_i)$ be representation/embedding for a sample $x_i$, that we wish to learn through a \dnn. We normalize $z_i$ such that it lies on a unit hypersphere. The normalization is important to enable the use of the inner product to measure distances in the projection space. We also call $z_i$ as the \emph{anchor}, and use $z_p$/$z_n$ to denote a sample with the same/different label as $z_i$. The set of all samples with the same label as $z_i$ is called a \emph{positive set} (denoted as $\mathcal{P}_i$). Similarly, the set of samples with a different label is called a \emph{negative set} (denoted as $\mathcal{N}_i$). We define $\mathcal{A}_i=\mathcal{P}_i \cup \mathcal{N}_i \backslash z_i$. We propose a new surrogate loss to improve the refinement as follows: 

\begin{definition}[Proposed refinement loss]
	\begin{align}
		\lref = \sum_{i \in \mathcal{I}} & \Bigg( \min_{p \in \mathcal{P}_i} \frac{1}{2}\|z_i - z_p\|^2   
		- \min_{n \in \mathcal{N}_i} \frac{1}{2}\|z_i - z_n\|^2 \Bigg).
		\label{eq:refinement_loss}
	\end{align}
\end{definition}
% ref - \texttt{ref}  \text{SC}
\noindent Next, we show in the section below that the proposed refinement loss is a lower bound to the following supervised contrastive loss \cite{khosla2020supervised}:
\begin{equation}
	\mathcal{L}_{SC} = \sum_{i \in \mathcal{I}} -\frac{1}{|\mathcal{P}_i|} \sum_{p \in \mathcal{P}_i} 
	\log \left( 
		\frac{\exp (z_i \cdot z_p)/\tau}{\sum_{a \in \mathcal{A}_i} \exp (z_i \cdot z_a)/\tau}
	\right).
\label{eq:supcon}
\end{equation}
Here, $|\mathcal{P}_i|$ denotes the size of set $\mathcal{P}_i$. Now we state an important result required to establish the relationship between $\lsc$ and $\lref$, 

\begin{lemma}
    \label{lemma:cos_sim}
    The cosine similarity between two $d$-dimensional vectors $z_1$ and $z_2$ with unit $\ell_2$ norm, can be written as $z_1 \cdot z_2 \triangleq \frac{1}{2}(2 - \|z_1 - z_2\|^2)$.
\end{lemma}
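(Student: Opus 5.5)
The plan is a direct algebraic expansion of the squared Euclidean norm via bilinearity of the inner product, followed by the unit-norm hypothesis. No earlier result from the paper is needed.

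First, I will expand
\[
\|z_1 - z_2\|^2 = (z_1 - z_2)\cdot(z_1 - z_2) = \|z_1\|^2 - 2\, z_1\cdot z_2 + \|z_2\|^2 .
\]
This step uses the symmetry of the standard inner product on $\mathbb{R}^d$, so the two cross terms $z_1\cdot z_2$ and $z_2\cdot z_1$ combine into $-2\,z_1\cdot z_2$.

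Second, I will impose the normalization $\|z_1\| = \|z_2\| = 1$, as in the notation paragraph where embeddings lie on the unit hypersphere. The expansion then becomes
\[
\|z_1 - z_2\|^2 = 2 - 2\, z_1\cdot z_2 .
\]
Solving for the inner product gives $z_1\cdot z_2 = \frac{1}{2}\bigl(2 - \|z_1 - z_2\|^2\bigr)$.

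Third, I will note that the cosine similarity is $\frac{z_1\cdot z_2}{\|z_1\|\,\|z_2\|}$. Under unit norm the denominator equals $1$, so the cosine similarity coincides with the dot product. This is why the statement can identify $z_1\cdot z_2$ with the cosine similarity.

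There is no genuine obstacle here. The only point needing care is that the identity depends on the unit-norm normalization, so the proof should state that hypothesis explicitly. I will also remark on how the result is used later. It lets each exponent $z_i\cdot z_p$ and $z_i\cdot z_a$ in $\mathcal{L}_{SC}$ be rewritten as $1 - \frac{1}{2}\|z_i - z_\cdot\|^2$, which is the form needed to compare with the squared distances in $\lref$.
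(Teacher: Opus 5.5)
Your proposal is correct and follows essentially the same route as the paper: both expand $(z_1 - z_2)\cdot(z_1 - z_2)$ and substitute $\|z_1\|^2 = \|z_2\|^2 = 1$. The paper runs the computation forward from $z_1\cdot z_2 = \frac{1}{2}(2 - 2 + 2\, z_1\cdot z_2)$ instead of solving for the inner product, and your extra remark that cosine similarity reduces to the dot product under unit norm makes explicit an identification the paper leaves implicit.
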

\begin{proof}
Since we normalize the embedding vectors to lie on a unit hypersphere, we have  $\|z_1\|^2=\|z_2\|^2=1$ by definition. Rewriting $z_1 \cdot z_2 = \frac{1}{2}(2 - 2 + 2 z_1 \cdot z_2)$ and using the expansion for $(z_1-z_2)\cdot(z_1 - z_2)$, we have $z_1 \cdot z_2 = \frac{1}{2}(2 - \|z_1\|^2 - \|z_2\|^2 + 2 z_1 \cdot z_2)$. This leads to $z_1 \cdot z_2 =  \frac{1}{2}(2 - \|z_1 - z_2\|^2)$.
\end{proof}

\begin{lemma}
\label{lemma:2}
The supervised contrastive loss, $\lsc$, as given by Eqn. \ref{eq:supcon}, is an upper bound to the refinement loss, $\lref$, as in Eqn. \ref{eq:refinement_loss}. That is: $\lsc > \lref$. 

% Here $C$ is a constant dependent on number of class-wise samples, but not their embedding.
\end{lemma}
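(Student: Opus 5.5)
We bound $\lsc$ from below anchor by anchor, using that the log-partition dominates its largest term and that an average is dominated by a maximum. Lemma~\ref{lemma:cos_sim} then converts inner products into squared distances. Throughout we read the temperature as acting inside the exponential, i.e.\ each term is $\exp(z_i\cdot z_a/\tau)$; this is the standard form of \cite{khosla2020supervised}. We first take $\tau=1$ and return to general $\tau$ at the end. We also assume every anchor has nonempty $\mathcal{P}_i$ and $\mathcal{N}_i$, since otherwise the minima in Eqn.~\ref{eq:refinement_loss} are undefined.

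\textbf{Step 1 (per-anchor rewriting).} Fix $i$. Because $\log(\exp(u)/S) = u - \log S$, the $i$-th summand of Eqn.~\ref{eq:supcon} equals
\[
\ell_i \;=\; -\frac{1}{|\mathcal{P}_i|}\sum_{p\in\mathcal{P}_i} z_i\cdot z_p \;+\; \log \sum_{a\in\mathcal{A}_i} \exp(z_i\cdot z_a).
\]

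\textbf{Step 2 (bounding the two pieces).}
\begin{itemize}[leftmargin=*, itemsep=2pt, topsep=2pt]
\item For the first piece, an average is at most the maximum, so $\frac{1}{|\mathcal{P}_i|}\sum_{p} z_i\cdot z_p \le \max_{p\in\mathcal{P}_i} z_i\cdot z_p$.
\item For the second piece, $\mathcal{A}_i$ contains all negatives, so for every $n\in\mathcal{N}_i$ we have $\log\sum_{a}\exp(z_i\cdot z_a) \ge z_i\cdot z_n$.
\item The inequality in the second bullet is \emph{strict}. The sum over $\mathcal{A}_i$ also contains at least one positive term, and that term is strictly positive, so the sum exceeds any single term. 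This yields the strict inequality in the lemma.
\end{itemize}
Combining the two bullets gives
\[
\ell_i \;>\; \max_{n\in\mathcal{N}_i} z_i\cdot z_n \;-\; \max_{p\in\mathcal{P}_i} z_i\cdot z_p .
\]

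\textbf{Step 3 (converting to distances).} By Lemma~\ref{lemma:cos_sim}, $z_i\cdot z = 1-\tfrac12\|z_i-z\|^2$. Maximizing the inner product is therefore the same as minimizing the half squared distance, so $\max_p z_i\cdot z_p = 1-\min_p \tfrac12\|z_i-z_p\|^2$, and likewise for $n$. The constants $1$ cancel, and the right-hand side of Step 2 becomes exactly the $i$-th summand of $\lref$. Summing over $i\in\mathcal{I}$ gives $\lsc>\lref$.

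\textbf{Main obstacle: the temperature.} For general $\tau$, the same argument yields only $\lsc > \tau^{-1}\lref$. This implies $\lsc>\lref$ in three cases:
\begin{itemize}[leftmargin=*, itemsep=2pt, topsep=2pt]
\item when $\tau=1$;
\item when $\tau\le 1$ and $\lref\ge 0$;
\item when $\tau\ge1$ and $\lref\le0$.
\end{itemize}
Outside these cases a sign issue remains, because each per-anchor term of $\lref$ lies in $[-2,2]$ and rescaling by $1/\tau$ can move it in the wrong direction. The remedy we would try is to state the lemma for the temperature-scaled refinement loss $\tau^{-1}\lref$. This object has the same minimizers as $\lref$, so it suffices for the paper's purpose of optimizing refinement through $\lsc$.
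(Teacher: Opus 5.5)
Your proof is correct at $\tau=1$ and reaches the same final inequality as the paper, but by a more direct route.

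\textbf{The paper's route.} It first applies Jensen's inequality to move the average over $\mathcal{P}_i$ inside the logarithm, obtaining $\log\bigl(\sum_{a\in\mathcal{A}_i}e^{z_i\cdot z_a}/\sum_{p\in\mathcal{P}_i}e^{z_i\cdot z_p}\bigr)+\log|\mathcal{P}_i|$. It then splits $\mathcal{A}_i$ and discards the positive mass via $\log(1+x)>\log x$, which is its source of strictness. Finally it uses both halves of the log-sum-exp sandwich: the $\log|\mathcal{P}_i|$ in the upper bound $\log\sum_{p}e^{z_i\cdot z_p}\le\max_{p}z_i\cdot z_p+\log|\mathcal{P}_i|$ cancels the one produced by Jensen.

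\textbf{Your route.} You use that each numerator is a single exponential, so the $i$-th summand is exactly a log-partition function minus the mean positive similarity. After that you need only ``mean $\le$ max'' and the lower half of the sandwich. Strictness comes from the positive terms left in the partition function, which is the same mechanism as the paper's $\log(1+x)>\log x$.

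\textbf{Comparison.} Your version is shorter and avoids both the Jensen step and the $\log|\mathcal{P}_i|$ bookkeeping. The paper's version has the mild advantage of passing through a smooth intermediate, $\log\sum_{n}e^{z_i\cdot z_n}-\log\bigl(\tfrac{1}{|\mathcal{P}_i|}\sum_{p}e^{z_i\cdot z_p}\bigr)$. This is a log-sum-exp relaxation of the per-anchor refinement term that $\lsc$ also dominates.

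\textbf{General temperature.} Your treatment of general $\tau$ is more careful than the paper's remark that it ``can be shown similarly'', and the case you leave open closes easily.
\begin{itemize}
\item Each summand $\ell_i$ of $\lsc$ is strictly positive, because every softmax ratio is $<1$ once $\mathcal{A}_i$ contains a positive and a negative. Let $r_i$ be the $i$-th summand of $\lref$. Combined with your bound $\ell_i>\tau^{-1}r_i$, this gives $\ell_i>\max(0,\tau^{-1}r_i)\ge r_i$ for every $\tau\in(0,1]$, irrespective of sign. This covers the paper's actual choice $\tau=0.1$.
\item For $\tau>1$ the obstruction is genuine, not an artifact of the proof. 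Take classes $\{a_1,a_2\}$ and $\{b_1,b_2\}$ with $z_{a_1}=z_{b_1}=u$ and $z_{a_2}=z_{b_2}=-u$ for a unit vector $u$. Every anchor has $r_i=2$ and $\ell_i=\log(2+e^{2/\tau})$, which is below $2$ once $\tau>2/\log(e^2-2)\approx 1.19$. So $\lsc<\lref$ there, and a restriction to $\tau\le 1$ (or your rescaled statement) is needed.
\item As typeset, Eqn.~\ref{eq:supcon} places $1/\tau$ outside the exponential, where it cancels. The literal statement is therefore exactly your $\tau=1$ case.
\end{itemize}
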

\begin{proof}
Without loss of generality, but to keep the exposition simpler, we prove the result for temperature $\tau = 1$ in Eqn. \ref{eq:supcon}. The result for $\tau > 0$ can be shown similarly. Setting $\tau=1$ in Eqn. \ref{eq:supcon}, and using Jensen's inequality: $-\E[\log(X)] \geq -\log(\E[X])$, on the inner summation:
\begin{align}
\lsc & \geq -\sum_{i \in \I} \log \left( \frac{\sum_{p \in \mathcal{P}_i} \eip}{|\mathcal{P}_i|\sum_{a \in \mathcal{A}_i} \eia} \right),
\\
& = \sum_{i \in \I} \log \left( \frac{\sum_{a \in \mathcal{A}_i} \eia}{\sum_{p \in \mathcal{P}_i} \eip} \right) + \sum_{i \in \I} \log(|\mathcal{P}_i|).
\end{align}
%
% The second term is always positive and hence can be removed from the R.H.S. 
Splitting $\mathcal{A}_i$ into $\mathcal{P}_i$ and $\mathcal{N}_i$:
\begin{align}
\lsc & \geq \sum_{i \in \I} \log \left( 1 + \frac{\sum_{n \in \mathcal{N}_i} \ein}{\sum_{p \in \mathcal{P}_i} \eip} \right) + \sum_{i \in \I} \log(|\mathcal{P}_i|).
\intertext{Since $\log(1+x) > \log(x)$ for $x > 0$}
\lsc & > \sum_{i \in \I} \log \left( \frac{\sum_{n \in \mathcal{N}_i} \ein}{\sum_{p \in \mathcal{P}_i} \eip} \right) + \sum_{i \in \I} \log(|\mathcal{P}_i|).
\end{align}
\begin{align}
	\lsc > \sum_{i \in \I} \left( \log \sum_{n \in \mathcal{N}_i} \ein 
	- \log \sum_{p \in \mathcal{P}_i} \eip \right) + \sum_{i \in \I} \log(|\mathcal{P}_i|).\nonumber
\end{align}
Using the inequality: $\max_i(x_i) \leq \log \sum_{i=1}^m \exp (x_i) \leq \max_i(x_i)+ \log (m)$, we get:\\
\begin{align}
\lsc 
&> \sum_{i \in \I} \Big( 
    \max_{n \in \mathcal{N}_i} (z_i \cdot z_n) 
    - \max_{p \in \mathcal{P}_i} (z_i \cdot z_p) 
    - \log |\mathcal{P}_i| \Big)  + \sum_{i \in \I} \log |\mathcal{P}_i| \label{eq:bound1} \\
&> \sum_{i \in \I} \Bigg( 
    \max_{n \in \mathcal{N}_i} \tfrac{1}{2} (2 - \|z_i - z_n\|^2) 
    - \max_{p \in \mathcal{P}_i} \tfrac{1}{2} (2 - \|z_i - z_p\|^2) 
    \Bigg) \label{eq:bound2} \\
&> \sum_{i \in \I} \Bigg( 
    \min_{p \in \mathcal{P}_i} \tfrac{1}{2} \|z_i - z_p\|^2 
    - \min_{n \in \mathcal{N}_i} \tfrac{1}{2} \|z_i - z_n\|^2 
    \Bigg). \label{eq:bound3}
\end{align}

where we use Lemma \ref{lemma:cos_sim} to replace $z_i \cdot z_n$ and $z_i \cdot z_p$. Hence
$\lsc > \lref$. 

\end{proof}

\begin{table*}[]
\centering
\resizebox{\linewidth}{!}{
\begin{tabular}{lccrcccccccccc}
\toprule
\multirow{2}{*}{\textbf{Method}} & \textbf{Venue} & \textbf{Ref.} & \textbf{Cal.} 
& \multicolumn{5}{c}{\textbf{CIFAR100-LT}} 
& \multicolumn{5}{c}{\textbf{CIFAR100}} \\
\cmidrule(lr){5-9} \cmidrule(lr){10-14}
& & & & \textbf{Top-1 $\uparrow$} & \textbf{AUC $\uparrow$} & \textbf{ECE $\downarrow$} & \textbf{SCE $\downarrow$} & \textbf{ACE $\downarrow$} 
& \textbf{Top-1 $\uparrow$} & \textbf{AUC $\uparrow$} & \textbf{ECE $\downarrow$} & \textbf{SCE $\downarrow$} & \textbf{ACE $\downarrow$} \\
\midrule

NLL (CE) & - & \cmark & \xmark & 47.57 & 94.00 & 21.70 & 0.61 & 0.52 & 73.89 & 98.34 & 5.95 & 0.22 & \textbf{0.12} \\
\textbf{RefCal (Ours)} & Ours & \cmark & \cmark & \textbf{58.46} & \textbf{96.22} & \textbf{13.79} & \textbf{0.47} & \textbf{0.44} & \textbf{76.11} & \textbf{98.36} & \textbf{5.20} & \textbf{0.22} & 0.15 \\

\midrule
LS~\cite{originallabelsmoothing} & CVPR'15 & \xmark & \cmark & 47.84 & 92.11 & 9.21 & 0.44 & 0.55 & 74.66 & 98.30 & 11.02 & 0.31 & 0.40 \\
\textbf{RefCal (Ours)} & Ours & \cmark & \cmark & \textbf{58.90} & \textbf{96.65} & \textbf{8.76} & \textbf{0.40} & \textbf{0.47} & \textbf{75.81} & \textbf{99.13} & \textbf{9.15} & \textbf{0.29} & \textbf{0.25} \\

\midrule
CE + TS~\cite{guo2017calibration} & ICML'17 & \cmark & \cmark & 45.35 & 94.40 & 28.25 & 0.71 & 0.57 & 73.89 & 99.10 & 8.00 & 0.28 & 0.23 \\
\textbf{RefCal (Ours)} & Ours & \cmark & \cmark & \textbf{58.46} & \textbf{96.85} & \textbf{7.62} & \textbf{0.41} & \textbf{0.46} & \textbf{76.11} & \textbf{99.18} & \textbf{6.62} & \textbf{0.24} & \textbf{0.18} \\

\midrule
MMCE~\cite{kumarpaper} & ICML'18 & \xmark & \cmark & 49.11 & 94.40 & 25.90 & 0.65 & 0.50 & 72.68 & 98.10 & 8.67 & \textbf{0.26} & \textbf{0.13} \\
\textbf{RefCal (Ours)} & Ours & \cmark & \cmark & \textbf{55.90} & \textbf{95.03} & \textbf{7.91} & \textbf{0.54} & \textbf{0.56} & \textbf{75.04} & \textbf{98.35} & \textbf{8.43} & 0.30 & 0.25 \\

\midrule
MixUp~\cite{thulasidasan2019mixup} & NeurIPS'19 & \xmark & \cmark & 52.90 & 95.82 & \textbf{6.10} & \textbf{0.58} & \textbf{0.54} & \textbf{78.30} & \textbf{98.54} & \textbf{5.49} & \textbf{0.25} & \textbf{0.19} \\
\textbf{RefCal (Ours)} & Ours & \cmark & \cmark & \textbf{56.35} & \textbf{96.37} & 18.84 & 0.69 & 0.72 & 74.81 & 98.22 & 27.02 & 0.60 & 0.56 \\

\midrule
CRL~\cite{moon2020confidence} & ICML'20 & \cmark & \xmark & 46.27 & 93.70 & 22.03 & 0.63 & 0.54 & 73.89 & 98.29 & 5.94 & 0.22 & 0.12 \\
\textbf{RefCal (Ours)} & Ours & \cmark & \cmark & \textbf{58.46} & \textbf{96.22} & \textbf{13.80} & \textbf{0.47} & \textbf{0.44} & \textbf{76.11} & \textbf{98.36} & \textbf{5.21} & \textbf{0.22} & \textbf{0.12} \\

\midrule
FL + MDCA~\cite{mdca} & CVPR'22 & \xmark & \cmark & 46.17 & 94.16 & 11.32 & 0.52 & 0.48 & 73.46 & \textbf{98.34} & 5.71 & \textbf{0.22} & 0.14 \\
\textbf{RefCal (Ours)} & Ours & \cmark & \cmark & \textbf{58.70} & \textbf{96.23} & \textbf{10.81} & \textbf{0.45} & \textbf{0.42} & \textbf{75.86} & 98.29 & \textbf{5.25} & \textbf{0.22} & \textbf{0.16} \\

\midrule
AdaFocal~\cite{ghosh2023adafocal} & NeurIPS'22 & \xmark & \cmark & 47.68 & 95.49 & 29.00 & 0.72 & 0.55 & 68.49 & 98.70 & 16.04 & 0.22 & 0.14 \\
\textbf{RefCal (Ours)} & Ours & \cmark & \cmark & \textbf{58.05} & \textbf{96.62} & \textbf{8.41} & \textbf{0.45} & \textbf{0.44} & \textbf{76.20} & \textbf{99.10} & \textbf{5.40} & \textbf{0.22} & \textbf{0.14} \\

\midrule
MbLS~\cite{liu2022devil} & CVPR'22 & \xmark & \cmark & 48.10 & 93.00 & \textbf{8.36} & 0.45 & 0.47 & 75.92 & 98.46 & \textbf{4.80} & \textbf{0.20} & \textbf{0.13} \\
\textbf{RefCal (Ours)} & Ours & \cmark & \cmark & \textbf{58.79} & \textbf{96.62} & 8.60 & \textbf{0.41} & \textbf{0.47} & \textbf{75.93} & \textbf{99.11} & 8.46 & 0.28 & 0.23 \\

\midrule
LogitNorm~\cite{wei2022mitigating} & ICML'22 & \cmark & \xmark & 52.96 & 94.74 & 49.84 & \textbf{0.13} & 1.30 & 69.89 & 97.32 & 67.80 & 0.04 & 1.59\\
\textbf{RefCal (Ours)} & Ours & \cmark & \cmark & \textbf{55.89} & \textbf{96.30} & \textbf{25.97} & 0.68 & 0.97 & \textbf{72.10} & \textbf{97.55} & \textbf{19.63} & 0.43 & \textbf{0.19} \\

\bottomrule
\end{tabular}
}
\caption{Comparison of refinement and calibration performance on CIFAR100-LT and CIFAR100 using ResNet-50: RefCal (Ours) is our proposed method. We report \texttt{Top-1}, \texttt{AUC}, and calibration metrics: \texttt{ECE}, \texttt{SCE}, \texttt{ACE}. Bold entries denote the best scores. `Ref' = refinement method; `Cal' = calibration method. CIFAR100-LT is used with imbalance factor $0.1$. All models selected based on best Top-1 accuracy averaged over 3 runs. Calibration metrics follow conventions used in prior work: 15 bins for \ece/\sce and adaptive binning for \ace.  \NLL: Negative log likelihood (cross-entropy loss). For all methods, we have used the code provided by the authors. All models used for inference were chosen based on best top $1\%$ accuracy as per norm.}
\label{tab:CIFAR10_100_imb}
\end{table*}

\section{Optimizing for proposed refinement loss}

\textbf{Geometric Interpretation of Refinement Loss:}
The refinement loss $\lref$ in Eqn. \ref{eq:refinement_loss} promotes class-wise clustering by simultaneously: (a) Pulling the nearest positive neighbor closer (via the first term), thus tightening intra-class clusters; (b) Pushing the nearest negative neighbor farther (via the second term), widening inter-class margins. Because the margin between a sample and its nearest negative correlates with classifier performance, $\lref$ acts as a meaningful proxy for refinement. Furthermore, since $\lsc$ upper-bounds $\lref$ (cf. Lemma~\ref{lemma:2}), we minimize $\lsc$ using the supervised contrastive loss proposed in \cite{khosla2020supervised} as a tractable surrogate.

\noindent\textbf{On the Refinement Capabilities of Supervised Contrastive Loss:}
The central contribution of this paper lies in introducing a refinement loss whose effectiveness is validated by improved \auc scores as shown in the results. While supervised contrastive (SC) loss \cite{khosla2020supervised} is widely known for enhancing accuracy, we provide the first theoretical and empirical evidence that it also improves refinement—by showing that it upper bounds our proposed refinement loss. This connection enables the repurposing of SC optimization techniques for refinement tasks. Although we do not propose a novel calibration loss, our ability to leverage existing SC and calibration frameworks for refinement constitutes a key strength of this work.

%\noindent \textcolor{blue}{\textbf{Note 2: A well-refined model can also achieve low ECE} A popular understanding of Brier score indicates that refinement and calibration conflict, but this is only true when error (or accuracy) is taken as a constant\cite{ANewVectorPartitionoftheProbabilityScore}. Otherwise, a perfect classifier is also perfectly calibrated and perfectly refined at the same time. Hence, it is not surprising to see that proposed $\lref$, when used in conjunction with calibration loss, leads to a representation which is more accurate also (compared to the case when only calibration and accuracy loss are used).} % In Brier score terms, not only the two terms on the RHS decreasing, the LHS is also decreasing. So there is no discrepancy.}

%\vspace{-1.5em}
\begin{figure*}[]
	\centering
	\begin{subfigure}{0.32\textwidth}
		\centering
		\includegraphics[width=\linewidth]{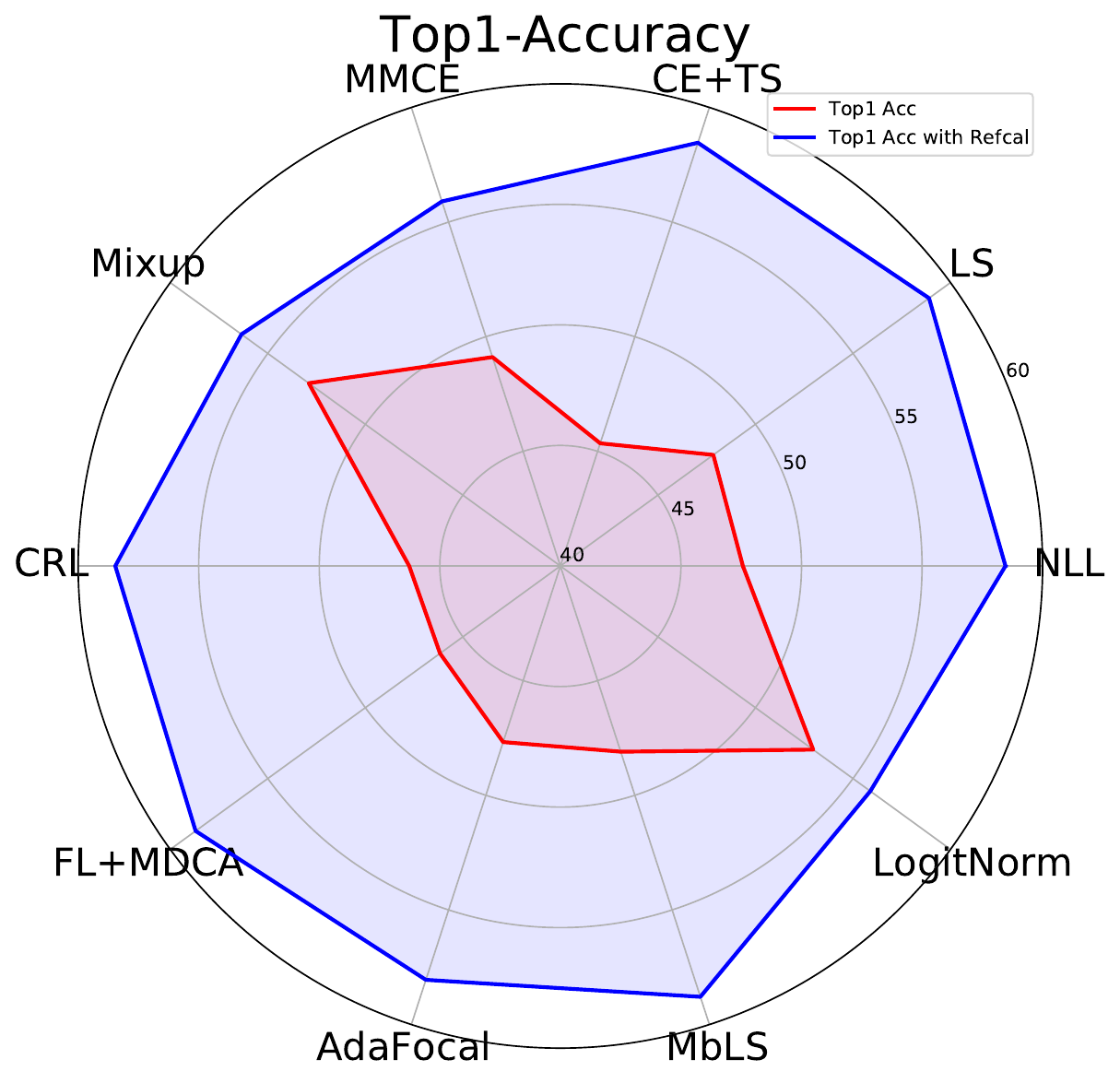}
		\caption{Top 1\% Accuracy}
	\end{subfigure}
	\begin{subfigure}{0.32\textwidth}
		\centering
		\includegraphics[width=\linewidth]{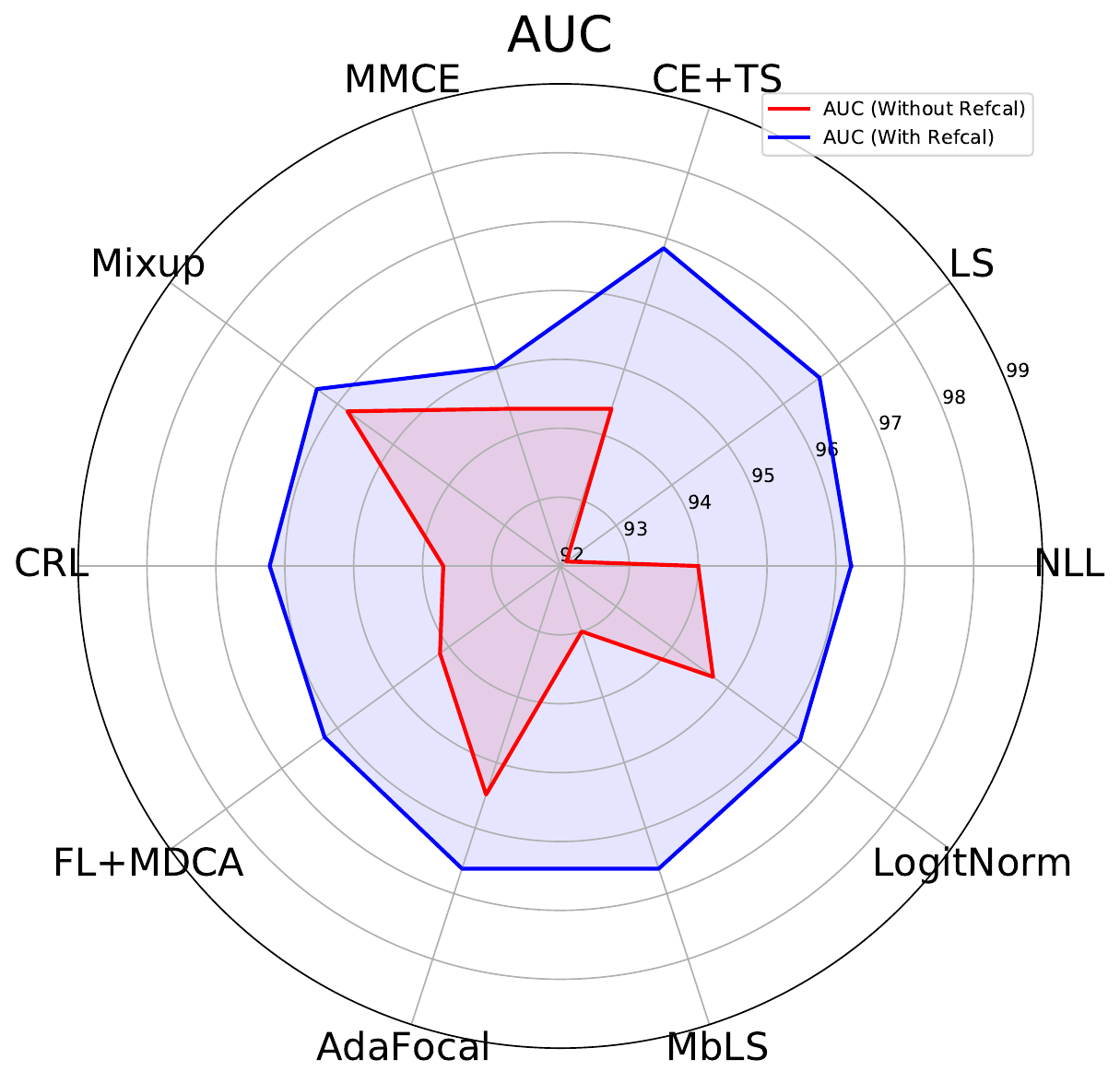}
		\caption{AUC}
	\end{subfigure}
	\begin{subfigure}{0.32\textwidth}
		\centering
		\includegraphics[width=\linewidth]{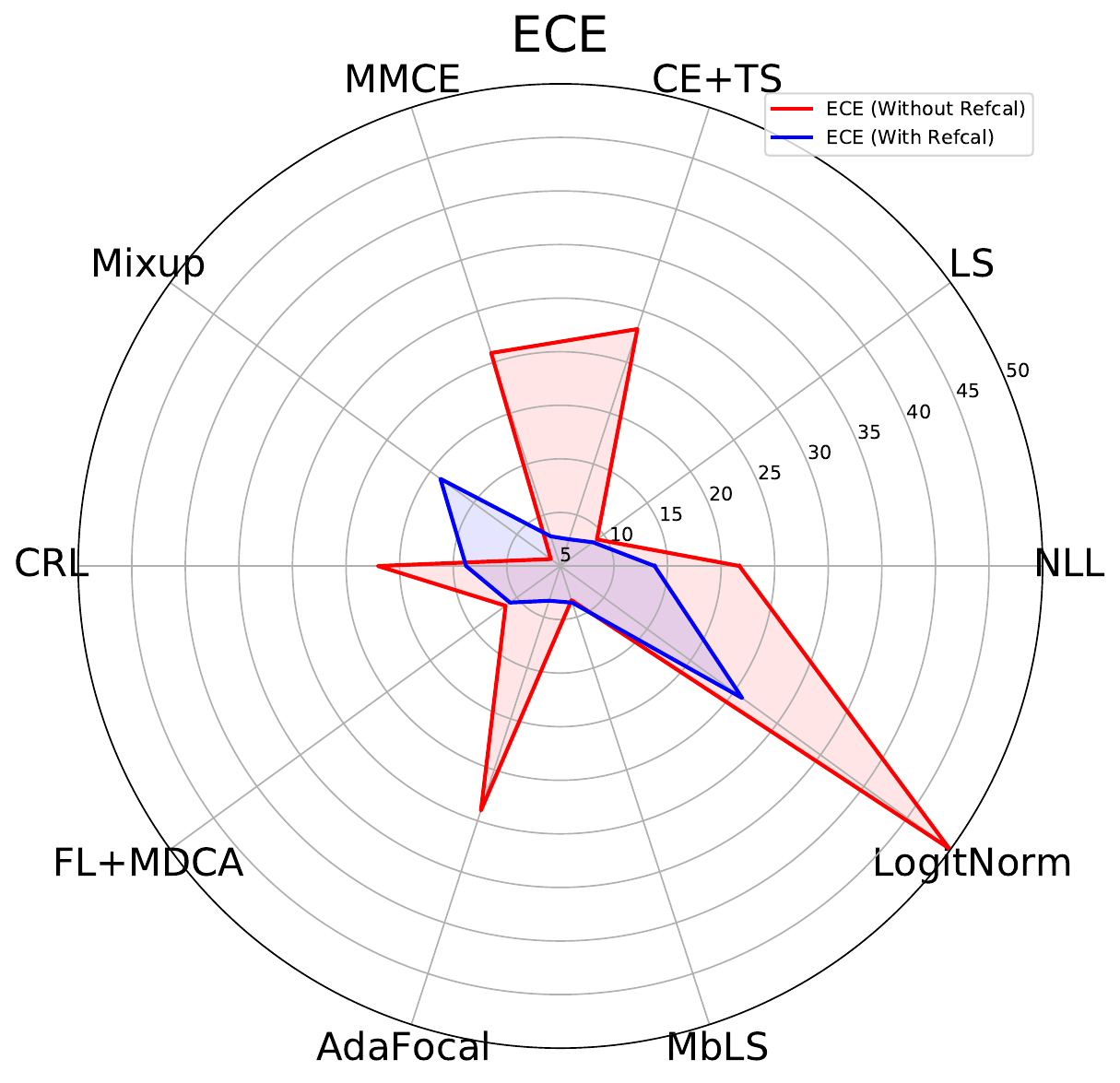}
		\caption{ECE}
	\end{subfigure}
	\caption{Effect of RefCal Training: Spider plot comparing Top-1\% Accuracy, \auc, and \ece with \textcolor{blue}{(Blue)} and without \textcolor{red}{(Red)} \refcal training when \texttt{ResNet-50} features extractor is used on \CIFARH-LT dataset. \refcal offers the best Top1\% accuracy, \auc, and \ece in majority cases. } % This suggests that \refcal enhances both predictive performance and calibration, making models more trustworthy and well-calibrated.}
% \textbf{Note:} The complete table for reliability metrics: Accuracy, Calibration, and Refinement for both \CIFARH and \CIFARLT are in the supplementary material which reports the average of 3 runs.
\label{fig:spiderplotvitch}
\end{figure*}

%\renewcommand{\thefigure}{S6}

\begin{comment}

\begin{figure*}[t]
	%\vspace{1cm}
	\centering
	\hspace*{-0.6cm}\includegraphics[angle=0, width=0.95\linewidth]{figs_plots/Top_1_10_nov_all.png}
\caption{[Scatter Error plot for comparative study of \textbf{Top 1\% accuracy vs. calibration trade-offs} associated with existing techniques and ours (Top-left is most preferred)]: The mean and one standard scatter error bars for \auc, \ece, \sce, \ace of \texttt{ResNet-50} trained on \texttt{CIFAR100-LT (imb factor 10\%} using \sota calibration techniques. Further, the lower variances emphasize the reliability of \refcal variants. All plots were generated by training \texttt{ResNet-50} models through every technique for $3$ runs for fair comparison.}
	\vspace{1em}
	\label{fig:error_bar}
	%  Calibration is measured in terms of ECE ($\downarrow$ is better). Our calibration technique offers the best values
\end{figure*}
\end{comment}

\subsection{RefCal training regime}
\label{subsec:tt_cal}
We propose a training strategy to develop reliable \dnn classifiers that simultaneously exhibit high refinement, high accuracy, and strong calibration (i.e., low calibration error).  It consists of two stages: (a) \textbf{Refinement Stage:} We pretrain an encoder with supervised contrastive loss \cite{khosla2020supervised} to approximate our refinement loss, promoting discriminative and robust class-boundary representations. (b) \textbf{Calibration Stage:} Keeping the encoder frozen, we fine-tune a linear classifier head using a combination of standard classification loss and calibration-specific losses. We experiment with multiple calibration losses (see Tab. \ref{tab:CIFAR10_100_imb}) to assess their impact. The resulting models demonstrate strong performance across all three axes: refinement, calibration, and accuracy. A detailed comparison using various loss combinations is reported in Tabs. \ref{tab:CIFAR10_100_imb},\ref{tab:bin}.

\section{Experimental design}

\myfirstpara{Datasets}
We utilize well-established classification datasets: \CIFART \cite{Krizhevsky_2009_17719-cifar10-100}, \CIFARH, \texttt{CIFAR100-LT}\cite{openlongtailrecognition}, and large-scale \TINY \cite{le2015tiny}, \texttt{ImageNet-1K}\cite{deng2009imagenet} and \imageNetLT \cite{openlongtailrecognition}. To evaluate the robustness of \refcal, we also present results on \CIFARHC \cite{hendrycks2018benchmarking} with varying degrees of corruption. For binary classification, we construct binary benchmarks from multi-class: \CIFART \cite{Krizhevsky_2009_17719-cifar10-100},\texttt{STL10} \cite{coates2011analysis}. %Please refer to the supplementary for more details. 

\mypara{Baseline approaches} 
In our analysis, we incorporate several baseline methods that serve as a basis for comparison. These include models trained using Negative Log Likelihood \NLL (also called Cross Entropy loss- \texttt{CE}), \LS\cite{originallabelsmoothing}, temperature scaling on top of \texttt{CE}, \texttt{TS+CE},  \texttt{MixUp}\cite{thulasidasan2019mixup}, \texttt{Adafocal} \cite{ghosh2023adafocal}, \texttt{MMCE} \cite{kumarpaper}, \texttt{CRL} \cite{moon2020confidence}, \texttt{MDCA} \cite{mdca},  \texttt{MbLS} \cite{liu2022devil},
\texttt{LogitNorm} \cite{wei2022mitigating}, and \texttt{AUCM} \cite{yuan2021large}.

\mypara{Metrics}
\auc: The area under the receiver operating characteristic curve measures separability between the classes and is a proxy for refinement. For measuring calibration, we use Expected Calibration Error (\ece), Static Calibration Error (\sce), and Adaptive Calibration Error (\ace), Smooth Calibration Error (\smce). 

\mypara{Implementation details} 
We implemented a multi-stage training methodology as outlined in \cite{khosla2020supervised}, this is used to minimize our surrogate loss for refinement. The training process involves two phases. In the first stage, the emphasis is on feature extraction, which is achieved using a \texttt{ResNet-50} \cite{he2016deep}/EfficientViT-M1\cite{cai2022efficientvit} network trained with the supervised contrastive loss for $1000$ epochs. Subsequently, after the initial training phase, we freeze the parameters of the Stage $1$ network, and introduce a classifier, constituting the second stage of our training procedure. In this second stage, a linear classifier was trained using a combination of classification and calibration losses for an additional $100$ epochs. %See supplementary for the details.

\begin{insightbox}{Relevance and Applications:} Although we evaluate on moderate-scale datasets, the considered architectures (ResNet, Efficient-ViT, and MobileNet) are widely used in real-world vision systems ranging from medical imaging to edge deployment. Thus, our findings are directly relevant to practical deployment scenarios where model reliability and confidence calibration are critical. Future work will explore fine-tuning VLMs under the \refcal regime to enhance reliability.
\end{insightbox}

\section{Results}
\label{sec:results}

\subsection{Multi-class classification}
\begin{table*}[t]
\centering
\resizebox{\linewidth}{!}{
\begin{tabular}{lcccccccccccc}
\toprule

\multirow{2}{*}{\large \textbf{Method}} & \multicolumn{6}{c}{{\textbf{\large (a) \texttt{ResNet18} on Imagenet-LT}}} &  \multicolumn{6}{c}{{\textbf{\large (b) \texttt{EfficientViT-M1} on \CIFARH}}} \\
%\midrule
 &  \textbf{Top1 (\%)~$\uparrow$} & \textbf{AUC~$\uparrow$}  & \textbf{ECE (\%) ~$\downarrow$}    & \textbf{SCE (\%)~$\downarrow$} & \textbf{ACE(\%)~$\downarrow$}  & \textbf{smECE(\%)~$\downarrow$} & \textbf{Top1 (\%)~$\uparrow$} & \textbf{AUC~$\uparrow$}  & \textbf{ECE (\%) ~$\downarrow$}    & \textbf{SCE (\%)~$\downarrow$} & \textbf{ACE(\%)~$\downarrow$}  & \textbf{smECE(\%)~$\downarrow$} \\
\bottomrule
\multicolumn{1}{l}{NLL (CE)} & 41.76 & \textbf{98.30} & 16.11 & 00.07 & 00.06 & 15.98 & 47.77 & 95.07 & 29.53 & 00.71 & 00.42 & 27.13 \\
 \textbf{\texttt{RefCal}} \textbf{(Ours)}& \textbf{42.18} & 98.10 & \textbf{10.82} & \textbf{00.05} & \textbf{00.04} & \textbf{10.38} & \textbf{56.78} & \textbf{96.00}    & \textbf{09.67}  & \textbf{00.28} & \textbf{00.19} & \textbf{09.17}  \\
\midrule
LS~\cite{originallabelsmoothing} & 41.73 & 97.70 & \textbf{06.06}  & 00.05 & 00.06 & \textbf{06.02}  & 48.97 & 92.79 & \textbf{04.11}  & 00.30  & 00.25 & \textbf{04.06}  \\
 \textbf{\texttt{RefCal}} \textbf{(Ours)}  & \textbf{42.25} & \textbf{98.10} & 08.46  & \textbf{00.05} & \textbf{00.04} & 08.19  & \textbf{56.83} & \textbf{96.20}  & 08.32  & \textbf{00.25} & \textbf{00.19} & 07.90   \\
\midrule
CE+TS~\cite{guo2017calibration} & 41.76     & \textbf{98.30}    & 09.13     & 00.06    & 00.05    & 08.98     & 47.77 & 95.40  & 15.25 & 00.43 & 00.35 & 14.99 \\
 \textbf{\texttt{RefCal}} \textbf{(Ours)}     & \textbf{42.18} & 98.20 & \textbf{06.14}  & \textbf{00.05} & \textbf{00.04} & \textbf{06.00}     & \textbf{56.78} & \textbf{96.30}  & \textbf{07.82}  & \textbf{00.25} & \textbf{00.26} & \textbf{07.55}  \\
\midrule
MMCE~\cite{kumarpaper} & 41.89 & 98.10 & 15.91 & 00.07 & 00.06 & 15.78 & 48.70  & 95.11 & 28.82 & 00.70  & 00.42 & 26.61 \\
 \textbf{\texttt{RefCal}} \textbf{(Ours)} & \textbf{42.22} & \textbf{98.10} & \textbf{10.62} & \textbf{00.05} & \textbf{00.04} & \textbf{10.25} & \textbf{56.74} & \textbf{96.00}    & \textbf{09.61}  & \textbf{00.28} & \textbf{00.19} & \textbf{09.11}  \\
\midrule
CRL~\cite{moon2020confidence} & 41.61 & 98.10 & 16.08 & 00.07 & 00.06 & 15.95 & 49.07 & 95.11 & 28.53 & 00.69 & 00.41 & 26.45 \\
 \textbf{\texttt{RefCal}} \textbf{(Ours)}    & \textbf{42.18} & \textbf{98.10} & \textbf{10.82} & \textbf{00.05} & \textbf{00.04} & \textbf{10.38} & \textbf{56.75} & \textbf{96.10}  & \textbf{09.24}  & \textbf{00.27} & \textbf{00.18} & \textbf{08.79}  \\
\midrule
FL+ MDCA  ~\cite{mdca}  & 40.68 & \textbf{98.40} & 08.04  & 00.06 & 00.06 & 08.03  & 47.25 & 95.25 & 17.76 & 00.51 & 00.37 & 17.69 \\
 \textbf{\texttt{RefCal} (Ours)}   & \textbf{42.19} & 98.20 & \textbf{07.30}   & \textbf{00.05} & \textbf{00.04} & \textbf{07.07}  & \textbf{56.69} & \textbf{96.60}  & \textbf{05.90}   & \textbf{00.24} & \textbf{00.20}  & \textbf{05.59}  \\
\midrule
AdaFocal~\cite{ghosh2023adafocal} & 31.18 & 95.60 & 34.01 & 00.10  & 00.07 & 30.26 & 52.39 & \textbf{96.63} & \textbf{14.35} & \textbf{00.43} & \textbf{00.29} & \textbf{14.36} \\
 \textbf{\texttt{RefCal}} \textbf{(Ours)}    & \textbf{41.83} & \textbf{97.80} & \textbf{19.25} & \textbf{00.06} & \textbf{00.04} & \textbf{18.83} & \textbf{53.10}  & 93.90  & 23.30  & 00.57 & 00.43 & 21.42 \\
\midrule
MbLS~\cite{liu2022devil} & 41.14 & 97.70 & \textbf{04.18}  & 00.06 & 00.05 & \textbf{04.16}  & 49.90  & 93.85 & 08.78  & 00.33 & 00.25 & 08.78  \\
 \textbf{\texttt{RefCal}} \textbf{(Ours)}  & \textbf{42.23} & \textbf{98.10} & 08.63  & \textbf{00.05} & \textbf{00.04} & 08.30   & \textbf{56.82} & \textbf{96.20}
  & \textbf{08.50}   & \textbf{00.26} & \textbf{00.19} & \textbf{08.07}  \\
\midrule
LogitNorm~\cite{wei2022mitigating} & 42.00    & 96.00   & 41.81 & 00.01    & 00.16 & 27.43 & 44.61 & 94.68 & \textbf{16.38} & \textbf{00.47} & 00.35 & \textbf{16.36} \\
 \textbf{\texttt{RefCal}} \textbf{(Ours)}  & \textbf{42.04} & \textbf{96.30} & \textbf{41.90}  & \textbf{00.01}    & \textbf{00.16} & \textbf{25.77} & \textbf{55.93} & \textbf{95.30}  & 25.93 & 00.60  & \textbf{00.34} & 22.92 \\
\bottomrule
\end{tabular}
}
\caption{(a) [Large Scale Experiments]:Comparison of reliability metrics of \refcal (Ours)  vs. \sota on \texttt{ImageNet-LT} using \texttt{ResNet18}. (b) [Experiments with Visual Transformers]: Comparison of reliability metrics of \refcal (Ours)  vs. \sota on \texttt{Cifar-100} using Memory Efficient Vision Transformer architecture \texttt{EfficientViT-M1} as feature extractor.}
\label{tab:imagenet_lt_and_efficient_vit}
\end{table*}

%\vspace{-2em}

\begin{wraptable}{r}{0.5\linewidth}
\centering
\footnotesize
\resizebox{\linewidth}{!}{%
\begin{tabular}{lrrrrrr}
\toprule
\textbf{Method} & \textbf{Top-1 (\%) $\uparrow$} & \textbf{AUROC $\uparrow$} & \textbf{ECE $\downarrow$} & \textbf{SCE $\downarrow$} & \textbf{ACE $\downarrow$} & \textbf{smECE $\downarrow$} \\
\midrule
NLL (CE) & 96.71 & 99.37 & 1.21 & 1.64 & 1.47 & 1.20 \\
\textbf{RefCal (Ours)} & \textbf{98.69} & \textbf{99.80} & \textbf{1.13} & \textbf{1.11} & \textbf{1.02} & \textbf{1.00} \\

\midrule
LS~\cite{originallabelsmoothing} & 96.64 & 98.92 & 9.23 & 9.25 & 9.23 & 9.32 \\
\textbf{RefCal (Ours)} & \textbf{98.58} & \textbf{99.90} & \textbf{3.02} & \textbf{3.21} & \textbf{2.97} & \textbf{3.03} \\

\midrule
CE + TS~\cite{guo2017calibration} & 96.71 & 99.40 & 0.41 & 1.21 & 1.16 & 0.66 \\
\textbf{RefCal (Ours)} & \textbf{98.69} & \textbf{99.90} & \textbf{0.26} & \textbf{0.51} & \textbf{0.46} & \textbf{0.47} \\

\midrule
MMCE~\cite{kumarpaper} & 96.19 & 99.30 & 1.22 & 1.38 & 1.19 & 1.22 \\
\textbf{RefCal (Ours)} & \textbf{98.74} & \textbf{99.88} & \textbf{1.13} & \textbf{1.16} & \textbf{1.00} & \textbf{0.98} \\

\midrule
MixUp~\cite{thulasidasan2019mixup} & 96.58 & 99.36 & 6.01 & 6.30 & 6.26 & 6.04 \\
\textbf{RefCal (Ours)} & \textbf{98.58} & \textbf{99.90} & \textbf{3.16} & \textbf{3.17} & \textbf{3.13} & \textbf{3.16} \\

\midrule
CRL~\cite{moon2020confidence} & 96.88 & 99.47 & 1.46 & 1.45 & 1.40 & 1.50 \\
\textbf{RefCal (Ours)} & \textbf{98.69} & \textbf{99.88} & \textbf{1.13} & \textbf{1.12} & \textbf{1.02} & \textbf{1.01} \\

\midrule
AUCM~\cite{yuan2021large} & 95.49 & 98.86 & 3.04 & 3.11 & 2.97 & 2.76 \\
\textbf{RefCal (Ours)} & \textbf{98.56} & \textbf{99.90} & \textbf{0.56} & \textbf{0.81} & \textbf{0.82} & \textbf{0.64} \\

\midrule
FL + MDCA~\cite{mdca} & 95.49 & 98.23 & \textbf{0.93} & 1.97 & 1.98 & \textbf{0.98} \\
\textbf{RefCal (Ours)} & \textbf{98.68} & \textbf{99.35} & 1.30 & \textbf{1.28} & \textbf{1.12} & \textbf{0.80} \\

\midrule
AdaFocal~\cite{ghosh2023adafocal} & 96.16 & \textbf{99.11} & \textbf{1.10} & 1.10 & 1.03 & 1.06 \\
\textbf{RefCal (Ours)} & \textbf{98.69} & 98.68 & 1.31 & \textbf{0.67} & \textbf{0.64} & \textbf{0.67} \\

\midrule
MbLS~\cite{liu2022devil} & 96.98 & 99.42 & 1.09 & 1.18 & 0.85 & 1.15 \\
\textbf{RefCal (Ours)} & \textbf{98.71} & \textbf{99.91} & \textbf{0.82} & \textbf{0.86} & \textbf{0.64} & \textbf{0.85} \\
\bottomrule
\end{tabular}
}
\caption{Comparison of reliability metrics for binary classification with \texttt{ResNet-50} \cite{he2016deep} on \texttt{STL10} dataset \cite{coates2011analysis}.}
\label{tab:bin}
\end{wraptable}

We conduct experiments on $8$ datasets(\texttt{ImageNet-1K}, \CIFARH,  \imageNetLT, \texttt{CIFAR100-LT}, \texttt{STL10},  \TINY, \CIFARHC (Corrupted \CIFARH)) and \CIFART). The results for the first 5 datasets are in the main text while the supplementary consists of \TINY, \CIFART and \CIFARHC results.   
Tab. \ref{tab:CIFAR10_100_imb} shows that training with \refcal achieves a significant improvement in refinement as measured by \auc and a reduction in calibration error (\ece, \sce, \ace) while also providing an accuracy increase. To understand the benefits of \refcal, observe the performance of our approach to non-calibrated/directly calibrated baseline models. Notice a jump in accuracy by over $10\%$ in every case, similarly on \auc we do see a consistent increase. The reduction in calibration error (\ece and \smce) is also seen in majority of the cases.

Fig. \ref{fig:error_bar} presents the error bars for $3$ runs and comparative study highlighting \auc vs. calibration trade-offs associated with existing techniques and \refcal (Top-left is the most desirable location on the chart suggesting higher \auc and lower calibration error). Specifically, we found the mean and one standard scatter error for \auc and \ece plot shows the lower variances in case of \refcal variants emphasizing its reliability in comparison with \sota. Note the variant \texttt{\refcal+CE+TS} (ours) offers the highest \auc-low calibration error with the next best being \texttt{AUCM}\cite{yuan2021large}. We also compare \texttt{Top $1\%$ accuracy} vs. calibration trade-off associated with contemporary techniques and \refcal. The mean and one standard scatter error bars for \texttt{Top $1\%$ accuracy} and \ece reveal that \refcal variants not only offer lower variance but also reasonably good top 1\% accuracy and calibration error trade-off. %Note \texttt{RefCal+CE+TS} performs the best. %See Supplementary for \texttt{CIFAR-10-LT} results. 

\begin{wraptable}{l}{0.5\linewidth}
\centering
\resizebox{\linewidth}{!}{%
\begin{tabular}{l|llllll}
\toprule
\multirow{3}{*}{\textbf{Method}} & && \textbf{\large CIFAR-10} \\
 &  \textbf{FPR@TPR95 ~$\downarrow$} & \textbf{Det Error ~$\downarrow$}  & \textbf{AUROC~$\uparrow$}    & 
 \textbf{AUPR In ~$\uparrow$} & \textbf{AUPR Out ~$\uparrow$}  \\ 
\bottomrule

NLL (CE) & 30.50 &08.80 & 95.90& 96.90 & 94.70 \\
\textbf{+  \refcal (Ours)} & \textbf{16.30} &\textbf{07.10} & \textbf{97.50} & \textbf{97.70} &\textbf{97.00} \\

\midrule
LS~\cite{originallabelsmoothing} & 29.30 & 09.30 & 92.60 & 86.80 & 92.40\\
\textbf{+  \refcal (Ours)} & \textbf{17.00} & \textbf{08.20} & \textbf{97.10} & \textbf{97.60} & \textbf{96.30}\\
\midrule
CE+TS~\cite{guo2017calibration} &  22.20 & 08.10 &97.00 &97.60 &96.40\\
\textbf{+  \refcal (Ours)} & \textbf{14.10} & \textbf{06.80} &\textbf{97.90}& \textbf{98.20} &\textbf{97.60}\\
\midrule
MMCE~\cite{kumarpaper} & \textbf{12.10} &\textbf{06.90} &\textbf{98.30} &\textbf{98.40} &\textbf{98.20} \\
\textbf{+  \refcal (Ours)} &20.80 &10.30&97.30&85.80&95.60  \\
\midrule
CRL~\cite{moon2020confidence} & 27.90&08.20&96.30&97.20&95.30 \\
\textbf{+  \refcal (Ours)}  & \textbf{16.30}&\textbf{07.10}&\textbf{97.50}&\textbf{97.70}&\textbf{97.00}  \\
\midrule
AUCM~\cite{yuan2021large} & 38.30&12.10&93.40&93.60&92.40  \\
\textbf{+  \refcal (Ours)}  & \textbf{10.20}&\textbf{06.60}&\textbf{98.20}&\textbf{98.50}&\textbf{98.00}  \\
\midrule
FL+ MDCA  ~\cite{mdca}  & 20.50&08.00& 97.10&\textbf{97.70}&\textbf{96.60}\\
\textbf{+  \refcal (Ours)}  & \textbf{18.00}&\textbf{07.20}&\textbf{97.30}&97.40&96.50  \\
\midrule
MbLS~\cite{liu2022devil} & 25.00&08.20&96.40&97.10&95.10  \\
\textbf{+  \refcal (Ours)}   &\textbf{16.90} &\textbf{07.80}&\textbf{97.20}&\textbf{97.70}&\textbf{96.60}  \\
\bottomrule
\end{tabular}
}
%\vspace{-1em}
\caption{[\textbf{Robustness to \ood data]}: Comparison of reliability metrics of \refcal (Ours) vs. \sota. We use \texttt{ResNet-50} backbone on in-distribution dataset \texttt{CIFAR10} and test on \ood dataset \texttt{SVHN}.}
\label{tab:ood}
\end{wraptable}

\mypara{Performance on various feature extractors and large scale datasets}
\refcal continues to perform superior on \texttt{ImageNet-LT}/\texttt{ImageNet} and even using a different backbone architecture \texttt{EfficientViT-M1}\cite{cai2022efficientvit} as evidenced in the Tab. \ref{tab:imagenet_lt_and_efficient_vit} in majority of the cases. Fig. \ref{fig:spiderplotvitch} shows the spider plot illustrating that applying RefCal (blue) consistently improves Top1\% Accuracy and \auc across different calibration methods compared to models without \refcal (red). Additionally, \ece is significantly reduced with \refcal, indicating better model reliability.

% \input{tables/ImageNet_balanced} - moved to suppl material

 %This suggests that \refcal enhances both predictive performance and calibration, making models more trustworthy and well-calibrated.}

\begin{figure}[]
	%\vspace{1cm}
	\centering
	\hspace*{-0.6cm}\includegraphics[angle=0, width=0.9\linewidth]{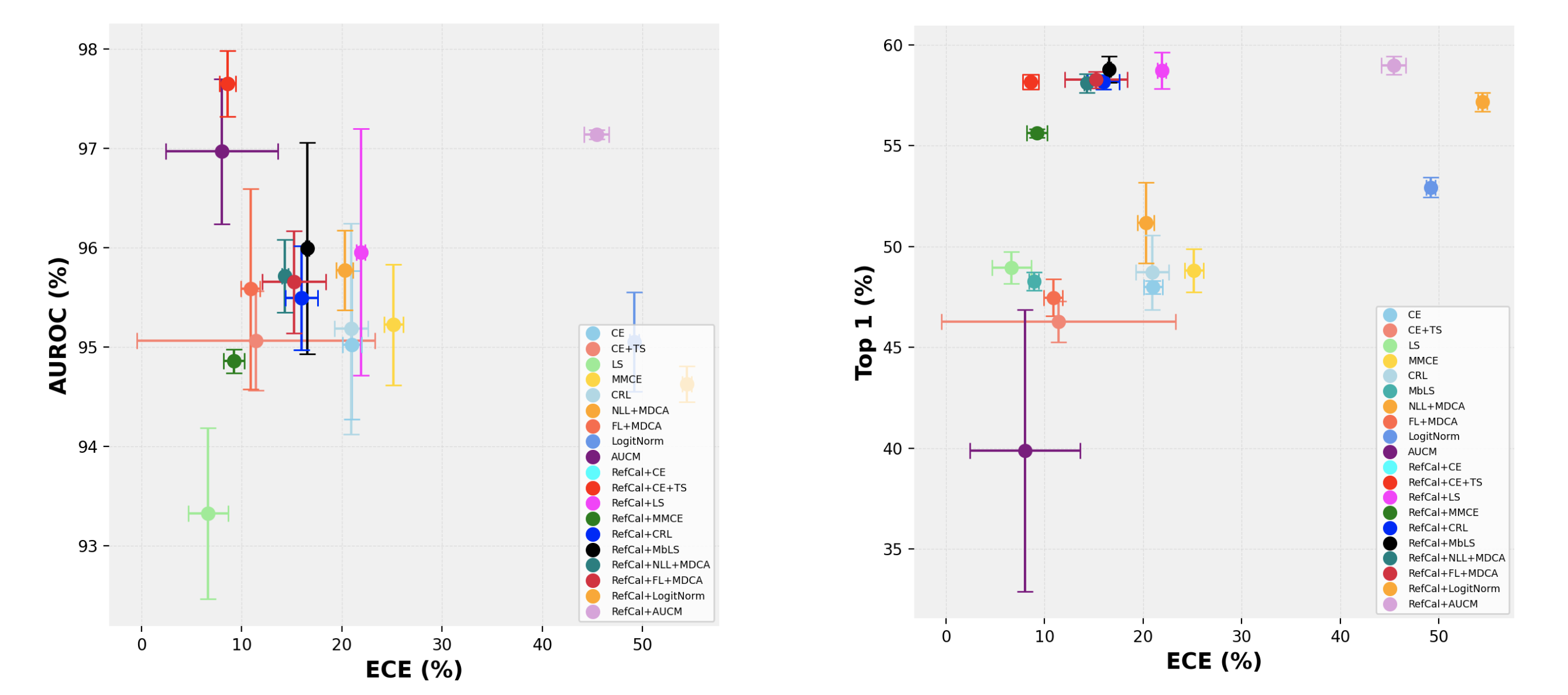}
% \vspace{-0.5cm}
\caption{(Left) \auc vs. \ece trade-off and (Right) Top $1\%$ accuracy vs. \ece trade-off for ResNet-50 on CIFAR100-LT (IF=10\%). Higher \auc and Top $1\%$ accuracy with lower \ece (top-left) are desirable. Lower variance in \auc and \ece highlights the reliability of \refcal variants.}
%Note the variant \texttt{RefCal+CE+TS} (ours) offers the highest \auc-low \ece with the next best being \texttt{AUCM}\cite{yuan2021large}. 
%Note that \texttt{RefCal+CE+TS} performs the best. We have run every technique $3$ times for fair comparison. See supplementary for zoomed-in view. } 
\label{fig:error_bar}
\end{figure}

\subsection{Binary classification} \label{subsec:binClass}

For binary classification, we employ multi-class classification datasets, transforming them into binary classification datasets. The categorization of classes within these datasets is done by grouping semantically relevant classes together. We construct binary benchmark dataset: \texttt{STL10} \cite{cao2019learning} following the settings mentioned in \cite{yuan2021large}. Tab. \ref{tab:bin} reports results on binary classification. We outperform/are comparable to the \sota on binary classification tasks in Top 1\% accuracy, \ece, and \auc.

%\input{tables/merged_imagenet_lt_and_efficient_vit}
%\vspace{-2em}

%\cref{fig:teaser} in main text and (\cref{fig:gradcamsuppl1},\cref{fig:gradcamsupp2},\cref{fig:gradcamsupp3}) show Grad-CAM visualization of \refcal vs. contemporaries.

\subsection{Robustness of \textbf{\refcal}}

\begin{comment}

\myfirstpara{Robustness to natural corruptions} 
%
\dnns lack robustness to out-of-distribution data or natural corruptions such as noise, blur, etc. However, Optimizing for accuracy, calibration and refinement using the proposed training regime leads to more robust models. In supplementary, Table \ref{tab:ood}, we report the results on \CIFARHC using various corruptions, with varying degrees of corruption. We observe that our training not only yields higher accuracy, \auc, but also favorable calibration values. We observe that \refcal surpasses the \sota in terms of evaluation metrics performance despite corruption showing relatively lower degradation compared to \sota. Training with proposed refinement loss \cite{khosla2020supervised} has many benefits including robustness to corruptions as demonstrated in Fig. \ref{fig:brightnessDist}, Fig. \ref{fig:frostDist}.
\end{comment}

\mypara{Robustness to out-of-distribution (\ood) samples (semantic shift)} 
A well-calibrated model should not only exhibit low confidence whenever it misclassifies but also in situations when it encounters data that belongs to a new/different class different than the training classes. We investigate if \refcal can acquire representations that demonstrate increased resilience to \ood samples by assigning them low confidence. Tab. \ref{tab:ood} summarizes the \ood detection performance of our model and highlights the capability to reject such unknown samples by increasing \auc between the in-distribution and \ood samples and decreasing both \texttt{FPR@0.95TPR}, and \texttt{Detection Error}.

\mypara{Grad-CAM visualization}
Figs. \ref{fig:teaser} and \ref{fig:addnl_gcam} compares Grad-CAM visualizations of \texttt{RefCal} and contemporary methods.
%, showing that refinement loss shifts attention from background to salient object features by separating confidence for correct and incorrect predictions.
%Fig. \ref{fig:teaser} present Grad-CAM visualizations comparing \texttt{RefCal} with contemporary methods. As refinement loss forces a model to separate its confidence for correct, and incorrect predictions, it also leads the model to focus its attention on the salient object features, instead of the background.

\begin{figure}
    \centering
    \includegraphics[width=0.9\linewidth]{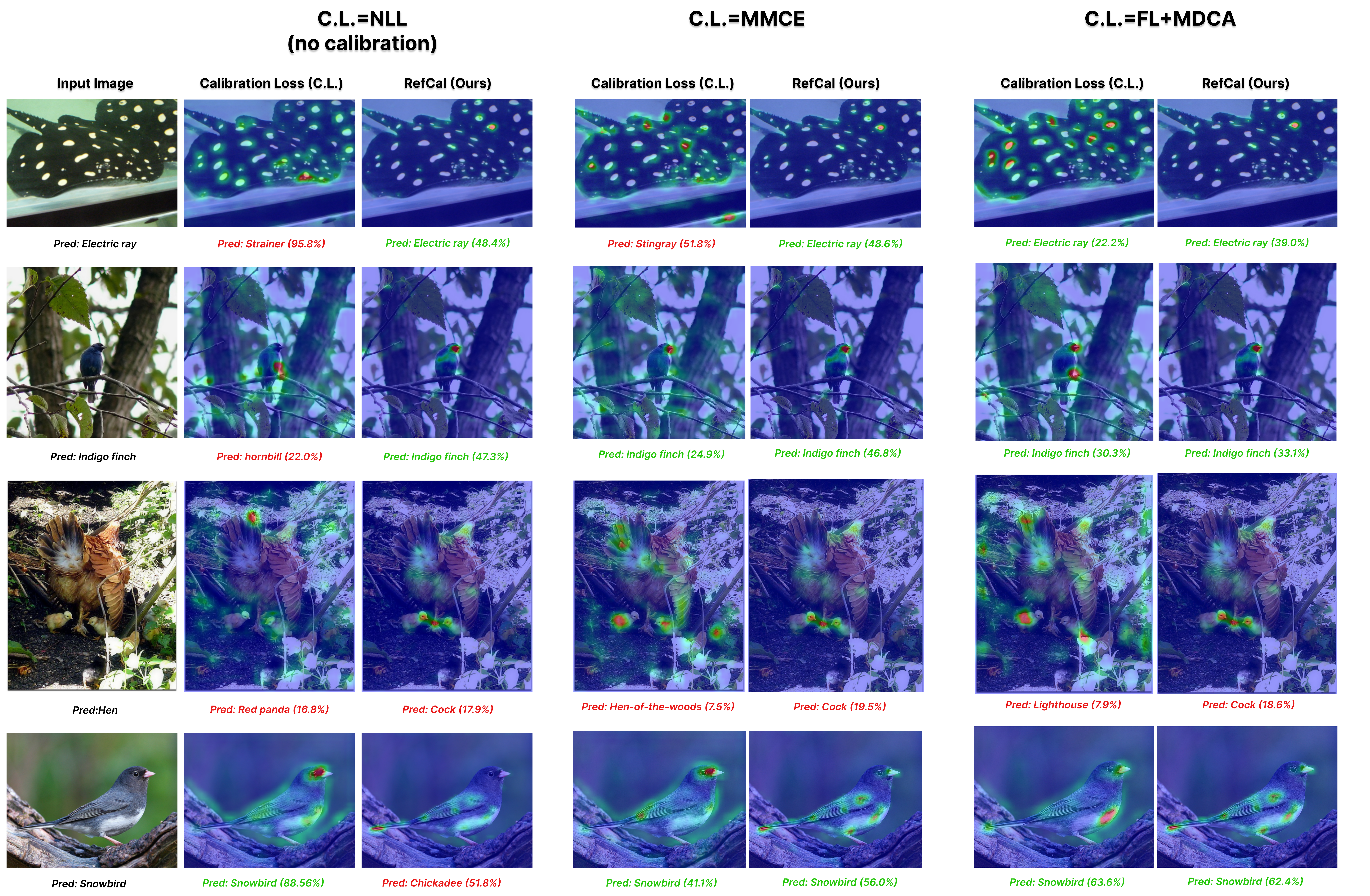}
\caption{\textbf{[Grad-CAM]}: Our proposed training regime, \refcal allows for joint optimization of calibration and refinement. Grad-CAM visualizations for \texttt{Resnet-18} trained on \texttt{ImageNet-LT}, using a particular calibration technique (column bearing title Calibration Loss (C.L.)), and then by jointly optimizing with the same calibration but adding our refinement loss (columns bearing title \refcal i.e., columns 3,5, and 7). Here, the Calibration Loss tested are: NLL (no calibration) in column 2, MMCE\cite{kumarpaper} in column 4, and FL+MDCA\cite{hebbalaguppe2022novel} in column 6. \textbf{Note:} Numbers in parentheses indicate predictive confidence for each method. Since, the refinement loss forces a model to separate its confidence for positive, and negative samples, it also leads the model to focus its attention on the salient object features, instead of the background.}
    \label{fig:addnl_gcam}
\end{figure}

\mypara{Limitations}
We observe that combining our loss $\lref$ with calibration methods such as AUCM~\cite{yuan2021large}, MixUp~\cite{mixup-augmentation}, and Adafocal~\cite{ghosh2023adafocal} can be challenging on some datasets. Although $\lref$ provably improves refinement, joint optimization with multiple losses via SGD introduces complex interactions. We conjecture that some calibration methods produce representations more compatible with $\lref$ than others, which we defer to future work. Our approach uses SupCon~\cite{khosla2020supervised} during training, incurring higher training cost than post-hoc calibration, but yielding substantial reliability gains with no additional inference overhead.

\section{Conclusions}
This paper demonstrates that existing calibration methods can improve calibration metrics while degrading prediction refinement. To address this, we propose a novel surrogate loss for refinement. We theoretically show that supervised contrastive loss upper-bounds our refinement loss, making it a suitable surrogate for minimization. Building on this, we introduce \refcal, a training framework that optimizes refinement, calibration, and accuracy. Experiments on standard classification benchmarks show that our approach outperforms or comparable to the \sota methods in producing accurate, well-calibrated, and refined predictions, thereby enhancing the classification reliability.

\section{Acknowledgments}
The authors would like to thank Harshad Kadilkar and Ranjitha Prasad for carefully reviewing the proof and for their suggestions, which helped make the proof more concise.

%In this paper we showed how current calibration techniques may end up taking a shortcut and achieve better calibration performance at the cost of reduced refinement. We suggested a novel surrogate loss to improve refinement. We also gave an attractive optimization technique to minimize the loss using widely available supervised contrastive learning. We showed mathematically how supervised contrastive loss is an upper bound to the proposed refinement loss, and hence, a surrogate to minimize the proposed loss. Finally, we have a new training regime \refcal, which can jointly optimize the proposed refinement loss, coupled with calibration, and accuracy.
%Finally, our experiments on standard benchmark datasets for classification reveal that this approach significantly surpasses \sota in generating highly accurate, refined, and confidence calibrated estimates that enhance reliability of \dnns.

 %This leads to a joint optimization framework to simultaneously achieve improved calibration, refinement as well as accuracy. 
%
%We present \textbf{(a)} \refcal, a training methodology that optimizes calibration and refinement of \dnns, enabling them to produce trustworthy predictions \textbf{(b)} We substantiate, both theoretically and empirically, how supervised contrastive loss can be used to minimize our proposed refinement loss and demonstrate the seamless extension of refinement to multi-class classification; \textbf{(c)}

\bibliography{iclr2026_conference}
\bibliographystyle{iclr2026_conference}

\appendix

\newpage
%\clearpage

\section{Appendix}
We supplement the main text with additional results that comprise the following details.
\begin{itemize}

\item \textbf{Reproducibility Checklist} is attached as a separate pdf along with the supplementary material.

	\item \textbf{Source Code}: The supplementary materials include the source code along with a \texttt{readme.html} file, enclosed within the provided zip file. 

\item \textbf{Additional Results} We strengthen the \refcal methodology with additional experiments, covering refinement and calibration results on large-scale datasets such as runs on \TINY and \TINYLT in addition to \texttt{ImageNet-1K} in the main text. Supplemental material also consists of \texttt{CIFAR10-LT} experiments. Additionally, we include experiments of robustness to corruption and Out-of-distribution data.

 \item \textbf{Grad-CAM visualization}  We provide Grad-CAM visualizations on additional examples illustrating both the efficacy and shortcomings of the \refcal training regimen. Additionally, we include zoomed-in Grad-CAM visualizations of models trained with and without the \refcal procedure, as depicted in the main body of the text.

	\item \textbf{Training and Compute Details}: We furnish comprehensive information on the (a) datasets to keep the paper self-contained; (b) baselines; and (c) specifics of training methodology with details on how to reproduce each table. Finally, we mention the compute/resources employed in our experiments.

	\item \textbf{Limitations and Broader Impact}: This section delves into the limitations of our research and contemplates its broader impact on the field.
\end{itemize}

%\section{Reproducibility Checklist}

%\input{ReproducibilityChecklist_AAAI}

\section{Additional Results}
\label{sec:supplAdditionalResults}
%\input{tables/Suppl1_ls_Tiny}

%\subsection{Performance on Large scale dataset} 

%\input{tables/Suppl1_ls_Tiny}
\begin{table*}[t]
\caption{[\textbf{Comparison of reliability metrics in multi-class classification: Accuracy, Calibration, and Refinement of our approach, \refcal  vs. \sota}] on \texttt{CIFAR10-LT} \cite{openlongtailrecognition} using \texttt{ResNet-50} \cite{he2016deep} feature extractor. Notice that \refcal variants outperform contemporary methods in majority of cases.}
%Additional binary classification experiments are a part of supplementary.  } %\textcolor{red}{CPC and ACLS if time permits. Yet to check bold in entries in row-pairs. Rationale for choosing imbalanced dataset??}}
\centering
\resizebox{\linewidth}{!}{%
\begin{tabular}{l|cccccc}
\toprule

\multirow{3}{*}{\large \textbf{Method}} &  & &  \textbf{\large CIFAR-10-LT} &&&   \\
%\midrule
 &  \textbf{Top1 (\%)~$\uparrow$} & \textbf{AUROC~$\uparrow$}  & \textbf{ECE (\%) ~$\downarrow$}    & \textbf{SCE (\%)~$\downarrow$} & \textbf{ACE(\%)~$\downarrow$}  & \textbf{smECE(\%)~$\downarrow$} \\ 
\bottomrule
%NLL (CE) & \cmark         &  \xmark   &          &       &           &          &       &       &   \\
NLL (CE) &  85.30  & 98.40  & 09.35  & 02.22 & 02.02 & 08.91   \\
\textbf{\refcal} \textbf{(Ours)}&  \textbf{89.70}  & \textbf{98.82} & \textbf{08.60}   & \textbf{01.83}  & \textbf{01.45}  & \textbf{06.85}\\

\midrule
LS~\cite{originallabelsmoothing} & 86.58 & 97.33 & 05.00 & 01.47 & 01.64 & \textbf{03.92}\\
\textbf{\refcal} \textbf{(Ours)}    &  \textbf{91.17}  & \textbf{99.28}  & \textbf{04.84} & \textbf{01.03}  & \textbf{00.70}  & 04.55\\
\midrule
CE+TS~\cite{guo2017calibration} & 85.30  & 98.90  & 03.42  & 01.84 & 01.86 & 03.41 \\
\textbf{\refcal} \textbf{(Ours)}     & \textbf{89.70}  & \textbf{99.12} & \textbf{03.09}  & \textbf{01.04}  & \textbf{01.01}   & \textbf{02.96}\\
\midrule
MMCE~\cite{kumarpaper} & 86.47 & 98.55 & \textbf{09.30}   & 02.08 & 01.85 & 08.47\\
\textbf{\refcal} \textbf{(Ours)} & \textbf{90.02} & \textbf{98.88} & 09.71  & \textbf{01.84}  & \textbf{01.22}  & \textbf{05.62}   \\
\midrule
MixUp~\cite{thulasidasan2019mixup} & 88.92 & \textbf{98.75} & \textbf{07.97}  & 02.54 & 02.50  & 07.51\\
\textbf{\refcal} \textbf{(Ours)}    &  \textbf{89.49} & 98.69 & 10.16 & \textbf{02.54}  & \textbf{02.42}  & \textbf{07.42}  \\
\midrule
CRL~\cite{moon2020confidence} & 83.16 & 97.71 & 10.90  & 02.33 & 02.08 & 10.41\\
\textbf{\refcal} \textbf{(Ours)}    & \textbf{89.70}  & \textbf{98.82} & \textbf{08.59}  & \textbf{01.83}  & \textbf{01.45}  & \textbf{06.85} \\
\midrule
AUCM~\cite{yuan2021large} & 88.35 & 98.62 & 05.75  & \textbf{01.38} & \textbf{01.11} & 05.79 \\
\textbf{\refcal (Ours)}     &  \textbf{89.39} & \textbf{99.10}  & \textbf{05.41}  & 01.91  & 01.89  & \textbf{05.41}\\
\midrule
FL+ MDCA  ~\cite{mdca}  & 87.60  & 98.41 & 05.45  & 01.42 & 01.28 & 05.40 \\
\textbf{\refcal (Ours)}   &  \textbf{89.12} & \textbf{99.11} & \textbf{03.16}  & \textbf{01.36}  & \textbf{01.28}  & \textbf{03.14}\\
\midrule
AdaFocal~\cite{ghosh2023adafocal} & 81.51 & 98.30  & 11.26 & 02.54 & 02.35 & 11.05\\
\textbf{\refcal} \textbf{(Ours)}    & \textbf{89.57} & \textbf{98.49} & \textbf{10.43} & \textbf{01.06}  & \textbf{01.02}  & \textbf{05.27}  \\
\midrule
MbLS~\cite{liu2022devil} & 87.82 & 98.59 & 07.05  & 01.58 & 01.24 & 07.05\\
\textbf{\refcal} \textbf{(Ours)}  &  \textbf{91.16} & \textbf{99.28} & \textbf{05.52}  & \textbf{01.16}   & \textbf{00.78}  & \textbf{05.09}         \\
\midrule

LogitNorm~\cite{wei2022mitigating} & 87.13 & \textbf{99.02} & \textbf{03.14}  & \textbf{01.31} & \textbf{01.21} & \textbf{03.11}\\

\textbf{\refcal} \textbf{(Ours)} & \textbf{89.02} & 98.50  & 74.19 & 10.24 & 15.03 & 50.29 \\
% \midrule
%ACLS~\cite{park2023acls} & CVPR'23& \cmark          & \cmark &  &         &           &  &        &           &  &         \\

%\textbf{ \refcal} &  \textbf{Ours}  &   \cmark       & \cmark &          &       &           &          &       &       &   \\
%\rowcolor{LightCyan}\textbf{\emph{RefCal} (SupCon with AdaFocal) }     &    \cmark      & \cmark &          &       &           &          &       &       &   \\
\bottomrule
\end{tabular}
}

\label{tab:cifar10lt}
\end{table*}

% {\hrefhttps://crossminds.ai/video/self-distillation-as-instance-specific-label-smoothing-606fec3ff43a7f2f827c10c9/}

\subsection{Performance on Large scale dataset} 

\begin{table*}[]
\centering

\resizebox{\linewidth}{!}{%
\begin{tabular}{l|c|cc|llllll|llllll}
\toprule
% \multicolumn{12}{l}{\textbf{\texttt{ResNet50} | \texttt{CIFAR100-LT dataset (imbalance factor=0.1)} \quad \quad\quad\quad \quad  \texttt{ResNet50}   \texttt{CIFAR10-LT dataset (imbalance factor=0.1)}}}   
\multirow{3}{*}{\large \textbf{Method}} &  & & &&& \textbf{\large \TINY}&&&& &&\textbf{\large \TINYLT}  \\
%\midrule
 & \textbf{Venue}  &\textbf{Ref.} & \textbf{Cal.} & \textbf{Top1 $(\%)~\uparrow$} & \textbf{AUROC~$\uparrow$}  & \textbf{ECE $(\%) ~\downarrow$}    & \textbf{SCE $(\%)~\downarrow$} & \textbf{ACE$(\%)\downarrow$}  & \textbf{smECE $(\%)~\downarrow$} &  \textbf{Top1 $(\%)~\uparrow$}  & \textbf{AUROC~$\uparrow$}  & \textbf{ECE $(\%) ~\downarrow$}    & \textbf{SCE $(\%)~\downarrow$} & \textbf{ACE$(\%)~\downarrow$}  & \textbf{smECE$(\%)~\downarrow$} \\ 
 \bottomrule
NLL (CE) & - &\cmark         &  \xmark   & 	64.64 & 98.30  & 07.34  & \textbf{00.15} & \textbf{00.09} & 07.13  & 47.66                         & 96.00                            & 11.10                         & \textbf{00.23}                         & 00.20                          & 11.05                         \\
\textbf{\refcal} &  -   &   \cmark       & \cmark & \textbf{64.74} & \textbf{98.64} & \textbf{06.21}  & \textbf{00.15} & 00.10 & \textbf{05.97}  & \textbf{47.96} & \textbf{96.37} & \textbf{01.82}  & \textbf{00.23} & \textbf{00.19} & \textbf{01.96}\\

\midrule
LS~\cite{originallabelsmoothing} & CVPR'15 & \xmark &  \cmark         & \textbf{65.22} & 97.95 & 22.36 & 00.25 & 00.33 & 21.82 & \textbf{48.95}                         & 95.10                          & 21.56                         & 00.26                         & 00.40                          & 21.10      \\
\textbf{\refcal} & \textbf{Ours}    &   \cmark       & \cmark &   64.30  & \textbf{98.59} & \textbf{07.56}  & \textbf{00.17} & \textbf{00.13} & \textbf{07.32}  & 47.15 & \textbf{96.40}  & \textbf{04.69}  & \textbf{00.22} & \textbf{00.20}  & \textbf{04.65}\\
\midrule
CE+TS~\cite{guo2017calibration} & ICML'17   & \cmark &  \cmark                            & 64.64 & 98.30  & 27.02 & 00.28 & 00.31 & 25.66 & 47.66                         & 95.30                          & 27.42                         & 00.28                         & 00.41                         & 25.86                         \\
\textbf{\refcal} & \textbf{Ours}     &   \cmark       & \cmark &  \textbf{64.74} & \textbf{98.56} & \textbf{23.87} & \textbf{00.26} & \textbf{00.25} & \textbf{23.00} & \textbf{47.96} & \textbf{96.30}  & \textbf{14.90}  & \textbf{00.26} & \textbf{00.24} & \textbf{14.72}\\
\midrule
MMCE~\cite{kumarpaper} & ICML'18 &    \xmark      &     \cmark             & 62.49 & 97.94 & 09.55  & 00.17 & \textbf{00.09} & 09.47  & 45.22                         & 95.40                          & 16.10                          & 00.26                         & 00.21                         & 15.97                         \\
\textbf{\refcal} & \textbf{Ours}      &   \cmark       & \cmark &   \textbf{64.34} & \textbf{98.59} & \textbf{05.66}  & \textbf{00.16} & 00.12 & \textbf{05.53}  & \textbf{48.04} & \textbf{96.47} & \textbf{02.21}  & \textbf{00.23} & \textbf{00.19} & \textbf{02.18}\\
\midrule
MixUp~\cite{thulasidasan2019mixup} & NeurIPS'19                  & \xmark         & \cmark & \textbf{66.93} & \textbf{98.35} & \textbf{15.64} & \textbf{00.21} & \textbf{00.20}  & \textbf{15.59} & \textbf{50.39}                         & \textbf{96.60}                          & \textbf{11.36} & \textbf{00.30} & \textbf{00.28} & \textbf{11.35} \\
\textbf{\refcal}  & \textbf{Ours}    &   \cmark       & \cmark &  63.15 & 98.26 & 18.93 & 00.25 & 00.23 & 18.74 & 47.99 & 96.29 & 17.26 & 00.28 & 00.26 & 17.14\\
\midrule
CRL~\cite{moon2020confidence} & ICML'20 &                      \cmark        &   \xmark     &  64.60  & 98.21 & 06.63  & \textbf{00.15} & \textbf{00.09} & 06.43  & 47.66                         & 96.00 & 11.10  & \textbf{00.23} & 00.20 & 11.05\\
\textbf{\refcal}  & \textbf{Ours}    &   \cmark       & \cmark & \textbf{64.74} & \textbf{98.64} & \textbf{06.21}  & \textbf{00.15} & 00.10  & \textbf{05.97}  & \textbf{47.96} & \textbf{96.37} & \textbf{01.82}  & \textbf{00.23} & \textbf{00.19} & \textbf{01.96} \\
\midrule
%Trust Score~\cite{jiang2018trust} (NeurIPS'19)                    &  \cmark        &   \xmark     & &          &       &           &          &       &       &   \\
%\textbf{\emph{RefCal}  +  TrustScore}    &   \cmark       & \cmark &          &       &           &          &       &       &   \\
%\midrule
%PSKD ~\cite{Kim_2021_ICCV} (ICCV'21)                   &    \xmark       &    \cmark       & &          &       &           &          &       &       &   \\
AUCM~\cite{yuan2021large} & ICCV'21   &   \cmark        &    \xmark                         & 31.23 & 96.32 & \textbf{17.26} & \textbf{00.23} & \textbf{00.20}  & \textbf{17.11} & 22.63 & \textbf{94.25} & \textbf{08.84}  & 00.17 & \textbf{00.22} & \textbf{08.83}\\
\textbf{\refcal}    & \textbf{Ours} &   \cmark        &    \xmark &  \textbf{56.90}  & \textbf{96.46} & 55.91 & \textbf{00.23} & 00.78 & 42.45 & 34.66 & 93.79 & 33.72 & \textbf{00.06} & 00.72 & 30.26\\
\midrule
FL+ MDCA  ~\cite{mdca}  & CVPR'22                 &  \xmark         & \cmark & \textbf{64.82} & 98.50  & \textbf{03.22}  & \textbf{00.15} & \textbf{00.09} & \textbf{03.26}  & \textbf{47.80}    & \textbf{96.60} & \textbf{02.47} & \textbf{00.23} & \textbf{00.19} & \textbf{02.44}\\
\textbf{\refcal}   & \textbf{Ours}   &   \cmark       & \cmark &   63.74 & \textbf{98.58} & 09.99  & 00.19 & 00.15 & \textbf{09.95}  & 47.56 & \textbf{96.60}  & 11.80  & 00.25 & 00.22 & 11.68\\
\midrule
AdaFocal~\cite{ghosh2023adafocal}  & NeurIPS'22                   & \xmark &    \cmark       &   60.75 & 82.41  & 39.43 & 00.29 & 00.27 & 21.86 & 40.22 & 94.70  & 28.89 & 00.38 & 00.26 & 26.91           \\
\textbf{\refcal}  & \textbf{Ours}    &   \cmark       & \cmark & \textbf{61.05} & \textbf{83.22} & \textbf{38.73} & \textbf{00.22} & \textbf{00.17} & \textbf{20.15} & \textbf{47.67} & \textbf{96.61} & \textbf{08.55}  & \textbf{00.25} & \textbf{00.22} & \textbf{08.30} \\
\midrule
%CPC~\cite{cpc} & CVPR'22   &   \xmark        & \cmark                                       & &          &       &           &          &       &       &   \\
%\textbf{\refcal}  & \textbf{Ours}    &   \cmark       & \cmark &          &       &           &          &       &       &   \\
%\midrule
MbLS~\cite{liu2022devil} & CVPR'22   &   \cmark        &    \cmark &  \textbf{65.36} & 98.05 & 10.97 & 00.19 & 00.18 & 10.97 & \textbf{48.96} & 95.50 & 10.25 & 00.23 & 00.27 & 10.24 \\
\textbf{\refcal}   & \textbf{Ours}   &   \cmark       & \cmark &  64.31 & \textbf{98.59} & \textbf{07.03}  & \textbf{00.17} & \textbf{00.12} & \textbf{06.81}  & 47.62 & \textbf{96.40}  & \textbf{04.11}  & \textbf{00.22} & \textbf{00.20}  & \textbf{04.08} \\
\midrule
LogitNorm~\cite{wei2022mitigating} & ICML'22 &   \xmark  &  \cmark                             &       63.03 & 96.35 & 62.24 & \textbf{00.01} & 00.80  & 45.30  & 47.88 & 94.11 & 47.04 & \textbf{00.01} & 00.73 & 38.06 \\
\textbf{\refcal}  & \textbf{Ours} &  \cmark & \cmark        &	\textbf{63.88} & \textbf{98.48} & \textbf{03.35}  & 00.15 & \textbf{00.09} & \textbf{03.24}  & \textbf{47.94} & \textbf{95.15} & \textbf{44.88} & 00.12 & \textbf{00.69} & \textbf{36.86} \\
\bottomrule
\end{tabular}
}
\caption{[Comparison of reliability metrics of proposed \refcal vs. \sota in a multi-class classification setting on \TINY and \TINYLT datasets]: We use \texttt{ResNet-50} backbone on \texttt{\TINY and \TINYLT(imbalance factor=$0.1$)} , ``\textbf{ + \refcal}" in bold are our approach variants. Note that \refcal surpasses \sota for train-time calibrators and refinement methods when considered independently in terms of \auroc, and \texttt{Top $1\%$ accuracy} except when paired with \texttt{Adafocal} and \texttt{AUCM}. In terms of calibration, the performance generally is better or comparable, with only occasional decrease in calibration metrics (\ece,\sce,\smce, \ace). For e.g., observe the rows (a) \texttt{CE} and the corresponding \textbf{ `+ \refcal'}, (b) \texttt{CE+TS}\cite{guo2017calibration} and the corresponding \textbf{ `+ \refcal'} and (c)  {MMCE}\cite{kumarpaper} and the corresponding \textbf{ `+ \refcal'} which jointly offer both refinement and calibration. For \ece/\sce computation, $15$ bins were used in accordance with prior work. \ace uses an adaptive binning strategy. \textbf{Numbers in bold:} best performance; The term \textbf{`Ref'} denotes the Refinement method, while \textbf{`Cal'} indicates the Calibration method. Note: \NLL: Negative log likelihood (cross-entropy loss). \textbf{Note:} For all methods, we have used the code provided by the authors. 
All models used for inference were chosen based on best top $1\%$ accuracy as per norm.}
%\midrule
%ACLS~\cite{park2023acls} & CVPR'23& \cmark          & \cmark &  &         &           &  &        &           &  &         \\

%\textbf{ \refcal} &  \textbf{Ours}  &   \cmark       & \cmark &          &       &           &          &       &       &   \\
%\rowcolor{LightCyan}\textbf{\emph{RefCal} (SupCon with AdaFocal) }     &    \cmark      & \cmark &          &       &           &          &       &       &   \\

%\ramya{doing grid search to improve values. Last resort maybe we comment out Adafocal and LogitNorm}. \soumya{Minor: Why after ECE(\%) there is a huge gap between SCE(\%). Is it intentional.}}
\label{tab:ls_tiny}
\end{table*}

% and for ACLS\cite{park2023acls} - the code was implemented by us and verified by the original authors. 

\begin{table*}[t]
\centering
\resizebox{0.9\linewidth}{!}{%
\begin{tabular}{l|llllll}
\toprule

\multirow{3}{*}{\textbf{Method}} &  & &  \textbf{\large CIFAR-10-LT} &&&   \\
%\midrule
 &  \textbf{Top1 (\%)~$\uparrow$} & \textbf{AUROC~$\uparrow$}  & \textbf{ECE (\%) ~$\downarrow$}    & \textbf{SCE (\%)~$\downarrow$} & \textbf{ACE(\%)~$\downarrow$}  & \textbf{smECE(\%)~$\downarrow$} \\ 
& &  & $\times 10^{-2}$   & $\times 10^{-2}$ & $\times 10^{-2}$    & $\times 10^{-2}$     \\ 
\bottomrule
%NLL (CE) & \cmark         &  \xmark   &          &       &           &          &       &       &   \\
NLL (CE) &  85.30  & 98.40  & 09.35  & 02.22 & 02.02 & 08.91   \\
\textbf{+  \refcal} \textbf{(Ours)}&  \textbf{89.70}  & \textbf{98.82} & \textbf{08.60}   & \textbf{01.83}  & \textbf{01.45}  & \textbf{06.85}\\

\midrule
LS~\cite{originallabelsmoothing} & 86.58 & 97.33 & 05.00 & 01.47 & 01.64 & \textbf{03.92}\\
\textbf{+  \refcal} \textbf{(Ours)}    &  \textbf{91.17}  & \textbf{99.28}  & \textbf{04.84} & \textbf{01.03}  & \textbf{00.70}  & 04.55\\
\midrule
CE+TS~\cite{guo2017calibration} & 85.30  & 98.90  & 03.42  & 01.84 & 01.86 & 03.41 \\
\textbf{+  \refcal} \textbf{(Ours)}     & \textbf{89.70}  & \textbf{99.12} & \textbf{03.09}  & \textbf{01.04}  & \textbf{01.01}   & \textbf{02.96}\\
\midrule
MMCE~\cite{kumarpaper} & 86.47 & 98.55 & \textbf{09.30}   & 02.08 & 01.85 & 08.47\\
\textbf{+  \refcal} \textbf{(Ours)} & \textbf{90.02} & \textbf{98.88} & 09.71  & \textbf{01.84}  & \textbf{01.22}  & \textbf{05.62}   \\
\midrule
MixUp~\cite{thulasidasan2019mixup} & 88.92 & \textbf{98.75} & \textbf{07.97}  & 02.54 & 02.50  & 07.51\\
\textbf{+  \refcal} \textbf{(Ours)}    &  \textbf{89.49} & 98.69 & 10.16 & \textbf{02.54}  & \textbf{02.42}  & \textbf{07.42}  \\
\midrule
CRL~\cite{moon2020confidence} & 83.16 & 97.71 & 10.90  & 02.33 & 02.08 & 10.41\\
\textbf{+  \refcal} \textbf{(Ours)}    & \textbf{89.70}  & \textbf{98.82} & \textbf{08.59}  & \textbf{01.83}  & \textbf{01.45}  & \textbf{06.85} \\
\midrule
AUCM~\cite{yuan2021large} & 88.35 & 98.62 & 05.75  & \textbf{01.38} & \textbf{01.11} & 05.79 \\
\textbf{+  \refcal (Ours)}     &  \textbf{89.39} & \textbf{99.10}  & \textbf{05.41}  & 01.91  & 01.89  & \textbf{05.41}\\
\midrule
FL+ MDCA  ~\cite{mdca}  & 87.60  & 98.41 & 05.45  & 01.42 & 01.28 & 05.40 \\
\textbf{+  \refcal (Ours)}   &  \textbf{89.12} & \textbf{99.11} & \textbf{03.16}  & \textbf{01.36}  & \textbf{01.28}  & \textbf{03.14}\\
\midrule
AdaFocal~\cite{ghosh2023adafocal} & 81.51 & 98.30  & 11.26 & 02.54 & 02.35 & 11.05\\
\textbf{+  \refcal} \textbf{(Ours)}    & \textbf{89.57} & \textbf{98.49} & \textbf{10.43} & \textbf{01.06}  & \textbf{01.02}  & \textbf{05.27}  \\
\midrule
MbLS~\cite{liu2022devil} & 87.82 & 98.59 & 07.05  & 01.58 & 01.24 & 07.05\\
\textbf{+  \refcal} \textbf{(Ours)}  &  \textbf{91.16} & \textbf{99.28} & \textbf{05.52}  & \textbf{01.16}   & \textbf{00.78}  & \textbf{05.09}         \\
\midrule

LogitNorm~\cite{wei2022mitigating} & 87.13 & \textbf{99.02} & \textbf{03.14}  & \textbf{01.31} & \textbf{01.21} & \textbf{03.11}\\

\textbf{+  \refcal} \textbf{(Ours)} & \textbf{89.02} & 98.50  & 74.19 & 10.24 & 15.03 & 50.29 \\
% \midrule
%ACLS~\cite{park2023acls} & CVPR'23& \cmark          & \cmark &  &         &           &  &        &           &  &         \\

%\textbf{ +  \refcal} &  \textbf{Ours}  &   \cmark       & \cmark &          &       &           &          &       &       &   \\
%\rowcolor{LightCyan}\textbf{\refcal (SupCon with AdaFocal) }     &    \cmark      & \cmark &          &       &           &          &       &       &   \\
\bottomrule
\end{tabular}
}
\caption{[\textbf{Comparison of reliability metrics in multi-class classification: Accuracy, Calibration, and Refinement of our approach, \refcal  vs. \sota}] on \texttt{ImageNet-LT} \cite{openlongtailrecognition} using \texttt{ResNet-50} \cite{he2016deep} feature extractor. Notice that \refcal variants outperform contemporary methods in majority of cases.}
%Additional binary classification experiments are a part of supplementary.  } %\textcolor{red}{CPC and ACLS if time permits. Yet to check bold in entries in row-pairs. Rationale for choosing imbalanced dataset??}}
\label{tab:ImageNetLT}
\end{table*}

% {\hrefhttps://crossminds.ai/video/self-distillation-as-instance-specific-label-smoothing-606fec3ff43a7f2f827c10c9/}
In the main manuscript, we show the performance of \refcal training regime on the datasets: \CIFARH, \CIFARHLT, \texttt{ImageNet-1K}, \texttt{ImageNet-LT}, and \STLT. In the supplemental material, we include large-scale datasets where \refcal variants are consistently superior to the \sota on both regular and long-tailed versions of the datasets: \TINY and \TINYLT as shown in Tab. \ref{tab:ls_tiny}. From Tab. \ref{tab:ls_tiny}, we infer \refcal variants namely, \refcal + \texttt{NLL}, \refcal + \texttt{LS}, \refcal + \texttt{CE+TS}, \refcal + \texttt{MMCE}, \refcal + \texttt{CRL}, \refcal + \texttt{Adafocal}, \refcal + \texttt{LogitNorm} offer performance better than the methods considered independently. Observe a similar trend on \CIFARHLT in Tab. \ref{tab:cifar10lt}. %\refcal+\texttt{MbLS}.

\subsection{Robustness of \textbf{\refcal}}

\subsubsection{Robustness to natural corruptions} 
\dnns lack robustness to out-of-distribution data or natural corruptions such as noise, blur, etc. Hendrycks \etal ~\cite{hendrycks2018benchmarking} benchmark robustness of a \dnn using $15$  algorithmically generated image corruptions that mimic natural corruptions. Each corruption severity ranges from $1$ to $5$ based on the intensity of corruption, where $5$ is the most severe. The corruption can be seen as perturbing a sample point in its local neighborhood while remaining in the support space of the probability distribution of valid images. We use \CIFARHC dataset to measure trained model performance on such corruptions. Tab. \ref{tab:CIFAR100_C} reports the results on \CIFARHC and in particular, results are for Gaussian noise with varying degrees of corruption. Observe that our training not only yields higher accuracy, \auc, but also favorable calibration values.

\subsubsection{Robustness to out-of-distribution (\ood) samples (semantic shift)} 
A well-calibrated model should not only exhibit low confidence whenever it misclassifies but also in situations when it encounters data that belongs to a new/different class different than the training classes. In literature, these are referred to as \ood samples \cite{hebbalaguppe2022novel,liang2017enhancing}. We investigate if \refcal can acquire representations that demonstrate increased resilience to \ood samples by assigning them low confidence. We compare the performance of various approaches using \texttt{FPR@0.95TPR}, \auc, and \texttt{Detection Error}, as reported in \cite{liang2017enhancing}. See the Supplementary for a description of evaluation metrics. Tab. \ref{tab:ood} summarizes the \ood detection performance of our model and highlights the capability to reject such unknown samples by increasing \auc between the in-distribution and \ood samples and decreasing both \texttt{FPR@0.95TPR}, and \texttt{Detection Error} . Notice with and without \refcal rows in Tab. \ref{tab:ood} which indicates we have the best values in comparison to \sota calibrators and refinement techniques.
% This is the corruption Table

\begin{table*}[t]
\centering
\resizebox{\linewidth}{!}{%
\begin{tabular}{l|llllll}
\toprule
% \multicolumn{12}{l}{\textbf{\texttt{ResNet50} | \texttt{CIFAR100-LT dataset (imbalance factor=0.1)} \quad \quad\quad\quad \quad  \texttt{ResNet50}   \texttt{CIFAR10-LT dataset (imbalance factor=0.1)}}}   
\multirow{3}{*}{\textbf{Method}} & && \textbf{\large CIFAR-100C} \\
%\midrule
 &  \textbf{Top1 (\%)~$\uparrow$} & \textbf{AUROC~$\uparrow$}  & \textbf{ECE (\%) ~$\downarrow$}    & \textbf{SCE (\%)~$\downarrow$} & \textbf{ACE(\%)~$\downarrow$}  & \textbf{smECE(\%)~$\downarrow$} \\ 
 &          & &  &$\times 10^{-2}$   & $\times 10^{-2}$ & $\times 10^{-2}$ \\ 
\bottomrule
%NLL (CE) & \cmark         &  \xmark   &          &       &           &          &       &       &   \\
NLL (CE) & 24.81	& 84.08	& 24.73	& 00.80	& 00.79 & 23.84\\
\textbf{+  \refcal (Ours)} & \textbf{32.97} &	\textbf{91.72} & \textbf{16.93} &	\textbf{00.73} &	\textbf{00.71} &	\textbf{16.79} \\

\midrule
LS~\cite{originallabelsmoothing} & 23.53 & 	80.08 & 	\textbf{03.36} & 	\textbf{00.35}	& \textbf{00.49} & 	\textbf{03.32}\\
\textbf{+  \refcal (Ours)} &  \textbf{30.01}	& \textbf{91.75}	& 14.86	& 00.56	& 00.93	& 14.78\\
\midrule
CE+TS~\cite{guo2017calibration} & 24.81  &	85.09 &	05.30 &	00.57 &	00.62 &	05.22   \\
\textbf{+  \refcal (Ours)} & \textbf{32.97} &	\textbf{92.54} &	\textbf{02.17}	& \textbf{00.59} &	\textbf{00.69} &	\textbf{02.18}\\
\midrule
MMCE~\cite{kumarpaper} & 23.54	& 84.15	& 24.85 &	00.77 &	00.78 &	23.90\\
\textbf{+  \refcal (Ours)} & \textbf{32.22}	& \textbf{93.21} &	\textbf{02.00}	& \textbf{00.65} &	\textbf{00.76}	& \textbf{02.00} \\
\midrule
%MixUp~\cite{thulasidasan2019mixup} & \\
%\textbf{+  \refcal (Ours)}  &    \\
%\midrule
CRL~\cite{moon2020confidence} &  23.34	& 86.62	& \textbf{07.21} &	\textbf{00.62} &	\textbf{00.64} &	\textbf{07.11}\\
\textbf{+  \refcal (Ours)}  & \textbf{32.96}	& \textbf{91.72} &	16.94 &	00.73 &	00.71	&16.79  \\
\midrule
%Trust Score~\cite{jiang2018trust} (NeurIPS'19)                    &  \cmark        &   \xmark     & &          &       &           &          &       &       &   \\
%\textbf{\refcal (Ours)  +  TrustScore}    &   \cmark       & \cmark &          &       &           &          &       &       &   \\
%\midrule
%PSKD ~\cite{Kim_2021_ICCV} (ICCV'21)                   &    \xmark       &    \cmark       & &          &       &           &          &       &       &   \\
AUCM~\cite{yuan2021large} &  19.02	& 83.71	 &  \textbf{05.75} & 	00.68	& \textbf{00.68}	 & \textbf{05.57} \\
\textbf{+  \refcal (Ours)}  &   \textbf{31.36} & 	\textbf{92.95} & 	29.47	&  \textbf{00.08} & 	01.34 & 	27.36\\
\midrule
FL+ MDCA  ~\cite{mdca}  & 22.83	& 85.03	& 11.90	& \textbf{00.69} & 00.72	& 11.89\\
\textbf{+  \refcal (Ours)}  &  \textbf{32.25} &	\textbf{91.98} & \textbf{10.98} &	\textbf{00.69}	& \textbf{00.70}	& \textbf{10.96} \\
\midrule
AdaFocal~\cite{ghosh2023adafocal}  & \textbf{34.76} &91.69 &37.89 &00.94 &00.82& 32.83    \\
\textbf{+  \refcal (Ours)}  & 32.43 &\textbf{92.61} &\textbf{07.78} &\textbf{00.68}&\textbf{00.70}&\textbf{07.78}\\
\midrule
%CPC~\cite{cpc} & CVPR'22   &   \xmark        & \cmark                                       & &          &       &           &          &       &       &   \\
%\textbf{+  \refcal (Ours)}  & \textbf{Ours}    &   \cmark       & \cmark &          &       &           &          &       &       &   \\
%\midrule
MbLS~\cite{liu2022devil} &  23.47	& 0.80	&  19.12	& 00.64	& \textbf{00.66} & 	18.97\\
\textbf{+  \refcal (Ours)}   & \textbf{29.95} & \textbf{92.11} &	\textbf{09.30} & 	\textbf{00.62}	& 00.85	& \textbf{09.29} \\
\midrule

LogitNorm~\cite{wei2022mitigating} &  25.56	& 84.34	& \textbf{24.03} &	00.05	& \textbf{01.01}	& \textbf{23.18}          \\

\textbf{+  \refcal (Ours)}  &   \textbf{33.93} &	\textbf{90.67} &	32.61	& \textbf{00.03}	& 01.28	& 29.55\\
%\midrule
%ACLS~\cite{park2023acls} & CVPR'23& \cmark          & \cmark &  &         &           &  &        &           &  &         \\

%\textbf{ +  \refcal (Ours)} &  \textbf{Ours}  &   \cmark       & \cmark &          &       &           &          &       &       &   \\
%\rowcolor{LightCyan}\textbf{\refcal (Ours) (SupCon with AdaFocal) }     &    \cmark      & \cmark &          &       &           &          &       &       &   \\
\bottomrule
\end{tabular}
}
\caption{[\textbf{Robustness to corruptions]: Comparison of reliability metrics of \refcal (Ours)  vs. \sota}]: We use \texttt{ResNet-50} \cite{he2016deep} backbone on \texttt{CIFAR100C} \cite{hendrycks2018benchmarking} The results are averaged with 5 degrees of severity of Gaussian noise. We observe that \refcal scores are consistently on-par/superior over train-time calibrators and refinement methods.}
\label{tab:CIFAR100_C}
\end{table*}

% \textcolor{red}{to be expanded} 

\begin{table*}[t]
\centering
\resizebox{0.8\linewidth}{!}{%
\begin{tabular}{l|llllll}
\toprule
% \multicolumn{12}{l}{\textbf{\texttt{ResNet50} | \texttt{CIFAR100-LT dataset (imbalance factor=0.1)} \quad \quad\quad\quad \quad  \texttt{ResNet50}   \texttt{CIFAR10-LT dataset (imbalance factor=0.1)}}}   
\multirow{3}{*}{\textbf{Method}} & && \textbf{\large CIFAR-10} \\
%\midrule
 &  \textbf{FPR@TPR95 ~$\downarrow$} & \textbf{Det Error ~$\downarrow$}  & \textbf{AUROC~$\uparrow$}    & 
 \textbf{AUPR In ~$\uparrow$} & \textbf{AUPR Out ~$\uparrow$}  \\ 
\bottomrule
%NLL (CE) & \cmark         &  \xmark   &          &       &           &          &       &       &   \\
NLL (CE) & 30.50 &08.80 & 95.90& 96.90 & 94.70 \\
\textbf{+  \refcal (Ours)} & \textbf{16.30} &\textbf{07.10} & \textbf{97.50} & \textbf{97.70} &\textbf{97.00} \\

\midrule
LS~\cite{originallabelsmoothing} & 29.30 & 09.30 & 92.60 & 86.80 & 92.40\\
\textbf{+  \refcal (Ours)} & \textbf{17.00} & \textbf{08.20} & \textbf{97.10} & \textbf{97.60} & \textbf{96.30}\\
\midrule
CE+TS~\cite{guo2017calibration} &  22.20 & 08.10 &97.00 &97.60 &96.40\\
\textbf{+  \refcal (Ours)} & \textbf{14.10} & \textbf{06.80} &\textbf{97.90}& \textbf{98.20} &\textbf{97.60}\\
\midrule
MMCE~\cite{kumarpaper} & \textbf{12.10} &\textbf{06.90} &\textbf{98.30} &\textbf{98.40} &\textbf{98.20} \\
\textbf{+  \refcal (Ours)} &20.80 &10.30&97.30&85.80&95.60  \\
\midrule
%MixUp~\cite{thulasidasan2019mixup} & \\
%\textbf{+  \refcal (Ours)}  &    \\
%\midrule
CRL~\cite{moon2020confidence} & 27.90&08.20&96.30&97.20&95.30 \\
\textbf{+  \refcal (Ours)}  & \textbf{16.30}&\textbf{07.10}&\textbf{97.50}&\textbf{97.70}&\textbf{97.00}  \\
\midrule
%Trust Score~\cite{jiang2018trust} (NeurIPS'19)                    &  \cmark        &   \xmark     & &          &       &           &          &       &       &   \\
%\textbf{\refcal (Ours)  +  TrustScore}    &   \cmark       & \cmark &          &       &           &          &       &       &   \\
%\midrule
%PSKD ~\cite{Kim_2021_ICCV} (ICCV'21)                   &    \xmark       &    \cmark       & &          &       &           &          &       &       &   \\
AUCM~\cite{yuan2021large} & 38.30&12.10&93.40&93.60&92.40  \\
\textbf{+  \refcal (Ours)}  & \textbf{10.20}&\textbf{06.60}&\textbf{98.20}&\textbf{98.50}&\textbf{98.00}  \\
\midrule
FL+ MDCA  ~\cite{mdca}  & 20.50&08.00& 97.10&\textbf{97.70}&\textbf{96.60}\\
\textbf{+  \refcal (Ours)}  & \textbf{18.00}&\textbf{07.20}&\textbf{97.30}&97.40&96.50  \\
\midrule
%AdaFocal~\cite{ghosh2023adafocal}  &    \\
%\textbf{+  \refcal (Ours)}  & \\
%\midrule
%CPC~\cite{cpc} & CVPR'22   &   \xmark        & \cmark                                       & &          &       &           &          &       &       &   \\
%\textbf{+  \refcal (Ours)}  & \textbf{Ours}    &   \cmark       & \cmark &          &       &           &          &       &       &   \\
%\midrule
MbLS~\cite{liu2022devil} & 25.00&08.20&96.40&97.10&95.10  \\
\textbf{+  \refcal (Ours)}   &\textbf{16.90} &\textbf{07.80}&\textbf{97.20}&\textbf{97.70}&\textbf{96.60}  \\
\bottomrule
\end{tabular}
}
\caption{[\textbf{Robustness to \ood data]: Comparison of reliability metrics of \refcal (Ours)  vs. \sota}]: We use \texttt{ResNet-50} backbone on  in-distribution dataset \texttt{CIFAR10} and tested on \ood dataset \texttt{SVHN} for classification. }
\label{tab:ood}
\end{table*}

\subsection{Performance of \refcal to natural corruptions and OOD data}
\begin{figure*}[]
\centering
\begin{subfigure}{\linewidth}
\centering
\includegraphics[width=0.6\linewidth]{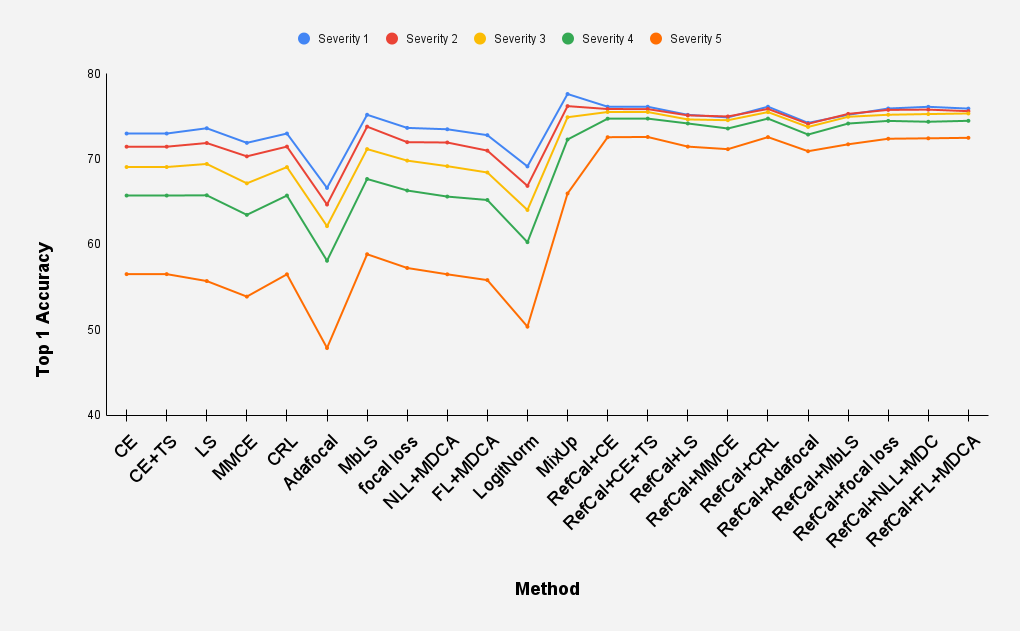}
\caption{Variation of Top 1\% accuracy }
\label{fig:sub1}
\end{subfigure}

\begin{subfigure}{\linewidth}
\centering
\includegraphics[width=0.6\linewidth]{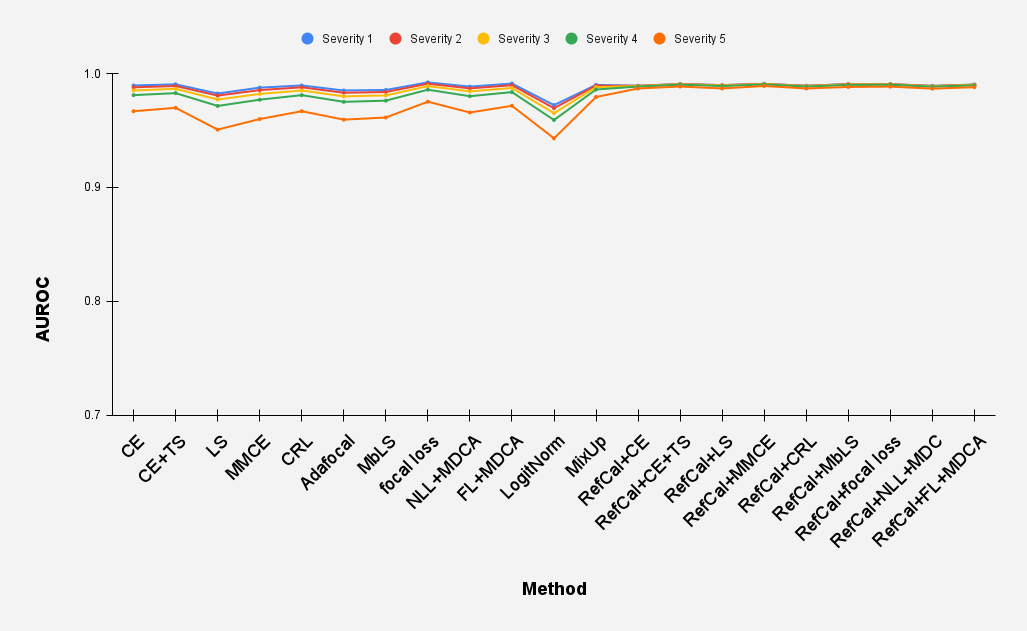}
\caption{Variation of AUROC}
\label{fig:sub2}
\end{subfigure}

\begin{subfigure}{\linewidth}
\centering
\includegraphics[width=0.6\linewidth]{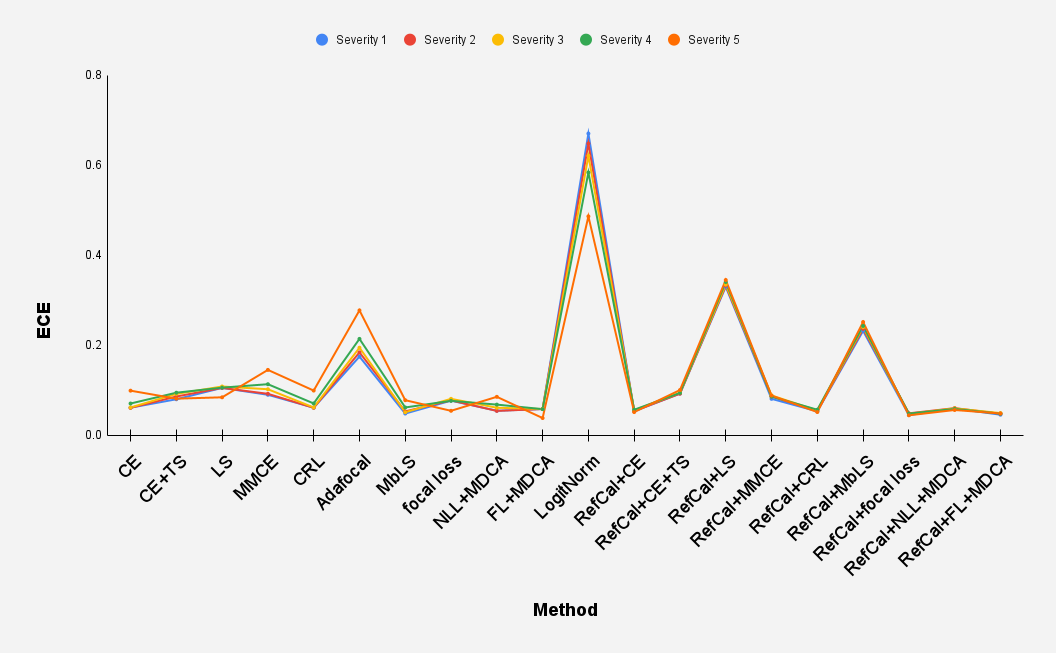}
\caption{Variation of calibration error (ECE)}
\label{fig:sub2}
\end{subfigure}
\caption{\textbf{[Robustness of \refcal to varying degree of corruptions in case of \texttt{Brightness} distorsion from \cite{hendrycks2018benchmarking}]}: We report results  on \CIFARHC using \texttt{ResNet50} \cite{he2016deep}. We observe that \refcal variants offer superior performance in both Top 1\% accuracy and \auroc in comparison with any train-time calibration and \sota refinement methods taken independently. Our approach shows relatively lower degradation in comparison to \sota. \ece exhibits lower variance with train-time calibrators which have higher \ece and wider variance showing lower reliability except FL+MDCA. }
%We recommend the following configurations (\refcal + \texttt{FL+MDCA}, \refcal + \texttt{focal loss}) as both metrics: \auroc and \ece are optimized.
\label{fig:brightnessDist}
\end{figure*}

Tab. \ref{fig:brightnessDist} and Tab. \ref{fig:frostDist} report the results on \CIFARHC when trained on \CIFARH and in particular, results for Brightness and frost distorsions with varying degrees of corruption. Here we provide results for a few types of corruption to study the robustness of \refcal training regime. 

\noindent Tab. \ref{fig:brightnessDist} illustrates the effect of varying the \texttt{brightness} distortion whose severity is varied from 1 to 5 and reliability metrics are recorded. \refcal variants, demonstrates an improvement over the state-of-the-art (\sota) in terms of both accuracy ($\uparrow$) and AUROC ($\uparrow$). Notice low variance in the metrics across different corruption severities, indicating better reliability as the degradation is relatively low. Moreover, similar trends are observed in the case of \ece (expected calibration error), where \refcal not only has lower or comparable calibration error to \sota but also exhibits lower variance with increased corruption.

%\noindent Fig. \ref{fig:frostDist} illustrates the effect of varying the \texttt{frost} distortion whose severity is varied from 1 to 5 and reliability metrics are recorded. \refcal variants, demonstrates an improvement over the state-of-the-art (\sota) in terms of both accuracy ($\uparrow$) and AUROC ($\uparrow$). Notice low variance in \auroc across different corruption severities, indicating better reliability. Moreover, similar trends are observed in the case of \ece (expected calibration error), where \refcal not only has lower or comparable calibration error to \sota but also exhibits lower variance with increased corruption severity. 

\begin{figure*}[]
	%\vspace{1cm}
	\centering
	\hspace*{-0.6cm}\includegraphics[angle=0, width=\linewidth]{figs_plots/ScatterErrorPlot_Nov11.png}
% \vspace{-0.5cm}
\caption{\textbf{(Left)} \textbf{\auc vs. \ece trade-off} and \textbf{(Right)} \textbf{\texttt{Top 1\% accuracy} vs. \ece trade-off}. We use ResNet-50 trained on CIFAR100-LT (imbalance factor 10\%). Higher \auc and lower ECE is better. Higher Top $1\%$ accuracy with lower calibration error (Top-left location) is desirable. The lower variance in \auc and \ece in the plot emphasizes the reliability of \refcal variants. 
%Note the variant \texttt{RefCal+CE+TS} (ours) offers the highest \auc-low \ece with the next best being \texttt{AUCM}\cite{yuan2021large}. 
Note that \texttt{RefCal+CE+TS} performs the best. We have run every technique $3$ times for fair comparison.  } 
\label{fig:error_bar_suppl}
\end{figure*}

\section{Rationale on the superior performance of our \refcal framework} 
\label{sec:rationale}

The supervised contrastive loss is used as a surrogate for refinement. Thus refinement loss is structured so that gradients provide an intrinsic mechanism for hard positive/negative mining during training. For losses that are aimed to improve refinement like ranking losses, or max-margin type of losses, hard mining is known to be crucial to their performance. The authors of \cite{khosla2020supervised} show analytically that hard mining is intrinsic and thus removes the need for complicated hard mining algorithms. We conjecture this as the reason for strong performance on refinement across various datasets.

To enhance the understanding of effect of refinement and why it is necessary, we consider a case of \texttt{MbLS} and \texttt{MbLS}

\begin{comment}

\begin{figure*}[]

\begin{subfigure}{\linewidth}
\centering
\includegraphics[width=0.4\linewidth]{figs_plots/Addnl_WhyRefCal/MBLS_auroc_93.00_ece_0.0826.pdf}
\caption{\texttt{MbLS}\cite{liu2022devil}}
\label{fig:sub1}
\end{subfigure}
\begin{subfigure}{\linewidth}
\centering
\includegraphics[width=0.4\linewidth]{figs_plots/Addnl_WhyRefCal/RefCal+MBLS_auroc_96.62_ece_0.08629.pdf}
\caption{\refcal+\texttt{MbLS}\cite{liu2022devil}}
\label{fig:sub2}
\end{subfigure}
\end{comment}
\begin{figure}[t]
\centering

\begin{subfigure}{0.48\linewidth}
    \centering
    \includegraphics[width=\linewidth]{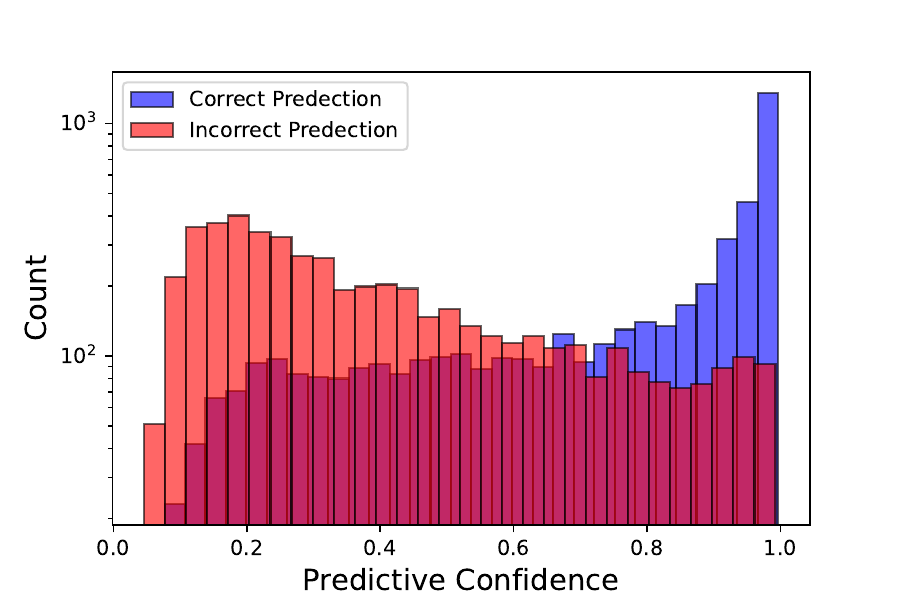}
    \caption{\texttt{MbLS}\cite{liu2022devil}}
    \label{fig:sub1}
\end{subfigure}
\hfill
\begin{subfigure}{0.48\linewidth}
    \centering
    \includegraphics[width=\linewidth]{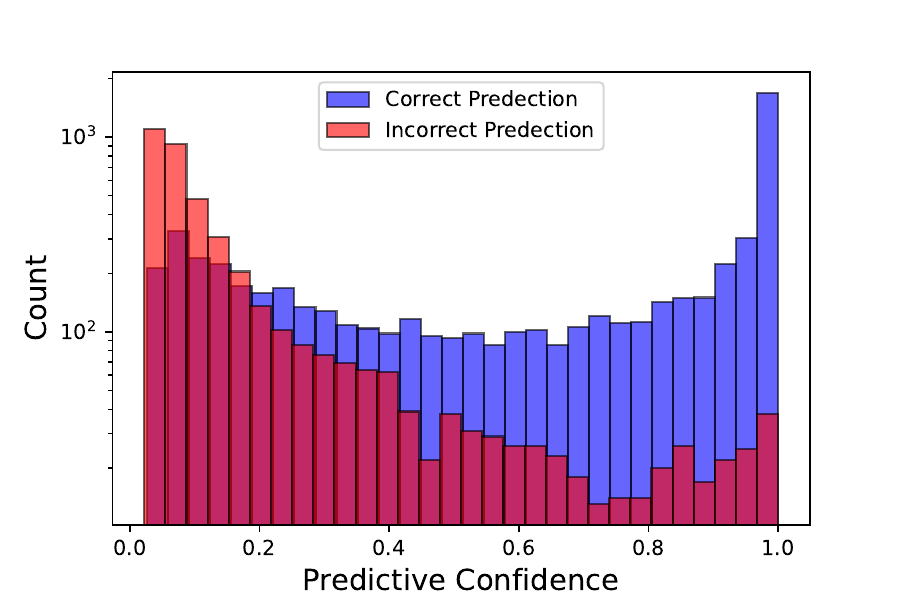}
    \caption{\refcal+\texttt{MbLS}\cite{liu2022devil}}
    \label{fig:sub2}
\end{subfigure}
\caption{\textbf{[Why RefCal? Example 2]}: Our proposed training regime, \refcal allows for joint optimization of calibration and refinement. \textbf{(a)} shows when we calibrate a \texttt{ResNet-50} model using \texttt{MbLS}\cite{liu2022devil} calibration on \CIFARHLT, it results in lower separation between the confidence values of the two classes (red and blue). To overcome this, we jointly train using the proposed proxy refinement loss along with calibration using \texttt{MbLS}\cite{liu2022devil}. It can be seen that the separability is enhanced as shown in \textbf{(b)}, indicating better refinement. Quantitatively, (\auc($\uparrow$), \ece($\downarrow$)) for (refinement, calibration) improve from ($93.00$, $0.087$) for \texttt{MbLS}\cite{liu2022devil} to ($96.62$, $0.086$) for \texttt{MbLS}\cite{liu2022devil} + proposed refinement. Note in the main text we gave an example teaser image of \refcal+\mmce vs. \mmce\cite{kumarpaper}}
\end{figure}

%\end{figure*}

\subsection{Grad-CAM visualization}
\label{subsec:gradcam}

\begin{figure*}[]
    \centering
    \includegraphics[width=1\linewidth]{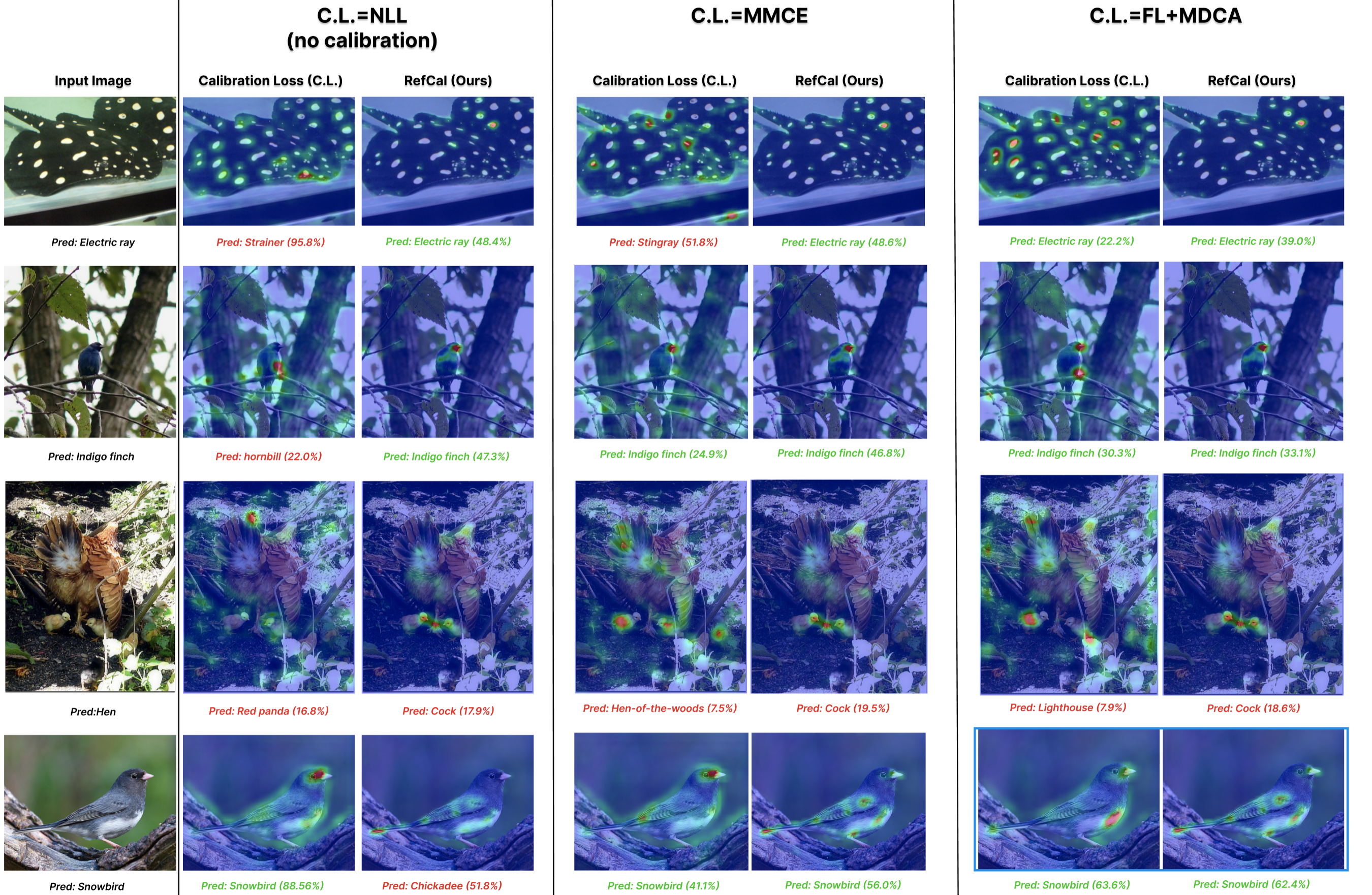}
    \caption{\textbf{[Grad-CAM]}: Our proposed training regime, \refcal allows for joint optimization of calibration and refinement. Grad-CAM visualizations for \texttt{Resnet-18} trained on \texttt{ImageNet-LT}, using a particular calibration technique (column bearing title Calibration Loss (C.L.)), and then by jointly optimizing with the same calibration but adding our refinement loss (columns bearing title \refcal i.e., columns 3,5, and 7). Here, the Calibration Loss tested are: NLL (no calibration) in column 2, MMCE\cite{kumarpaper} in column 4, and FL+MDCA\cite{hebbalaguppe2022novel} in column 6. \textbf{Note:} Numbers in parentheses indicate predictive confidence for each method. Since, the refinement loss forces a model to separate its confidence for positive, and negative samples, it also leads the model to focus its attention on the salient object features, instead of the background.}
    \label{fig:gradcamsuppl1}
\end{figure*}

Our observations from Fig. \ref{fig:gradcamsuppl1} reveal that the model trained with \refcal exhibits a more focus on semantic attributes of the foreground object critical for accurate classification.

\section{Training and compute details}
\label{sec:trainingDetails}

In this section, we provide a detailed summary of the hyperparameters and training techniques used, in order to ensure reproducibility. All models have been trained on 40GB Nvidia A100 GPU. The code was written using the PyTorch framework. We make use of automatic mixed precision training in order to reduce training time.  Details of  various hyperparameters used to record both the calibration and accuracy/\auroc are shown in Tabs. \ref{tab:hyperparam_cifar10}, \ref{tab:hyperparam_cifar10_imb},
\ref{tab:hyperparam_cifar100},
\ref{tab:hyperparam_cifar100_imb},
\ref{tab:hyperparam_tiny_imagenet}, \ref{tab:hyperparam_tiny_imagenet_imb},
\ref{tab:hyperparam_stl10_binary}. These hyperparameters enable us to reproduce all the tables in main and supplemental material. We used a fixed seed value of $1234$ across all datasets and architectures for \refcal.

%\todo{INSERT 6 tables here}

\subsection{Baselines}

\begin{itemize}
\item \textbf{CE+TS} \cite{guo2017calibration} The authors find that contemporary neural networks exhibit poor calibration, a departure from neural networks developed a decade ago. Extensive experiments reveal that factors such as depth, width, weight decay, and Batch Normalization play crucial roles in influencing calibration. Surprisingly, they observe that Temperature Scaling (\ts), a variant of Platt Scaling with a single parameter, is highly effective in calibrating predictions. \texttt{CE+TS} applies \texttt{TS} with \texttt{cross entropy loss}.
\item \textbf{Focal Loss} \cite{lin2017focal} A regular focal
loss (with fixed $\gamma$) improves the overall calibration by preventing samples from being over-confident. \cite{focallosspaper} is defined as $\texttt{FL} = - \sum_{i=1}^N (1-\hat{p}_{i,y_i})^{\gamma} \log (\hat{p}_{i,y_i})$, where $\gamma$ is a hyper-parameter. We trained using $\gamma \in \{1,2,3\}$ and reported it as \texttt{FL} in the results.

\item \textbf{AdaFocal} \cite{ghosh2023adafocal} Training with focal loss leads to better calibration than cross-entropy while achieving a similar level of accuracy \cite{focallosspaper}. This success stems from focal loss regularizing the entropy of the model's prediction (controlled by the parameter $\gamma$), thereby reining in the model's overconfidence. Further improvement is expected if $\gamma$ is selected independently for each training sample (Sample-Dependent Focal Loss (FLSD-53). However, FLSD-53 is based on heuristics and does not generalize well. In this paper, we propose a calibration-aware adaptive focal loss called AdaFocal that utilizes the calibration properties of focal (and inverse-focal) loss and adaptively modifies $\gamma_t$ for different groups of samples based on $\gamma_ {t-1}$ from the previous step and the knowledge of model's under/over-confidence on the validation set.

\item \textbf{Label Smoothing}\cite{originallabelsmoothing}: Label Smoothing (\LS) serves as a regularization technique designed to address potential inaccuracies within datasets. It recognizes that maximizing the likelihood directly, denoted as ${P(y|\mathbf{x})}$, may be detrimental due to the possibility of errors in the training labels. To mitigate this issue, \LS introduces controlled noise into the labeling process. In essence, \LS operates as follows: Given a small constant value $\epsilon$, it considers the training label $y$ to be correct with a probability of $(1-\epsilon)$ and incorrect otherwise. Specifically, in the context of a softmax model with $k$ outputs, it replaces the traditional binary classification targets of $0$ and $1$ with modified targets. These modified targets consist of $\frac{\epsilon}{(k-1)}$ for incorrect labels and $(1-\epsilon)$ for correct labels~\cite{originallabelsmoothing,labelsmoothinghelp}. This approach ensures that all output probabilities undergo uniform regularization, thereby helping to combat overfitting and improve model generalization.
\item \textbf{MbLS} \cite{liu2022devil} Miscalibration in odern \dnns may be intensified by overfitting, which occurs as a result of minimizing cross-entropy during training. This minimization encourages the predicted softmax probabilities to closely align with the one-hot label assignments. Consequently, the pre-softmax activation for the correct class becomes notably larger than the activations for the other classes. MbLS is a simple and flexible generalization based on inequality constraints, which imposes a controllable margin on logit distances.
\item \textbf{CRL} The novel loss function introduced by \cite{moon2020confidence} is called Correctness Ranking Loss (CRL). It explicitly enforces regularization on class probabilities to enhance their role as reliable confidence estimates, particularly in the context of ordinal ranking based on confidence. In other words, the loss is minimized when the confidence estimates for samples with high probabilities of correctness surpass those for samples with lower probabilities of correctness.
\item \textbf{MMCE} \cite{kumarpaper} MMCE represents a calibration measure relying on a Reproducing Kernel Hilbert Space (RKHS) kernel, and it can be trained efficiently in conjunction with the negative likelihood loss without the need for meticulous hyperparameter tuning. In theoretical terms, MMCE is a reliable calibration measure, reaching its minimum at perfect calibration. Moreover, its finite sample estimates are consistent, and the method exhibits rapid convergence rates.
\item \textbf{NLL+MDCA} \cite{mdca}
and \textbf{FL+MDCA} \cite{mdca}: The authors suggest a new auxiliary loss function called Multi-class Difference in Confidence and Accuracy (MDCA), which enhances both calibration and classification when combined with classification losses like focal loss or cross-entropy loss. The amalgamation is denoted as ``FL+MDCA," when paired with focal loss, and when paired with cross-entropy loss, it is labeled as ``CE+MDCA" in the tables.
\item \textbf{AUCM} \cite{yuan2021large} Authors propose a new margin-based min-max surrogate loss function for the AUC score (named AUC min-max-margin loss or simply AUC margin loss (\AUCM) for short). It is more robust than the commonly used AUC square loss while enjoying the same advantage in terms of large-scale stochastic optimization
\item \textbf{LogitNorm}\cite{wei2022mitigating}: DNNs are known to suffer from the overconfidence issue, where they produce abnormally high confidence for both in- and out-of-distribution inputs. Logit Normalization is \Logitnorm in short fixes the cross-entropy loss by enforcing a constant vector norm on the logits during training. 
\item \textbf{MixUp} \cite{thulasidasan2019mixup} mixup training benefits in calibration and predictive uncertainty of models trained with mixup data augmentation. During mixup training, the additional samples are generated during training by convexly combining random pairs of images and their associated labels.

\end{itemize}

\subsection{Datasets}

This section contains a description of the different image classification datasets used in our experiments. The choice of datasets is inspired by previous works in Calibration and refinement \cite{mdca,yang2021learning,kumarpaper}

\begin{itemize}

	\item \textbf{CIFAR-10:} The CIFAR-10 dataset \cite{Krizhevsky_2009_17719-cifar10-100} contains $32\times32$ images belonging to ten general object classes like airplane, cat, horse, truck, etc. The training set contains $50,000$ images ($5000$ from each class), and the test set contains $10,000$ images.
 	\item \textbf{CIFAR-10LT} \cite{Krizhevsky_2009_17719-cifar10-100} is an imbalanced dataset is comprised of under 60,000 color images, each measuring 32x32 pixels, distributed across 10 distinct classes. The number of samples within each class decreases exponentially with factors of 10 and 100. The dataset includes 10,000 test images, with 1000 images per class, and fewer than 50,000 training images. Each image is assigned one label.

  \item \textbf{STL-10}\cite{coates2011analysis} STL10 is a dataset inspired by the CIFAR-10 dataset, featuring certain modifications. Notably, each class in STL10 has fewer labeled training examples compared to CIFAR-10. The higher resolution of images in this dataset (96x96) poses a challenge, making it a rigorous benchmark for both supervised and the development of more scalable unsupervised learning methods.

  	\item \textbf{SVHN:} Street View House Numbers (SVHN) \cite{yuval2011reading} is a digit classification dataset, containing $32\times32$ images of digits from house numbers plates. The images are cropped around a single character of interest, albeit with some distractors on the sides. The dataset contains 10 classes, one for each digit from 0 to 9. There are $73,527$  images in the training set and $26,032$ images in the test set.

    	\item \textbf{CIFAR-100:} The CIFAR-100 dataset \cite{Krizhevsky_2009_17719-cifar10-100}  has 100 classes containing 600 images each. There are 500 training images and 100 testing images per class. The 100 classes in the CIFAR-100 are grouped into 20 superclasses. Each image comes with a ``fine" label (the class to which it belongs) and a ``coarse" label (the superclass to which it belongs).

	\item \textbf{CIFAR-100-LT:} This is a Long-tailed Version of CIFAR100, an  imbalanced dataset that consists of fewer than 60,000 color images, each sized 32x32 pixels, and is categorized into 100 different classes. The distribution of samples within each class follows an exponential decrease, with factors of 10 and 100. The dataset comprises fewer than 50,000 training images and 10,000 test images, with 100 images per class in the test set. These 100 classes are further grouped into 20 overarching superclasses. Each image is labeled with two annotations: a fine label indicating the specific class, and a coarse label representing the corresponding superclass.

  \item \textbf{CIFAR-100-C} is a dataset from Benchmarking Neural Network Robustness to Common Corruptions and Perturbations\cite{hendrycks2018benchmarking}

 	\item \textbf{TinyImageNet:}\cite{le2015tiny} Tiny ImageNet contains 100000 images of 200 classes (500 for each class) downsized to 64×64 colored images. Each class has 500 training images, 50 validation images and 50 test images.

  	\item \textbf{TinyImageNet-LT:}\cite{le2015tiny}  This is a TinyImageNet dataset with an imbalance factor of 10\% used in our experiments.
   
   %Notably, the additional classes introduced in ImageNet-2010 serve as the open set within this dataset.
%	\item \textbf{ImageNet:}: This dataset spans 1000 object classes and contains 1,281,167 training images, 50,000 validation images and 100,000 test images. 
 % \item \textbf{ImageNet-LT:} The ImageNet Long-Tailed dataset is a specific subset extracted from the larger /dataset/imagenet dataset. It comprises 115.8K images distributed across 1000 categories. Each category is represented by a maximum of 1280 images and a minimum of 5 images follwing a protocol.
\end{itemize}

\subsection{Metrics for Evaluation of Calibration and Refinement} 
\label{sec:metrics}

\subsubsection{Background on Calibration}
Recall, a calibrated classifier outputs confidence scores that match the empirical frequency of correctness.  Formally, we define calibration in a classical supervised setting as follows. Let $\D=\{(\textbf{x}_i,y_i)\}_{i=1}^N$ denote a dataset consisting of $N$ samples from a joint distribution $\mathcal{P}(\X,\Y)$, where for each sample $\textbf{x}_i \in \X$ is the input and $y_i \in \Y = \{1, 2, ..., K\}$ is the ground-truth class label. Let $\sbf \in \mathbb{R}^K$, and $\sbf_i[y] = f_\theta(\bx_i)$ be the confidence that a \dnn, $f$, with model parameters $\theta$ predicts for a class $y$ on a given input $\bx_i$. The class, $\yh_i$, predicted by $f$ for a sample $\textbf{x}_i$ is computed as: 
\begin{equation}
    \yh_i = \argmax_{y \in \mathcal{Y}} \quad \mathbf{s}_i[y].
\end{equation}
The confidence for the predicted class is correspondingly computed as $\sh_i = \max_{y\in \Y} s_i[y]$. A model is said to be \emph{perfectly calibrated} \cite{guo2017calibration} when, for each sample $(x, y) \in \D$: 
\begin{equation}
    \mathbb{P}(y = y^* ~ | ~ \sbf[y] = s ) = s.
    \label{equ:calib2}
\end{equation}
Note that the perfect calibration requires each score value (and not only the $\widehat{s}$ ) to be calibrated. On the other hand, most calibration techniques focus only on the predicted class. That is, they only ensure that: $\mathbb{P}(\widehat{y}_i = y_i^* ~ | ~ \sh_i) = \sh_i$. 

\mypara{Expected Calibration Error (\ece)}
\ece is calculated by computing a weighted average of the differences in the confidence of the predicted class, and the accuracy of the samples, predicted with a particular confidence score \cite{ecepaper}:
\begin{equation}
    \ece = \sum_{i=1}^M \frac{B_i}{N} \Big\lvert A_i - C_i \Big\rvert.
    \label{equ:ece}
\end{equation}
%Here $N$ is the total number of samples, and the weighting is done on the basis of the fraction of samples in a given confidence bin/interval. Since the confidence values are in a continuous interval, for the computation of \ece, we divide the confidence range $[0,1]$ into $M$ equidistant bins, where $i^\text{th}$ bin is the interval $(\frac{i-1}{M}, \frac{i}{M}]$ in the confidence range, and $B_i$, represents the number of samples in the $i^\text{th}$ bin. Further, $A_i =\frac{1}{|B_i|}\sum_{j\in B_i} \mathbb{I} (\hat{y}_j=y_j)$, denotes accuracy for the samples in bin $B_i$, and $C_i=\frac{1}{|B_i|}\sum_{j: \sh_j \in B_i} \sh_j$, is the average predicted confidence of the samples, such that $\sh_j \in B_i$. The evaluation of \dnn calibration via \ece suffers from the following shortcomings: (a) \ece does not measure the calibration of all score values in the confidence vector, and (b) the metric is not differentiable, and hence can not be incorporated as a loss term during training procedure itself. Specifically, non-differentiablity arises due to binning samples into bins $B_i$.

Here $N$ is the total number of samples, and the weighting is done on the basis of the fraction of samples in a given confidence bin/interval. Since the confidence values are in a continuous interval, for the computation of \ece, we divide the confidence range $[0,1]$ into $M$ equidistant bins, where $i^{th}$ bin is the interval $(\frac{i-1}{M}, \frac{i}{M}]$ in the confidence range, and $B_i$, represents the number of samples in the $i^{th}$ bin. Further, $A_i =\frac{1}{|B_i|}\sum_{j \in B_i} \mathbb{I} (\hat{y}_j=y_j)$, denotes accuracy for the samples in bin $B_i$, and $C_i=\frac{1}{|B_i|}\sum_{j: \sh_j \in B_i} \sh_j$, is the average predicted confidence of the samples, such that $\sh_j \in B_i$. The evaluation of \dnn calibration via \ece suffers from the following shortcomings: (a) \ece does not measure the calibration of all score values in the confidence vector, and (b) the metric is not differentiable, and hence can not be incorporated as a loss term during training procedure itself. Specifically, non-differentiablity arises due to binning samples into bins $B_i$.

\subsubsection{Static Calibration Error (\sce)}
\sce is a recently proposed metric to measure calibration by \cite{nixon2019measuring}: 
\begin{equation}
    \sce = \frac{1}{K} \sum_{i=1}^M \sum_{j=1}^K \frac{B_{i,j}}{N} \big\lvert A_{i,j} - C_{i,j} \big\rvert,
    \label{equ:sce}
\end{equation}
where, $K$ denotes the number of classes, and $B_{{i,j}}$ denotes number of samples of the $j^{th}$ class in the $i^{th}$ bin. Further, $A_{i,j} = \frac{1}{B_{{i,j}}} \sum_{k \in B_{{i,j}}} \mathbb{I} (j = y_{k})$ is the accuracy for the samples of $j^{th}$ class in the $i^{th}$ bin, and $ C_{i,j} = \frac{1}{B_{i,j}} \sum_{k \in B_{i,j}} \mathbf{s}_k[j]$ or average confidence for the $j^{th}$ class in the $i^{th}$ bin. 
%\subsection{Classwise-\ece} \cite{Dirichlet} is another metric for measuring calibration in a multi-class setting, but is identical to Static Calibration Error (\sce). It is easy to see that \sce is a simple class-wise extension to \ece. Since \sce takes into account the whole confidence vector, it allows us to measure calibration of the non-predicted classes as well. Note that, similar to \ece, the metric \sce is also non-differentiable, and can not be used as a loss term during training. 

%\subsection{Class-$j$-\ece} 
%
%\cite{Dirichlet} has proposed to evaluate the calibration error of each class independent of other classes. This allows one to capture the contribution of a single class $j$ to the overall \sce (or classwise-\ece) error. We refer to this metric as class-$j$-\ece in our results/discussion. 

\subsubsection{Adaptive Calibration Error (\ace)}
\label{subsec:ace}
Adaptive Calibration Error (ACE) was proposed in \cite{nixon2019measuring} with the motivation to obtain the best estimate of the overall calibration error the metric should focus
on the regions where the predictions are made and focus less on regions with few predictions. 

\begin{equation}
 \ace =\frac{1}{K R} \sum_{k=1}^K \sum_{r=1}^R|\operatorname{acc}(r, k)-\operatorname{con} f(r, k)|
\end{equation}

\ace takes as input the predictions P (usually out of a softmax), correct labels, and a
number of ranges R. Here, $acc(r, k)$ and $conf(r, k)$ are the accuracy and confidence of adaptive calibration range $r$ for
class label $k$, respectively; and $N$ is the total number of data points. Calibration range $r$ defined by
the $\lfloor{N/R}\rfloor$ index of the sorted and thresholded predictions
\subsubsection{AUROC}
\label{subsec:auroc}

AUROC is the Area Under the Receiver Operating Characteristic curve and is computed as the area under the FPR against TPR curve by varying
the threshold. It has been widely used for assessing the performance of a classifier and also for \textbf{refinement}.  A traditional performance metric of a classifier is accuracy which measures the proportion of examples that are predicted correctly. However, in imbalanced data, accuracy can be misleading as the number of data points from one class is much larger
than the number of data points from another class \cite{auroc_survey}. AUROC is crucial for enhancing trust in classification as it measures the separability of two/more classes. 
\subsubsection{Smooth Calibration error (smECE)}
  ECE is highly dependant on different settings of bin-edges and the number of bins, making it a biased estimator of the true value. SmECE addresses non differentiability of ECE\cite{wang2023meta,minderer2021revisiting}. In order to obtain an effective representation of accuracy within a narrow confidence interval (corresponding to a single-example bin), we utilize the accuracy values of neighboring points around the selected example. These values are weighted based on their distances in the confidence space. Specifically, we express the soft estimate of accuracy for a single-example bin in a mathematical form of \texttt{smECE}:
 
 \begin{align}
 SmECE = \frac{1}{M} \sum_i^M |SACC(i) - conf(i)|
 \end{align}
where,
\begin{align}
    SACC (b_i) = \sum_{j=1}^M \pi(x_i). K(z_i,z_j) \nonumber
\end{align}
and
\begin{align}
     K(x_i,x'_j) = exp(\frac{-\|x_i -x'_j\|^2}{2h^2})
          \nonumber
\end{align}

where  $b_i$ is the bin housing example $x_i$ and $K(.,.)$ is a
chosen distance measure, for example, a Gaussian kernel
and $h$ is bandwidth, $z_i$ and $\pi(x_i)$ are the confidence and accuracy of $i^{th}$ example, $M$ is the class .

\subsection{Metrics for Out-of-distribution detection}
\label{sec:ood_metrics}

We denote a sample as positive if it is In distribution (\id), and negative for Out-of-distribution (\ood) samples. Accordingly, we use  TP, TN, FP, FN to denote true positives, true negatives, false positives, and false negatives, respectively. The `R' suffix in any of the above refers to the `rate' of the quantity. We also report test accuracy on \id classes. We compare the performance of various approaches using the following metrics: 
\begin{enumerate}%[leftmargin=*, topsep=0.2em, itemsep=-0.25em, label=\textbf{(\arabic*)}]
	\item \textbf{FPR@TPR95:} can be interpreted as the probability that a negative (out-of-distribution (\ood)) example is misclassified as positive (in-distribution (\id)) when the true positive rate (TPR) is as high as 95\%. The true positive rate can be computed by TPR = TP / (TP+FN), where TP and FN denote true positives and false negatives respectively. The false positive rate (FPR) can be computed by FPR = FP / (FP+TN), where FP and TN denote false positives and true negatives respectively.
	\item \textbf{AUROC:} is the Area Under the Receiver Operating Characteristic curve and computed as the area under the FPR against TPR curve.
	\item \textbf{Detection Error:} measures the minimum misclassification probability under the assumption that positive and negative examples have equal probability in the test set. 
\end{enumerate}

\section{Reproducibility}
 In the spirit of reproducible research, we intend to make the source code available post-acceptance. To aid reviewers, the source code for our approach is attached along with the supplemental material. Details of our setup and implementation of the baselines can be found at: \texttt{Code/README.md} folder. 

\section{Limitations}
Our approach employs supervised contrastive loss for refinement and calibration during training, necessitating a longer training duration compared to post hoc methods. Nonetheless, this trade-off is justifiable given the substantial improvements in reliability. In terms of inference, our approach achieves performance comparable to \sota while enhancing reliability.

\section{Broader Impact}
\label{sec:broaderImpact}

 Our proposed algorithm has the potential to be employed generically to enhance the reliability of \dnns. Through this exploration in proposing the framework \refcal, we have discovered a novel approach to calibrating and as well as refining models effectively without tradeoffs in accuracy. We present, for the first time, the evidence that model calibration and refinement can be achieved without sacrificing accuracy through contrastive learning. We provide mathematical insights into the proposed refinement loss and link it to supervised contrastive loss.

 %\FloatBarrier

\begin{table*}[]
\centering

  \resizebox{\textwidth}{!}{ 
\begin{tabular}{l|rrrrlrrrl}
\hline
Method  &
  \multicolumn{1}{l}{\begin{tabular}[c]{@{}l@{}}Stage 1 \\ epochs\end{tabular}} &
  \multicolumn{1}{l}{\begin{tabular}[c]{@{}l@{}}Stage 2 \\ epochs\end{tabular}} &
  \multicolumn{1}{l}{\begin{tabular}[c]{@{}l@{}}Stage 1 \\ BS\end{tabular}} &
  \multicolumn{1}{l}{\begin{tabular}[c]{@{}l@{}}Stage 2 \\ BS\end{tabular}} &
  Backbone &
  \multicolumn{1}{l}{\begin{tabular}[c]{@{}l@{}}Stage 1 \\ LR\end{tabular}} &
  \multicolumn{1}{l}{\begin{tabular}[c]{@{}l@{}}Stage 1 \\ $\tau$\end{tabular}} &
  \multicolumn{1}{l}{\begin{tabular}[c]{@{}l@{}}Stage 2 \\ LR\end{tabular}} &
  Parameters \\ \hline
\refcal+\texttt{CE}         & 1000 & 100 & 1024 & 512 & \texttt{Resnet50} & 0.5 & 0.1 & 5       & \default            \\ \hline
\refcal+\texttt{CE+TS} \cite{guo2017calibration}     & 1000 & 100 & 1024 & 512 & \texttt{Resnet50} & 0.5 & 0.1 & 5       & $\tau$ = 1.8  \\ \hline
\refcal+\texttt{LS} \cite{originallabelsmoothing} & 1000 & 100 & 1024 & 512 & \texttt{Resnet50} & 0.5 & 0.1 & 5       & $\alpha$ =0.1         \\ \hline
\refcal+\texttt{MMCE} \cite{kumarpaper}      & 1000 & 100 & 1024 & 512 & \texttt{Resnet50} & 0.5 & 0.1 & 5       & $\beta$ =2            \\ \hline
\refcal+\texttt{CRL} \cite{moon2020confidence}        & 1000 & 100 & 1024 & 512 & \texttt{Resnet50} & 0.5 & 0.1 & 5       & \default            \\ \hline
\refcal+\texttt{Adafocal} \cite{ghosh2023adafocal}  & 1000 & 100 & 1024 & 512 & \texttt{Resnet50} & 0.5 & 0.1 & 5       & \default            \\ \hline
\refcal+\texttt{MbLS} \cite{liu2022devil}      & 1000 & 100 & 1024 & 512 & \texttt{Resnet50} & 0.5 & 0.1 & 5       & $\alpha$ =0.1         \\ \hline
\refcal+\texttt{FL+MDCA} \cite{mdca}   & 1000 & 100 & 1024 & 512 & \texttt{Resnet50} & 0.5 & 0.1 & 5       & $\gamma$=2            \\ \hline
\refcal+\texttt{AUCM} \cite{yuan2021large}      & 1000 & 100 & 1024 & 512 & \texttt{Resnet50} & 0.5 & 0.1 & 5       & \default            \\ \hline
\refcal+\texttt{LogitNorm} \cite{wei2022mitigating} & 1000 & 100 & 1024 & 512 & \texttt{Resnet50} & 0.5 & 0.1 & 0.00005 & $\tau$ = 0.01 \\ \hline
\refcal+\texttt{CE+Mixup} \cite{thulasidasan2019mixup}  & 1000 & 100 & 1024 & 512 & \texttt{Resnet50} & 0.5 & 0.1 & 5       & \default            \\ \hline
\end{tabular}
}
\caption{The hyperparameters for the proposed \refcal training details for the \CIFART dataset. \textbf{Note:} BS: Batch Size, LR: Learning rate: $\tau$: Temperature parameter. \texttt{default}: default values used by the calibration/refinement technique; $\alpha$: smoothing weight \cite{originallabelsmoothing,liu2022devil}; $\beta$: calibration weight \cite{kumarpaper}; $\gamma$: focusing parameter \cite{focallosspaper}}
\label{tab:hyperparam_cifar10}

\end{table*}

%\FloatBarrier

% Please add the following required packages to your document preamble:
% \usepackage{graphicx}
%\FloatBarrier
\begin{table*}[]
\centering

  \resizebox{\textwidth}{!}{ 
\begin{tabular}{l|rrrrlrrrl}
\hline
Methods &
  \multicolumn{1}{l}{\begin{tabular}[c]{@{}l@{}}Stage 1 \\ epochs\end{tabular}} &
  \multicolumn{1}{l}{\begin{tabular}[c]{@{}l@{}}Stage 2 \\ epochs\end{tabular}} &
  \multicolumn{1}{l}{\begin{tabular}[c]{@{}l@{}}Stage 1 \\ BS\end{tabular}} &
  \multicolumn{1}{l}{\begin{tabular}[c]{@{}l@{}}Stage 2 \\ BS\end{tabular}} &
  Backbone &
  \multicolumn{1}{l}{\begin{tabular}[c]{@{}l@{}}Stage 1 \\ LR\end{tabular}} &
  \multicolumn{1}{l}{\begin{tabular}[c]{@{}l@{}}Stage 1 \\ $\tau$\end{tabular}} &
  \multicolumn{1}{l}{\begin{tabular}[c]{@{}l@{}}Stage 2 \\ LR\end{tabular}} &
  Parameters \\ \hline
\refcal+\texttt{CE}         & 1000 & 100 & 1024 & 512 & \texttt{Resnet50} & 0.5 & 0.1 & 5       & \default           \\ \hline
\refcal+\texttt{CE+TS} \cite{guo2017calibration}       & 1000 & 100 & 1024 & 512 & \texttt{Resnet50} & 0.5 & 0.1 & 5       & $\tau$ = 3.5 \\ \hline
\refcal+\texttt{LS} \cite{originallabelsmoothing}         & 1000 & 100 & 1024 & 512 & \texttt{Resnet50} & 0.5 & 0.1 & 5       & $\alpha$ =0.01       \\ \hline
\refcal+\texttt{MMCE} \cite{kumarpaper}       & 1000 & 100 & 1024 & 512 & \texttt{Resnet50} & 0.5 & 0.1 & 5       & $\beta$ =2           \\ \hline
\refcal+\texttt{CRL} \cite{moon2020confidence}        & 1000 & 100 & 1024 & 512 & \texttt{Resnet50} & 0.5 & 0.1 & 5       & \default           \\ \hline
\refcal+\texttt{Adafocal} \cite{ghosh2023adafocal}   & 1000 & 100 & 1024 & 512 & \texttt{Resnet50} & 0.5 & 0.1 & 5       & \default           \\ \hline
\refcal+\texttt{MbLS} \cite{liu2022devil}       & 1000 & 100 & 1024 & 512 & \texttt{Resnet50} & 0.5 & 0.1 & 5       & $\alpha$ =0.01       \\ \hline
\refcal+\texttt{FL+MDCA} \cite{mdca}      & 1000 & 100 & 1024 & 512 & \texttt{Resnet50} & 0.5 & 0.1 & 5       & $\gamma$=2           \\ \hline
\refcal+\texttt{AUCM} \cite{yuan2021large}       & 1000 & 100 & 1024 & 512 & \texttt{Resnet50} & 0.5 & 0.1 & 5       & \default           \\ \hline
\refcal+\texttt{LogitNorm} \cite{wei2022mitigating} & 1000 & 100 & 1024 & 512 & \texttt{Resnet50} & 0.5 & 0.1 & 0.00005 & $\tau$ = 0.01       \\ \hline
\refcal+\texttt{CE+Mixup} \cite{thulasidasan2019mixup}   & 1000 & 100 & 1024 & 512 & \texttt{Resnet50} & 0.5 & 0.1 & 5       & \default           \\ \hline
\end{tabular}%
}
\caption{The hyperparameters for the proposed \refcal training details for the \CIFARTLT dataset. \textbf{Note:} BS: Batch Size, LR: Learning rate: $\tau$: Temperature parameter. \texttt{default}: default values used by the calibration/refinement technique; $\alpha$: smoothing weight \cite{originallabelsmoothing,liu2022devil}; $\beta$: calibration weight \cite{kumarpaper}; $\gamma$: focusing parameter \cite{focallosspaper}}
\label{tab:hyperparam_cifar10_imb}
\end{table*}
%\FloatBarrier

% Please add the following required packages to your document preamble:
% \usepackage{graphicx}
%\FloatBarrier

\begin{table*}[]
\centering
%\resizebox{\columnwidth}{!}{%
  \resizebox{\textwidth}{!}{ 
\begin{tabular}{l|rrrrlrrrl}
\hline
Methods &
  \multicolumn{1}{l}{\begin{tabular}[c]{@{}l@{}}Stage 1 \\ epochs\end{tabular}} &
  \multicolumn{1}{l}{\begin{tabular}[c]{@{}l@{}}Stage 2 \\ epochs\end{tabular}} &
  \multicolumn{1}{l}{\begin{tabular}[c]{@{}l@{}}Stage 1 \\ BS\end{tabular}} &
  \multicolumn{1}{l}{\begin{tabular}[c]{@{}l@{}}Stage 2 \\ BS\end{tabular}} &
  Backbone &
  \multicolumn{1}{l}{\begin{tabular}[c]{@{}l@{}}Stage 1 \\ LR\end{tabular}} &
  \multicolumn{1}{l}{\begin{tabular}[c]{@{}l@{}}Stage 1 \\ $\tau$\end{tabular}} &
  \multicolumn{1}{l}{\begin{tabular}[c]{@{}l@{}}Stage 2 \\ LR\end{tabular}} &
  Parameters \\ \hline
\refcal+\texttt{CE}         & 1000 & 100 & 512 & 512 & \texttt{Resnet50} & 0.5 & 0.1 & 5   & \default           \\ \hline
\refcal+\texttt{CE+TS} \cite{guo2017calibration}      & 1000 & 100 & 512 & 512 & \texttt{Resnet50} & 0.5 & 0.1 & 5   & $\tau$ = 1.1 \\ \hline
\refcal+\texttt{LS} \cite{originallabelsmoothing}         & 1000 & 100 & 512 & 512 & \texttt{Resnet50} & 0.5 & 0.1 & 5   & $\alpha$ =0.01       \\ \hline
\refcal+\texttt{MMCE} \cite{kumarpaper}       & 1000 & 100 & 512 & 512 & \texttt{Resnet50} & 0.5 & 0.1 & 5   & $\beta$ =2           \\ \hline
\refcal+\texttt{CRL} \cite{moon2020confidence}        & 1000 & 100 & 512 & 512 & \texttt{Resnet50} & 0.5 & 0.1 & 5   & \default           \\ \hline
\refcal+\texttt{Adafocal} \cite{ghosh2023adafocal}   & 1000 & 100 & 512 & 512 & \texttt{Resnet50} & 0.5 & 0.1 & 5   & \default           \\ \hline
\refcal+\texttt{MbLS} \cite{liu2022devil}      & 1000 & 100 & 512 & 512 & \texttt{Resnet50} & 0.5 & 0.1 & 5   & $\alpha$ =0.01       \\ \hline
\refcal+\texttt{FL+MDCA} \cite{mdca}    & 1000 & 100 & 512 & 512 & \texttt{Resnet50} & 0.5 & 0.1 & 5   & $\gamma$=2           \\ \hline
\refcal+\texttt{AUCM} \cite{yuan2021large}       & 1000 & 100 & 512 & 512 & \texttt{Resnet50} & 0.5 & 0.1 & 5   & \default           \\ \hline
\refcal+\texttt{LogitNorm} \cite{wei2022mitigating} & 1000 & 100 & 512 & 512 & \texttt{Resnet50} & 0.5 & 0.1 & 0.5 & $\tau$ = 0.01       \\ \hline
\refcal+\texttt{CE+Mixup} \cite{thulasidasan2019mixup}   & 1000 & 100 & 512 & 512 & \texttt{Resnet50} & 0.5 & 0.1 & 5   & \default           \\ \hline
\end{tabular}%
}
\caption{The hyperparameters for the proposed \refcal training details for the \CIFARH dataset. \textbf{Note:} BS: Batch Size, LR: Learning rate: $\tau$: Temperature parameter. \texttt{default}: default values used by the calibration/refinement technique; $\alpha$: smoothing weight \cite{originallabelsmoothing,liu2022devil}; $\beta$: calibration weight \cite{kumarpaper}; $\gamma$: focusing parameter \cite{focallosspaper}}
\label{tab:hyperparam_cifar100}
\end{table*}

%\FloatBarrier

% Please add the following required packages to your document preamble:
% \usepackage{graphicx}
%\FloatBarrier

\begin{table*}[]
\centering

  \resizebox{\textwidth}{!}{ 
\begin{tabular}{l|rrrrlrrrl}
\hline
Methods &
  \multicolumn{1}{l}{\begin{tabular}[c]{@{}l@{}}Stage 1 \\ epochs\end{tabular}} &
  \multicolumn{1}{l}{\begin{tabular}[c]{@{}l@{}}Stage 2 \\ epochs\end{tabular}} &
  \multicolumn{1}{l}{\begin{tabular}[c]{@{}l@{}}Stage 1 \\ BS\end{tabular}} &
  \multicolumn{1}{l}{\begin{tabular}[c]{@{}l@{}}Stage 2 \\ BS\end{tabular}} &
  Backbone &
  \multicolumn{1}{l}{\begin{tabular}[c]{@{}l@{}}Stage 1 \\ LR\end{tabular}} &
  \multicolumn{1}{l}{\begin{tabular}[c]{@{}l@{}}Stage 1 \\ $\tau$\end{tabular}} &
  \multicolumn{1}{l}{\begin{tabular}[c]{@{}l@{}}Stage 2 \\ LR\end{tabular}} &
  Parameters \\ \hline
\refcal+\texttt{CE}         & 1000 & 100 & 512 & 512 & \texttt{Resnet50} & 0.5 & 0.1 & 5     & \default           \\ \hline
\refcal+\texttt{CE+TS} \cite{guo2017calibration}      & 1000 & 100 & 512 & 512 & \texttt{Resnet50} & 0.5 & 0.1 & 5     & $\tau$ = 1.5 \\ \hline
\refcal+\texttt{LS} \cite{originallabelsmoothing}         & 1000 & 100 & 512 & 512 & \texttt{Resnet50} & 0.5 & 0.1 & 5     & $\alpha$ =0.01       \\ \hline
\refcal+\texttt{MMCE} \cite{kumarpaper}       & 1000 & 100 & 512 & 512 & \texttt{Resnet50} & 0.5 & 0.1 & 5     & $\beta$ =2           \\ \hline
\refcal+\texttt{CRL} \cite{moon2020confidence}        & 1000 & 100 & 512 & 512 & \texttt{Resnet50} & 0.5 & 0.1 & 5     & \default           \\ \hline
\refcal+\texttt{Adafocal} \cite{ghosh2023adafocal}   & 1000 & 100 & 512 & 512 & \texttt{Resnet50} & 0.5 & 0.1 & 0.005 & \default           \\ \hline
\refcal+\texttt{MbLS} \cite{liu2022devil}       & 1000 & 100 & 512 & 512 & \texttt{Resnet50} & 0.5 & 0.1 & 5     & $\alpha$ =0.01       \\ \hline
\refcal+\texttt{FL+MDCA} \cite{mdca}    & 1000 & 100 & 512 & 512 & \texttt{Resnet50} & 0.5 & 0.1 & 5     & $\gamma$ =2           \\ \hline
\refcal+\texttt{AUCM} \cite{yuan2021large}       & 1000 & 100 & 512 & 512 & \texttt{Resnet50} & 0.5 & 0.1 & 5     & \default           \\ \hline
\refcal+\texttt{LogitNorm} \cite{wei2022mitigating} & 1000 & 100 & 512 & 256 & \texttt{Resnet50} & 0.5 & 0.1 & 0.05  & $\tau$ = 0.01       \\ \hline
\refcal+\texttt{CE+Mixup} \cite{thulasidasan2019mixup}   & 1000 & 100 & 512 & 512 & \texttt{Resnet50} & 0.5 & 0.1 & 5     & \default           \\ \hline
\end{tabular}%
}
\caption{The hyperparameters for the proposed \refcal training details for the \CIFARHLT dataset. \textbf{Note:} BS: Batch Size, LR: Learning rate: $\tau$: Temperature parameter. \texttt{default}: default values used by the calibration/refinement technique; $\alpha$: smoothing weight \cite{originallabelsmoothing,liu2022devil}; $\beta$: calibration weight \cite{kumarpaper}; $\gamma$: focusing parameter \cite{focallosspaper}}
\label{tab:hyperparam_cifar100_imb}
\end{table*}
%\FloatBarrier

% Please add the following required packages to your document preamble:
% \usepackage{graphicx}
\begin{table*}[]
\centering
  \resizebox{\textwidth}{!}{ 
\begin{tabular}{l|rrrrlrrrl}
\hline
Methods &
  \multicolumn{1}{l}{\begin{tabular}[c]{@{}l@{}}Stage 1 \\ epochs\end{tabular}} &
  \multicolumn{1}{l}{\begin{tabular}[c]{@{}l@{}}Stage 2 \\ epochs\end{tabular}} &
  \multicolumn{1}{l}{\begin{tabular}[c]{@{}l@{}}Stage 1 \\ BS\end{tabular}} &
  \multicolumn{1}{l}{\begin{tabular}[c]{@{}l@{}}Stage 2 \\ BS\end{tabular}} &
  Backbone &
  \multicolumn{1}{l}{\begin{tabular}[c]{@{}l@{}}Stage 1 \\ LR\end{tabular}} &
  \multicolumn{1}{l}{\begin{tabular}[c]{@{}l@{}}Stage 1 \\ $\tau$\end{tabular}} &
  \multicolumn{1}{l}{\begin{tabular}[c]{@{}l@{}}Stage 2 \\ LR\end{tabular}} &
  Parameters \\ \hline
\refcal+\texttt{CE}         & 1000 & 100 & 512 & 512 & \texttt{Resnet50} & 0.5 & 0.1 & 0.5 & \default           \\ \hline
\refcal+\texttt{CE+TS} \cite{guo2017calibration}      & 1000 & 100 & 512 & 512 & \texttt{Resnet50} & 0.5 & 0.1 & 5   & $\tau$ = 1.5 \\ \hline
\refcal+\texttt{LS} \cite{originallabelsmoothing}         & 1000 & 100 & 512 & 512 & \texttt{Resnet50} & 0.5 & 0.1 & 5   & $\alpha$ =0.01       \\ \hline
\refcal+\texttt{MMCE} \cite{kumarpaper}       & 1000 & 100 & 512 & 512 & \texttt{Resnet50} & 0.5 & 0.1 & 5   & $\beta$ =2           \\ \hline
\refcal+\texttt{CRL} \cite{moon2020confidence}        & 1000 & 100 & 512 & 512 & \texttt{Resnet50} & 0.5 & 0.1 & 0.5 & \default           \\ \hline
\refcal+\texttt{Adafocal} \cite{ghosh2023adafocal}   & 1000 & 100 & 512 & 512 & \texttt{Resnet50} & 0.5 & 0.1 & 5   & \default           \\ \hline
\refcal+\texttt{MbLS} \cite{liu2022devil}       & 1000 & 100 & 512 & 512 & \texttt{Resnet50} & 0.5 & 0.1 & 5   & $\alpha$ =0.01       \\ \hline
\refcal+\texttt{FL+MDCA} \cite{mdca}    & 1000 & 100 & 512 & 512 & \texttt{Resnet50} & 0.5 & 0.1 & 5   & $\gamma$=2           \\ \hline
\refcal+\texttt{AUCM} \cite{yuan2021large}       & 1000 & 100 & 512 & 512 & \texttt{Resnet50} & 0.5 & 0.1 & 5   & \default           \\ \hline
\refcal+\texttt{LogitNorm} \cite{wei2022mitigating} & 1000 & 100 & 512 & 256 & \texttt{Resnet50} & 0.5 & 0.1 & 0.5 & $\tau$ = 0.01       \\ \hline
\refcal+\texttt{CE+Mixup} \cite{thulasidasan2019mixup}   & 1000 & 100 & 512 & 512 & \texttt{Resnet50} & 0.5 & 0.1 & 5   & \default           \\ \hline
\end{tabular}%
}
\caption{The hyperparameters for the proposed \refcal training details for the \TINY dataset. \textbf{Note:} BS: Batch Size, LR: Learning rate: $\tau$: Temperature parameter. \texttt{default}: default values used by the calibration/refinement technique; $\alpha$: smoothing weight \cite{originallabelsmoothing,liu2022devil}; $\beta$: calibration weight \cite{kumarpaper}; $\gamma$: focusing parameter \cite{focallosspaper}}
\label{tab:hyperparam_tiny_imagenet}
\end{table*}
% Please add the following required packages to your document preamble:
% \usepackage{graphicx}

%\FloatBarrier

\begin{table*}[]
\centering
  \resizebox{\textwidth}{!}{ 
\begin{tabular}{l|rrrrlrrrl}
\hline
Methods &
  \multicolumn{1}{l}{\begin{tabular}[c]{@{}l@{}}Stage 1 \\ epochs\end{tabular}} &
  \multicolumn{1}{l}{\begin{tabular}[c]{@{}l@{}}Stage 2 \\ epochs\end{tabular}} &
  \multicolumn{1}{l}{\begin{tabular}[c]{@{}l@{}}Stage 1 \\ BS\end{tabular}} &
  \multicolumn{1}{l}{\begin{tabular}[c]{@{}l@{}}Stage 2 \\ BS\end{tabular}} &
  Backbone &
  \multicolumn{1}{l}{\begin{tabular}[c]{@{}l@{}}Stage 1 \\ LR\end{tabular}} &
  \multicolumn{1}{l}{\begin{tabular}[c]{@{}l@{}}Stage 1 \\ $\tau$\end{tabular}} &
  \multicolumn{1}{l}{\begin{tabular}[c]{@{}l@{}}Stage 2 \\ LR\end{tabular}} &
  Parameters \\ \hline
\refcal+\texttt{CE}         & 1000 & 100 & 512 & 512 & \texttt{Resnet50} & 0.5 & 0.1 & 5     & \default           \\ \hline
\refcal+\texttt{CE+TS} \cite{guo2017calibration}      & 1000 & 100 & 512 & 512 & \texttt{Resnet50} & 0.5 & 0.1 & 5     & $\tau$ = 1.5 \\ \hline
\refcal+\texttt{LS} \cite{originallabelsmoothing}         & 1000 & 100 & 512 & 512 & \texttt{Resnet50} & 0.5 & 0.1 & 5     & $\alpha$ =0.01       \\ \hline
\refcal+\texttt{MMCE} \cite{kumarpaper}       & 1000 & 100 & 512 & 512 & \texttt{Resnet50} & 0.5 & 0.1 & 5     & $\beta$ =2           \\ \hline
\refcal+\texttt{CRL} \cite{moon2020confidence}        & 1000 & 100 & 512 & 512 & \texttt{Resnet50} & 0.5 & 0.1 & 0.5   & \default           \\ \hline
\refcal+\texttt{Adafocal} \cite{ghosh2023adafocal}   & 1000 & 100 & 512 & 512 & \texttt{Resnet50} & 0.5 & 0.1 & 0.005 & \default           \\ \hline
\refcal+\texttt{MbLS} \cite{liu2022devil}      & 1000 & 100 & 512 & 512 & \texttt{Resnet50} & 0.5 & 0.1 & 5     & $\alpha$ =0.01       \\ \hline
\refcal+\texttt{FL+MDCA} \cite{mdca}    & 1000 & 100 & 512 & 256 & \texttt{Resnet50} & 0.5 & 0.1 & 0.5   & $\gamma$=2           \\ \hline
\refcal+\texttt{AUCM} \cite{yuan2021large}       & 1000 & 100 & 512 & 512 & \texttt{Resnet50} & 0.5 & 0.1 & 5     & \default           \\ \hline
\refcal+\texttt{LogitNorm} \cite{wei2022mitigating} & 1000 & 100 & 512 & 256 & \texttt{Resnet50} & 0.5 & 0.1 & 0.05  & $\tau$ = 0.01       \\ \hline
\refcal+\texttt{CE+Mixup} \cite{thulasidasan2019mixup}   & 1000 & 100 & 512 & 512 & \texttt{Resnet50} & 0.5 & 0.1 & 5     & \default           \\ \hline
\end{tabular}%
}
\caption{The hyperparameters for the proposed \refcal training details for the \TINYLT dataset. \textbf{Note:} BS: Batch Size, LR: Learning rate: $\tau$: Temperature parameter. \texttt{default}: default values used by the calibration/refinement technique; $\alpha$: smoothing weight \cite{originallabelsmoothing,liu2022devil}; $\beta$: calibration weight \cite{kumarpaper}; $\gamma$: focusing parameter \cite{focallosspaper}}
\label{tab:hyperparam_tiny_imagenet_imb}
\end{table*}

%\FloatBarrier

% Please add the following required packages to your document preamble:
% \usepackage{graphicx}
%\FloatBarrier

\begin{table*}[]
\centering
  \resizebox{\textwidth}{!}{ 
\begin{tabular}{l|rrrrlrrrl}
\hline
Methods &
  \multicolumn{1}{l}{\begin{tabular}[c]{@{}l@{}}Stage 1 \\ epochs\end{tabular}} &
  \multicolumn{1}{l}{\begin{tabular}[c]{@{}l@{}}Stage 2 \\ epochs\end{tabular}} &
  \multicolumn{1}{l}{\begin{tabular}[c]{@{}l@{}}Stage 1 \\ BS\end{tabular}} &
  \multicolumn{1}{l}{\begin{tabular}[c]{@{}l@{}}Stage 2 \\ BS\end{tabular}} &
  Backbone &
  \multicolumn{1}{l}{\begin{tabular}[c]{@{}l@{}}Stage 1 \\ LR\end{tabular}} &
  \multicolumn{1}{l}{\begin{tabular}[c]{@{}l@{}}Stage 1 \\ $\tau$\end{tabular}} &
  \multicolumn{1}{l}{\begin{tabular}[c]{@{}l@{}}Stage 2 \\ LR\end{tabular}} &
  Parameters \\ \hline
\refcal+\texttt{CE}         & 1000 & 100 & 1024 & 512 & \texttt{Resnet50} & 0.05 & 0.1 & 0.005 & \default           \\ \hline
\refcal+\texttt{CE+TS} \cite{guo2017calibration}      & 1000 & 100 & 1024 & 512 & \texttt{Resnet50} & 0.05 & 0.1 & 0.005 & $\tau$ = 3.5 \\ \hline
\refcal+\texttt{LS} \cite{originallabelsmoothing}         & 1000 & 100 & 1024 & 512 & \texttt{Resnet50} & 0.05 & 0.1 & 0.005 & $\alpha$ =0.01       \\ \hline
\refcal+\texttt{MMCE} \cite{kumarpaper}    & 1000 & 100 & 1024 & 512 & \texttt{Resnet50} & 0.05 & 0.1 & 0.005 & $\beta$ =2           \\ \hline
\refcal+\texttt{CRL} \cite{moon2020confidence}        & 1000 & 100 & 1024 & 512 & \texttt{Resnet50} & 0.05 & 0.1 & 0.005 & \default           \\ \hline
\refcal+\texttt{Adafocal} \cite{ghosh2023adafocal}   & 1000 & 100 & 1024 & 512 & \texttt{Resnet50} & 0.05 & 0.1 & 0.005 & \default           \\ \hline
\refcal+\texttt{MbLS} \cite{liu2022devil}       & 1000 & 100 & 1024 & 512 & \texttt{Resnet50} & 0.05 & 0.1 & 0.005 & $\alpha$ =0.01       \\ \hline
\refcal+\texttt{FL+MDCA} \cite{mdca}    & 1000 & 100 & 1024 & 256 & \texttt{Resnet50} & 0.05 & 0.1 & 0.005 & $\gamma$=2           \\ \hline
\refcal+\texttt{AUCM} \cite{yuan2021large}        & 1000 & 100 & 1024 & 512 & \texttt{Resnet50} & 0.05 & 0.1 & 0.005 & \default           \\ \hline
\refcal+\texttt{LogitNorm} \cite{wei2022mitigating} & 1000 & 100 & 1024 & 256 & \texttt{Resnet50} & 0.05 & 0.1 & 0.005 & $\tau$ = 0.01       \\ \hline
\refcal+\texttt{CE+Mixup} \cite{thulasidasan2019mixup}   & 1000 & 100 & 1024 & 512 & \texttt{Resnet50} & 0.05 & 0.1 & 0.005 & \default           \\ \hline
\end{tabular}%
}
\caption{The hyperparameters for the proposed RefCal architecture for the \STLT converted into binary setting as mentioned in main text. \textbf{Note:} BS: Batch Size, LR: Learning rate: $\tau$: Temperature parameter. \texttt{default}: default values used by the calibration/refinement technique; $\alpha$: smoothing weight \cite{originallabelsmoothing,liu2022devil}; $\beta$: calibration weight \cite{kumarpaper}; $\gamma$: focusing parameter \cite{focallosspaper}}
\label{tab:hyperparam_stl10_binary}
\end{table*}

%\FloatBarrier

\clearpage
\newpage
\end{document}